\def\eqref#1{equation~\ref{#1}}
\def\1{\bm{1}}
\DeclareMathAlphabet{\mathsfit}{\encodingdefault}{\sfdefault}{m}{sl}
\SetMathAlphabet{\mathsfit}{bold}{\encodingdefault}{\sfdefault}{bx}{n}
\newcommand{\xmark}{\ding{55}}%
\definecolor{navyblue}{rgb}{0.0, 0.0, 0.5}
\newtheorem{theorem}{Theorem}
\newtheorem{lemma}{Lemma}
\newtheorem{proposition}{Proposition}
\Crefname{asm}{Assumption}{Assumption}
\definecolor{cmured}{rgb}{0.78039216, 0.22745098, 0.11764706}
\newcommand{\tmlr}[1]{\textcolor{black}{#1}}
\newcommand{\red}[1]{\textcolor{black}{#1}}
\title{State-wise Constrained Policy Optimization}
\author{\name Weiye Zhao \email weiyezha@andrew.cmu.edu \\
      \addr Robotics Institute\\
      Carnegie Mellon University
      \AND
      \name Rui Chen \email ruic3@andrew.cmu.edu\\
      \addr Robotics Institute \\ 
      Carnegie Mellon University
      \AND
      \name Yifan Sun \email yifansu2@andrew.cmu.edu \\
      \addr Robotics Institute \\ 
      Carnegie Mellon University
      \AND
      \name Feihan Li \email feihanl@andrew.cmu.edu\\
      \addr Robotics Institute \\ 
      Carnegie Mellon University 
      \AND 
      \name Tianhao Wei \email twei2@andrew.cmu.edu\\
      \addr Robotics Institute \\ 
      Carnegie Mellon University
      \AND
      \name Changliu Liu \email cliu6@andrew.cmu.edu\\
      \addr Robotics Institute \\ 
      Carnegie Mellon University
      }
\newcommand{\new}{\marginpar{NEW}}
\begin{document}

\maketitle

\begin{abstract}
Reinforcement Learning (RL) algorithms have shown tremendous success in simulation environments, but their application to real-world problems faces significant challenges, with safety being a major concern. In particular, enforcing state-wise constraints is essential for many challenging tasks such as autonomous driving and robot manipulation. However, existing safe RL algorithms under the framework of Constrained Markov decision process (CMDP) do not consider state-wise constraints. To address this gap, we propose State-wise Constrained Policy Optimization (SCPO), the first general-purpose policy search algorithm for state-wise constrained reinforcement learning. SCPO provides guarantees for state-wise constraint satisfaction in expectation. In particular, we introduce the framework of Maximum Markov decision process, and prove that the worst-case safety violation is bounded under SCPO. We demonstrate the effectiveness of our approach on training neural network policies for extensive robot locomotion tasks, where the agent must satisfy a variety of state-wise safety constraints. Our results show that SCPO significantly outperforms existing methods and can handle state-wise constraints in high-dimensional robotics tasks.
\end{abstract}
\section{Introduction}

Reinforcement learning (RL) has achieved remarkable progress in games and control tasks~\citep{mnih2015human,vinyals2019grandmaster,brown2018superhuman}. 
However, one major barrier that limits the application of RL algorithms to real-world problems is the lack of safety assurance.
RL agents learn to make reward-maximizing decisions, which may violate safety constraints.
For example, an RL agent controlling a self-driving car may receive high rewards by driving at high speeds but will be exposed to high chances of collision.
Although the reward signals can be designed to penalize risky behaviors, there is no guarantee for safety.
In other words, RL agents may sometimes prioritize maximizing the reward over ensuring safety, which can lead to unsafe or even catastrophic outcomes~\citep{gu2022review}.

Emerging in the literature, safe RL aims to provide safety guarantees during or after  training.
Early attempts have been made under the framework of constrained Markov {decision process}, where the majority of works enforce cumulative constraints or chance constraints~\citep{ray2019benchmarking,achiam2017cpo,liu2021policy}.
In real-world applications, however, many critical constraints are instantaneous.
For instance, collision avoidance must be enforced at all times for autonomous cars~\citep{zhao2023state}.
Another example is that when a robot holds a glass, the robot can only release the glass when the glass is on a stable surface.
The violation of those constraints will lead to irreversible failures of the task.
In this work, we focus on state-wise (instantaneous) constraints.

The State-wise Constrained Markov {decision process} (SCMDP) is a novel formulation in reinforcement learning that requires
\red{policies to satisfy state-wise constraints.}
Unlike cumulative or probabilistic constraints, state-wise constraints demand full compliance at each time step as formalized by ~\citet{zhao2023state}.
Existing state-wise safe RL methods can be categorized based on whether safety is ensured during training.
There is a fundamental limitation that it is impossible to guarantee hard state-wise safety during training without prior knowledge of the dynamic model.
\red{In a model-free setting, the more feasible approach is to statistically learn to satisfy state-wise constraints using as few samples as possible. Our paper concentrates on achieving state-wise constraint satisfaction in expectation.
}

We aim to provide 
theoretical guarantees on \red{expected} state-wise safety violation and worst case reward \red{degradation} during training. 
Our approach is underpinned by a key insight that constraining the maximum violation is equivalent to enforcing state-wise safety. This insight leads to a novel formulation of MDP called the \textit{\textbf{M}aximum \textbf{M}arkov {\textbf{d}ecision \textbf{p}rocess}} (MMDP). With MMDP, we establish a new theoretical result that provides a bound on the difference between the maximum cost of two policies for episodic tasks.
This result expands upon the cumulative discounted reward and cost bounds for policy search using trust regions, as previously documented in literature~\citep{achiam2017cpo}.
We leverage this result to design a policy improvement step that not only guarantees worst-case performance degradation but also ensures state-wise cost constraints.
Our proposed algorithm, \textit{\textbf{S}tate-wise \textbf{C}onstrained \textbf{P}olicy \textbf{O}ptimization} (SCPO), approximates the theoretically-justified update, which achieves a state-of-the-art trade-off between safety and performance.
Through experiments, we demonstrate that SCPO effectively trains neural network policies with thousands of parameters on high-dimensional simulated robot locomotion tasks; and is able to optimize rewards while enforcing state-wise safety constraints.
This work represents a significant step towards developing practical safe RL algorithms that can be applied to many real-world problems. 
Our code is available on Github\footnote{https://github.com/intelligent-control-lab/StateWise\_Constrained\_Policy\_Optimization}.
\section{Related Work}

\subsection{Cumulative Safety}

Cumulative safety requires that the expected discounted return with respect to some cost function is upper-bounded over the entire trajectory.
One representative approach is constrained policy optimization (CPO) \citep{achiam2017cpo}, which builds on a theoretical bound on the difference between the costs of different policies and derives a policy improvement procedure to ensure constraints satisfaction.
Another approach is interior-point policy optimization (IPO) \citep{ipo}, which augments the reward-maximizing objective with logarithmic barrier functions as penalty functions to accommodate the constraints.
Other methods include Lagrangian methods \citep{ray2019benchmarking} which use adaptive penalty coefficients to enforce constraints and projection-based constrained policy optimization (PCPO) \citep{pcpo} which projects trust-region policy updates onto the constraint set.
Although our focus is on a different setting of constraints, existing methods are still valuable references for illustrating the advantages of our SCPO. By utilizing MMDP, SCPO breaks the conventional safety-reward trade-off, which results in stronger convergence of state-wise safety constraints and guaranteed performance degradation bounds.

\subsection{State-wise Safety}
\paragraph{Hierarchical Policy}
One way to enforce state-wise safety constraints is to use
hierarchical policies, with an RL policy generating reward-maximizing actions, and a safety monitor
modifying the actions to satisfy state-wise safety constraints~\citep{zhao2023state}.
Such an approach often requires a perfect safety critic to function well.
For example, conservative safety critics (CSC)~\citep{bharadhwaj2020conservative} propose a safe critic $Q_C(s,a)$, providing a conservative estimate of the likelihood of being unsafe given a state-action pair.
If the safety violation exceeds a predefined threshold, a new action is re-sampled from the policy until it passes the safety critic. However, this approach is time-consuming.
On the other hand, optimization-based methods such as gradient descent or quadratic programming can be used to find a safe action that satisfies the constraint while staying close to the reference action.
Unrolling safety layer (USL)~\citep{zhang2022evaluating} follows a similar hierarchical structure as CSC but performs gradient descent on the reference action iteratively until the constraint is satisfied based on learned safety critic $Q_C(s,a)$.
Finally, instead of using gradient descent, Lyapunov-based policy gradient (LPG) \citep{chow2019lyapunov} and SafeLayer~\citep{dalal2018safe} directly solve quadratic programming (QP) to project actions to the safe action set induced by the linearized versions of some learned critic $Q_C(s,a)$.
All these approaches suffer from safety violations due to imperfect critic $Q_C(s,a)$, while those solving QPs further suffer from errors due to the linear approximation of the critic.
To avoid those issues, we propose SCPO as an end-to-end policy which does not explicitly maintain a safety monitor.

\paragraph{End-to-End Policy}

End-to-end policies maximize task rewards while ensuring safety at the same time. 
Related work regarding state-wise safety after convergence has been explored recently. 
Some approaches~\citep{liang2018accelerated,tessler2018reward} solve a primal-dual optimization problem to satisfy the safety constraint in expectation.
However, the associated optimization is hard in practice because 
the optimization problem changes at every learning step.
\cite{bohez2019value} approaches the same setting by augmenting the reward with the sum of the constraint penalty weighted by the Lagrangian multiplier. 
Although claimed state-wise safety performance, the aforementioned methods do not provide theoretical guarantee and fail to achieve near-zero safety violation in practice.
\citet{he2023autocost} proposes AutoCost to automatically find an appropriate cost function using evolutionary search over the space of cost functions as parameterized by a simple neural network.
It is empirically shown that the evolved cost functions achieve near-zero safety violation, however, no theoretical guarantee is provided, and extensive computation is required.
FAC~\citep{ma2021feasible} does provide theoretically guaranteed state-wise safety via parameterized Lagrange functions.
However, FAC replies on strong assumptions and performs poorly in practice.
To resolve the above issues, we propose SCPO as an easy-to-implement and theoretically sound approach with no prior assumptions on the underlying safety functions.

\section{Problem Formulation}
\label{sec: formulation}
\subsection{Preliminaries}
In this paper, we are especially interested in guaranteeing safety for episodic tasks, which falls within in the scope of finite-horizon Markov {decision process} (MDP). An MDP is specified by a tuple $(\mathcal{S}, \mathcal{A}, \gamma, R, P, \mu)$, where $\mathcal{S}$ is the state space, and $\mathcal{A}$ is the control space, $R: \mathcal{S} \times \mathcal{A} \mapsto \mathbb{R}$ is the reward function, $ 0 \leq \gamma < 1$ is the discount factor, $\mu : \mathcal{S} \mapsto \mathbb{R}$
 is the initial state distribution, and $P: \mathcal{S} \times \mathcal{A} \times \mathcal{S} \mapsto \mathbb{R}$
 is the transition probability function.
$P(s'|s,a)$ is the probability of transitioning to state $s'$ given that the previous state was $s$ and the agent took action $a$ at state $s$.
A stationary policy $\pi: \mathcal{S} \mapsto \mathcal{P}(\mathcal{A})$ is a map from states to a probability distribution over actions, with $\pi(a|s)$ denoting the probability of selecting action $a$ in state $s$. We denote the set of all stationary policies by $\Pi$. Subsequently, we denote $\pi_\theta$ as the policy that is parameterized by the parameter $\theta$. 

The standard goal for MDP is to learn a policy $\pi$ that maximizes a performance measure
$\mathcal{J}_0(\pi)$
which is computed via the discounted sum of reward:
\begin{align}
\label{eq: reward function}
    \mathcal{J}_0(\pi) = \mathbb{E}_{\tau \sim \pi}\left[\sum_{t=0}^H \gamma^t R(s_t, a_t, s_{t+1})\right],
\end{align}
where $H \in \mathbb{N}$ is the horizon, $\tau = [s_0, a_0, s_1, \cdots]$, and $\tau \sim \pi$ is shorthand for that the distribution over trajectories depends on $\pi: s_0 \sim \mu, a_t \sim \pi(\cdot | s_t), s_{t+1} \sim P(\cdot|s_t, a_t)$.

\subsection{State-wise Constrained Markov Decision Process}

A constrained Markov {decision process} (CMDP) is an MDP augmented with constraints that restrict the set of allowable policies.
Specifically, CMDP introduces a set of cost functions, $C_1, C_2, \cdots, C_m$, where $C_i : \mathcal{S} \times \mathcal{A} \times \mathcal{S} \mapsto \mathbb{R}$ maps the state action transition tuple into a cost value.
Analogous to \eqref{eq: reward function}, we denote 
\begin{equation}\label{eq: cmdp_constr}
\mathcal{J}_{C_i}(\pi) = \mathbb{E}_{\tau \sim \pi}\left[\sum_{t=0}^H \gamma^t C_i(s_t, a_t, s_{t+1})\right] \red{, i \in 1,\cdots ,m.}
\end{equation}
as the cost measure for policy $\pi$ with respect to cost function $C_i$. Hence, the set of feasible stationary policies for CMDP is then defined as follows, where $d_i \in \mathbb{R}$:
\begin{align}
    \Pi_{C} = \{ \pi \in \Pi \big | ~\forall i \red{\in 1,\cdots ,m}, \mathcal{J}_{C_i}(\pi) \leq d_i\}.
\end{align}

In CMDP, the objective is to select a feasible stationary policy $\pi_\theta$ that maximizes the performance measure:
\begin{align}
\label{eq: original cdmp}
    \underset{\pi}{\textbf{max}}~\mathcal{J}_0(\pi), ~\textbf{s.t.}~ \pi \in \Pi_{C}.
\end{align}

In this paper, we are interested in a special type of CMDP where the safety specification 
\red{is to persistently satisfy a cost constraint \textbf{at every step} (as opposed to constraint of cumulative discounted cost sum over trajectories)}, which we refer to as \textit{State-wise Constrained Markov {decision process}} (SCMDP).
Like CMDP, SCMDP uses the set of cost functions $C_1, C_2, \cdots, C_m$ to evaluate the instantaneous cost of state action transition tuples.
Unlike CMDP, SCMDP requires the cost for every state action transition to satisfy a constraint.
Hence, 
the set of feasible stationary policies for SCMDP is defined as
\begin{align}
\label{eq:scmdp}
    \bar{\Pi}_{C} = \{ \pi \in \Pi \big | \forall i \red{ \in 1,\cdots ,m},~\mathbb{E}_{(s_t, a_t, s_{t+1}) \sim \tau, \tau\sim\pi}\big[C_i(s_t, a_t, s_{t+1})\big] \leq w_i\}
\end{align}
where $w_i \in \mathbb{R}$. Then the objective for SCMDP is to find a feasible stationary policy from $\bar{\Pi}_{C}$ that 
\begin{align}
    \label{eq: fundamental problem}
    \underset{\pi}{\textbf{max}}~\mathcal{J}_0(\pi), ~\textbf{s.t.}~ \pi\in\bar{\Pi}_{C}
\end{align}
\red{
The validity of ${\bar{\Pi}_{C}\in{\Pi}_{C}}$ is demonstrated through the selection of ${d_i = w_i \frac{1-\gamma^{(1+H)}}{1-\gamma}}$ in \Cref{eq: original cdmp}. This indicates that stationary policies feasible for SCMDP are also applicable to CMDP; however, the reverse is not assured. As a result, policies within ${\bar{\Pi}_{C}}$ offer a higher level of safety, ensuring that ${\mathbb{E}_{(s_t, a_t, s_{t+1}) \sim \tau, \tau\sim\pi}\big[C_i(s_t, a_t, s_{t+1})\big]}$ for each state is bounded-- a condition not guaranteed by CMDP.}

\subsection{Maximum Markov Decision Process}
\label{sec: MMDP}

Note that for \eqref{eq: fundamental problem}, 
each state-action transition pair introduces a constraint, leading to a complexity that increases nearly cubically as the MDP horizon ($H$) grows, even when tackled by the fastest algorithms \citep{cohen2021complexity} (The detailed SCMDP complexity analysis is summarized in \Cref{appendix: complexity of scmdp}).
Thus it's intractable to solve using conventional reinforcement learning algorithms. 
Our intuition is that,
instead of directly constraining the cost of each possible state-action transition, we can constrain the expected maximum state-wise cost along the trajectory, which is much easier to solve. 

\red{
The key challenge lies in efficiently computing the maximum state-wise cost, leveraging the cumulative summation nature inherent in MDP. To achieve this, we introduce a tag ($M$) that travels along the trajectory, logging the maximum state-wise cost encountered so far. Whenever a higher state-wise cost is identified, $M$ is updated by adding an increment ($D$), ensuring it consistently reflects the maximum state-wise cost. This tagging mechanism maintains a desirable summation characteristic, facilitating subsequent solutions based on established theoretical results from MDP. This intuition is illustrated in \Cref{fig:MMDP}
}

\begin{figure}
    \centering

    \begin{subfigure}[b]{0.33\textwidth}
        \includegraphics[width=\linewidth]{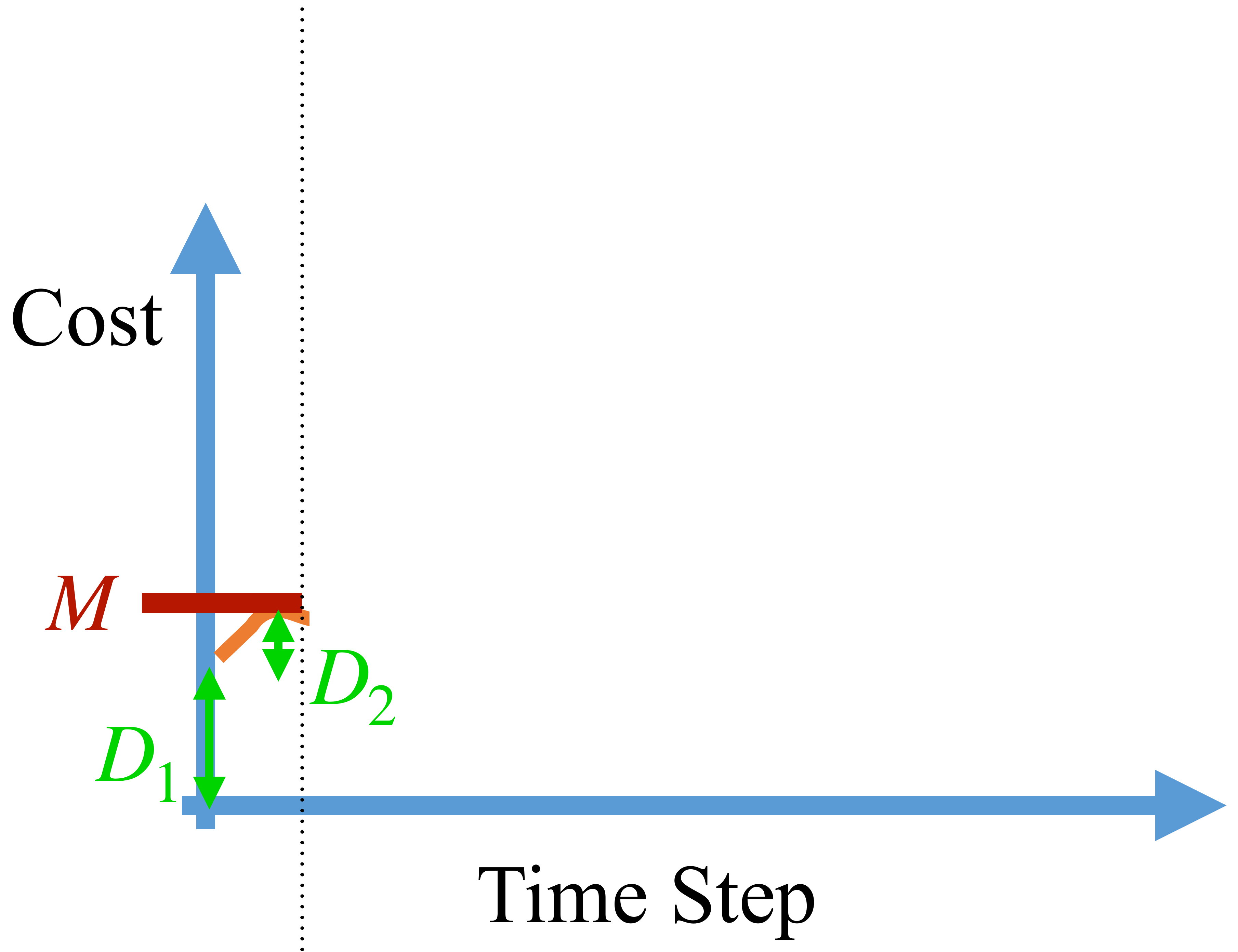}
        \label{fig:subfigA}
    \end{subfigure}\hfill
    \begin{subfigure}[b]{0.33\textwidth}
        \includegraphics[width=\linewidth]{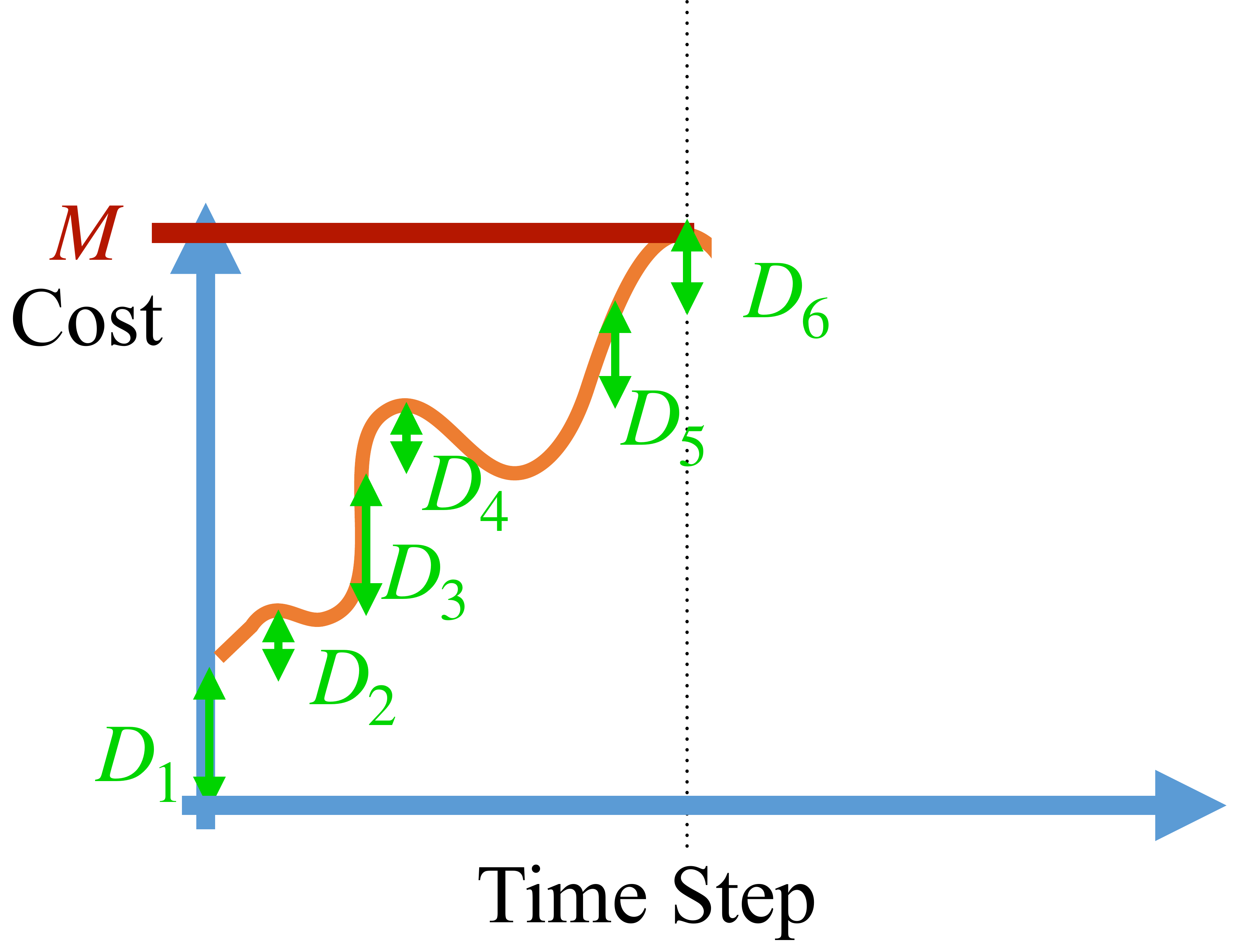}
        \label{fig:subfigB}
    \end{subfigure}\hfill
    \begin{subfigure}[b]{0.33\textwidth}
        \includegraphics[width=\linewidth]{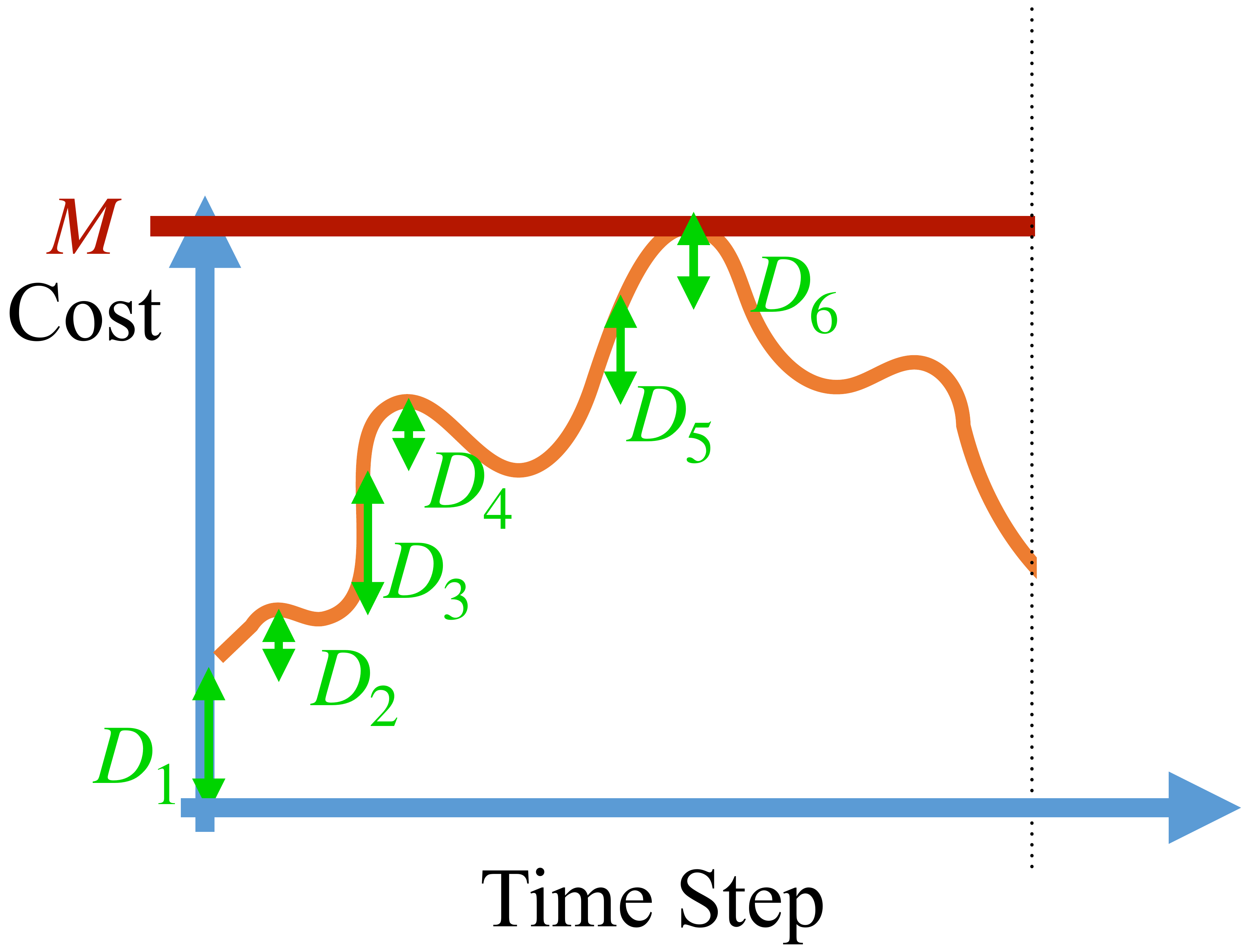}
        \label{fig:subfigC}
    \end{subfigure}
    \vspace{-15pt}
    \caption{\red{Intuition of the maximum state-wise cost: The three figures above illustrate the evolution of the maximum state-wise cost, denoted as ${M}$ (shown by the red line), across a single episode. The orange curve represents the state-wise cost, while the green lines with arrows labeled as ${D}$ indicate the increments of M at each step. Steps with ${D = 0}$ are not labeled in the figures.}}
    \label{fig:MMDP}
    \vspace{-10pt}
\end{figure}

Following that intuition, we define a novel \textit{Maximum Markov {decision process}} (MMDP), which further extends CMDP via (i) a set of up-to-now maximum state-wise costs $\textbf{M} \doteq [M_1, M_2, \cdots, M_m]$ where \red{$M_i \in \mathcal{M}_i \subset \mathbb{R}$}, and (ii) a set of \textit{cost increment} 
functions, $D_1, D_2, \cdots, D_m$, where $D_i:(\mathcal{S}, \red{\mathcal{M}_i}) \times \mathcal{A} \times \mathcal{S} \mapsto [0, \mathbb{R}^+]$ maps the augmented state action transition tuple into a non-negative cost increment.
We define the augmented state ${\hat s} = (s, \textbf{M}) \in (\mathcal{S},\mathcal{M}^m) \doteq \hat{\mathcal{S}}$, where $\hat{\mathcal{S}}$ is the augmented state space \red{with $\mathcal{M}^m = (\mathcal{M}_1, \mathcal{M}_2, \cdots, \mathcal{M}_m)$}.
Formally, 
\begin{align}
    D_i\big({\hat s}_t, a_t, {\hat s}_{t+1}\big) = \max\{C_i(s_t, a_t, s_{t+1}) - {M_{it}}, 0\} \red{, i \in 1,\cdots ,m}.
\end{align}
By setting $D_i\big({\hat s}_0, a_0, {\hat s}_{1}\big) = C_i(s_0,a_0,s_1)$, we have $M_{it} = \sum_{k=0}^{t-1} D_i\big({\hat s}_k, a_k, {\hat s}_{k+1}\big)$ for $t \geq 1 $. Hence, we define \textit{expected maximum state-wise cost} (or $D_i$-return) for $\pi$: 
\begin{align}
\label{eq: sum of max cost}
    \mathcal{J}_{D_i}(\pi) =   \mathbb{E}_{\tau \thicksim \pi} \Bigg[
    \sum_{t=0}^{H} D_i\big({\hat s}_t, a_t, {\hat s}_{t+1}\big)\Bigg] \red{, i \in 1,\cdots ,m}.
\end{align}
Importantly, \eqref{eq: sum of max cost} is the key component of MMDP and differs our work from existing safe RL approaches that are based on CMDP cost measure \eqref{eq: cmdp_constr}.
With \eqref{eq: sum of max cost}, \eqref{eq: fundamental problem} can be rewritten as:
\begin{align}
    \label{eq: fundamental problem scpo}
    \underset{\pi}{\textbf{max}} \mathcal{J}(\pi),~\textbf{s.t.}~ \forall i  \red{\in 1,\cdots ,m}, \mathcal{J}_{D_i}(\pi) \leq w_i,
\end{align}
where $\mathcal{J}(\pi) = \mathbb{E}_{\tau \sim \pi}\left[\sum_{t=0}^H \gamma^t R({\hat s}_t, a_t, {\hat s}_{t+1})\right]$ and $R({\hat s},a,{\hat s}')\doteq R(s, a, s')$.
With $R(\tau)$ being the discounted return of a trajectory, we define the on-policy value function as $V^\pi({\hat s}) \doteq \mathbb{E}_{\tau \sim \pi}[R(\tau) | {\hat s}_0 = {\hat s}]$, the on-policy action-value function as $Q^\pi({\hat s},a) \doteq \mathbb{E}_{\tau \sim \pi}[R(\tau) | {\hat s}_0 = {\hat s}, a_0=a]$, and the advantage function as $A^\pi({\hat s},a) \doteq Q^\pi({\hat s}, a) - V^\pi({\hat s})$. 
Lastly, we define on-policy value functions, action-value functions, and advantage functions for the cost increments in analogy to $V^\pi$, $Q^\pi$, and $A^\pi$, with $D_i$ replacing $R$, respectively. 
We denote those by $V_{D_i}^\pi$, $Q_{D_i}^\pi$ and $A_{D_i}^\pi$.
\section{State-wise Constrained Policy Optimization}

To solve large 
and
continuous MDPs, policy search algorithms search for the optimal policy within a set $\Pi_\theta \subset \Pi$ of parametrized policies. In local policy search~\citep{peters2008reinforcement}, the policy is iteratively updated by maximizing $\mathcal{J}(\pi)$ over a local neighborhood of the most recent 
policy $\pi_k$. In local policy search for SCMDPs, policy iterates must be feasible, so optimization is over $\Pi_\theta \bigcap \bar{\Pi}_{C}$. The optimization problem is:
\begin{align}
\label{eq: scpo optimization original}
    &\pi_{k+1} = \underset{\pi \in \Pi_{\theta}}{\textbf{argmax}}~ \mathcal{J}(\pi),  \\ \nonumber 
    & ~~~~~ \textbf{s.t.}~ \mathcal{D}ist(\pi, \pi_k) \leq \delta, \\ \nonumber 
    & ~~~~~~~~~~~~\mathcal{J}_{D_i}(\pi) \leq w_i, i = 1, \cdots, m.
\end{align}

where $\mathcal{D}ist$ is some distance measure, and $\delta > 0$ is a step size. 
For actual implementation, we need to evaluate the constraints first in order to determine the feasible set. 
However, it is challenging to evaluate the constraints using samples during the learning process. 
In this work, we propose SCPO inspired by trust region optimization methods\red{~\citep{schulman2015trust}}. 
SCPO approximates \eqref{eq: scpo optimization original} using (i) KL divergence distance metric $\mathcal{D}ist$ and (ii) surrogate functions for the objective and constraints, which can be easily estimated from samples on $\pi_k$. Mathematically,
SCPO requires the policy update at each iteration \red{to be bounded} within a trust region, and updates policy via \red{solving the following} optimization:
\begin{align}
\label{eq: scpo optimization final}
    & \pi_{k+1} =  \underset{\pi \in \Pi_{\theta}}{\textbf{argmax}}~ \underset{\substack{{\hat s} \sim d^{\pi_k} \\ a\sim \pi}}{\mathbb{E}} [A^{\pi_k}({\hat s},a)] \\ \nonumber 
    & ~~~~~ \textbf{s.t.}~  ~~ \mathbb{E}_{{\hat s} \sim \bar d^{\pi_k}}[\mathcal{D}_{KL}(\pi \| \pi_k)[{\hat s}]]\leq \delta, \\ \nonumber
    & ~~~~~~~~~~~~\mathcal{J}_{D_i}(\pi_k) + \underset{\substack{{\hat s} \sim \bar d^{\pi_k} \\ a\sim {\pi}}}{\mathbb{E}}\Bigg[ A^{\pi_k}_{D_i}({\hat s},a) \Bigg] + 2(H+1)\epsilon_{D_i}^{\pi} \sqrt{\frac{1}{2} \delta}  \leq w_i, i = 1, \cdots, m.
\end{align}
where $\mathcal{D}_{KL}(\pi' \| \pi)[{\hat s}]$ is KL divergence between two policy $(\pi', \pi)$ at state $\hat s$,
the set $\{\pi \in \Pi_\theta ~: ~ \mathbb{E}_{{\hat s} \sim \bar d^{\pi_k}}[\mathcal{D}_{KL}(\pi \| \pi_k)[{\hat s}]] \leq \delta\}$ is called \textit{trust region}, $d^{\pi_k} \doteq (1-\gamma)\sum_{t=0}^H\gamma^t P({\hat s}_t={\hat s}|{\pi_k})$, 
$\bar d^{\pi_k} \doteq \sum_{t=0}^H P({\hat s}_t={\hat s}|{\pi_k})$
and $\epsilon^{{\pi}}_{D_i} \doteq \mathbf{max}_{{\hat s}}|\mathbb{E}_{a\sim {\pi}}[A^{\pi_k}_{D_i}({\hat s},a)]|$.
We then show that SCPO guarantees (i) worst case maximum state-wise cost violation, and (ii) worst case performance degradation for policy update, by establishing new bounds on the difference in returns between two stochastic policies $\pi$ and $\pi'$ for MMDPs.


\paragraph{Theoretical Guarantees for SCPO}
We start with
the theoretical foundation for
our approach, i.e. a new bound on the difference in state-wise maximum cost between two arbitrary policies. The following theorem connects the difference in maximum state-wise cost between two arbitrary policies to the total variation divergence between them.
Here total variation divergence between discrete probability distributions $p, q$ is defined as $\mathcal{D}_{TV}(p \| q) = \frac{1}{2}\sum_i|p_i -q_i|$. This measure can be easily extended to continuous states and actions by replacing the sums with integrals. Thus, the total variation divergence between two policy $(\pi', \pi)$ at state $\hat s$ is defined as: $\mathcal{D}_{TV}(\pi' \| \pi)[{\hat s}] = \mathcal{D}_{TV}(\pi'(\cdot | {\hat s}) \| \pi(\cdot | {\hat s}))$.

\begin{theorem}[Trust Region Update State-wise Maximum Cost Bound]
For any policies $\pi', \pi$,  with $\epsilon^{\pi'}_D \doteq \mathbf{max}_{{\hat s}}|\mathbb{E}_{a\sim\pi'}[A^{\pi}_D({\hat s},a)]|$, and define $\bar d^\pi = \sum_{t=0}^H P({\hat s}_t={\hat s}|\pi)$ as the non-discounted augmented state distribution using $\pi$, then the following bound holds:
\begin{align}
\label{eq: state wise cost bound}
  &\mathcal{J}_D(\pi') - \mathcal{J}_D(\pi) \leq \underset{\substack{{\hat s}\sim \bar d^\pi\\a\sim\pi'}}{\mathbb{E}}\left[A_D^\pi({\hat s},a) + 2(H+1)\epsilon_{D}^{\pi'} \mathcal{D}_{TV}(\pi'||\pi)[{\hat s}]\right].
\end{align}

\label{theo: state wise cost}
\end{theorem}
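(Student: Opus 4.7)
The plan is to adapt the trust-region analysis of \citet{achiam2017cpo} to the non-discounted finite-horizon MMDP setting, via a three-step skeleton: a performance-difference identity for the $D$-return, a change-of-measure step handled by H\"older's inequality, and a bound on the $L^1$ discrepancy between the unnormalized augmented-state occupancies $\bar d^{\pi'}$ and $\bar d^\pi$.

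First I would establish an undiscounted performance-difference lemma and carry out the change of measure. By adding and subtracting the on-policy cost-increment value function $V_D^\pi(\hat s_t)$ at every timestep of a trajectory drawn from $\pi'$ and using the boundary convention $V_D^\pi(\hat s_{H+1}) = 0$, a telescoping sum gives $\mathcal{J}_D(\pi') - \mathcal{J}_D(\pi) = \mathbb{E}_{\tau\sim\pi'}\bigl[\sum_{t=0}^{H} A_D^\pi(\hat s_t, a_t)\bigr] = \mathbb{E}_{\hat s\sim\bar d^{\pi'},\,a\sim\pi'}[A_D^\pi(\hat s, a)]$, since $\bar d^{\pi'}$ is exactly the non-discounted visitation measure of $\pi'$. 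Letting $g(\hat s) \doteq \mathbb{E}_{a\sim\pi'}[A_D^\pi(\hat s, a)]$, the definition of $\epsilon_D^{\pi'}$ gives $\|g\|_\infty \le \epsilon_D^{\pi'}$, so H\"older's inequality yields $\bigl|\sum_{\hat s} (\bar d^{\pi'}(\hat s) - \bar d^\pi(\hat s))\, g(\hat s)\bigr| \le \epsilon_D^{\pi'}\,\|\bar d^{\pi'} - \bar d^\pi\|_1$, leaving only the occupancy-discrepancy term to control.

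The crux is to bound $\|\bar d^{\pi'} - \bar d^\pi\|_1$ by the expected TV divergence. Writing $P_{t+1}^{\pi'} - P_{t+1}^{\pi} = T^{\pi'}(P_t^{\pi'} - P_t^{\pi}) + (T^{\pi'} - T^{\pi}) P_t^{\pi}$ where $T^{\pi}$ is the one-step transition operator, using the contractivity $\|T^{\pi'} v\|_1 \le \|v\|_1$ together with the standard estimate $\|(T^{\pi'} - T^{\pi}) P_t^{\pi}\|_1 \le 2\,\mathbb{E}_{\hat s\sim P_t^{\pi}}[\mathcal{D}_{TV}(\pi'\|\pi)[\hat s]]$, an induction on $t$ yields $\|P_t^{\pi'} - P_t^{\pi}\|_1 \le 2\sum_{k=0}^{t-1}\mathbb{E}_{\hat s\sim P_k^{\pi}}[\mathcal{D}_{TV}(\pi'\|\pi)[\hat s]]$. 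Summing over $t = 0,\dots,H$, swapping the order of summation, and bounding the index weight by $H+1$ gives $\|\bar d^{\pi'} - \bar d^\pi\|_1 \le 2(H+1)\,\mathbb{E}_{\hat s\sim\bar d^\pi}[\mathcal{D}_{TV}(\pi'\|\pi)[\hat s]]$, the finite-horizon analog of the $\tfrac{2\gamma}{1-\gamma}$ bound used in CPO, where $H+1$ plays the role that $1/(1-\gamma)$ plays in the discounted setting. Substituting back produces the claimed inequality.

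The main obstacle I anticipate is the bookkeeping in the occupancy bound: one has to (i) remember that $\bar d^\pi$ has total mass $H+1$, not $1$, so the outer expectation is against an unnormalized measure; (ii) track the correct coefficient from the double sum $\sum_{t=0}^{H}\sum_{k=0}^{t-1}$ to land on a clean $2(H+1)$ rather than a looser $H^2$-style constant; and (iii) orient the one-step estimate on $(T^{\pi'} - T^{\pi})P_t^{\pi}$ against the $\pi$-visited states, so that the final expectation matches $\bar d^\pi$ exactly. The other two steps are essentially undiscounted ports of well-known identities, and once the third step goes through the bound combines trivially to yield \Cref{eq: state wise cost bound}.
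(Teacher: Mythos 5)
Your argument is correct and reaches exactly the bound in \Cref{theo: state wise cost}, but it takes a genuinely different route from the paper. The paper never works in the undiscounted setting directly: it first proves a \emph{discounted} finite-horizon analogue of the CPO machinery (Lemmas 1--4 in the appendix), using the resolvent identity $\dot d^\pi=(I-\gamma P_\pi)^{-1}\hat\mu$, the algebraic identity $\bar G-G=\gamma\bar G\Delta G$, and the bound $\|\bar G\|_1\le\sum_{t=0}^{H}\gamma^t$ to control $\|\dot d^{\pi'}-\dot d^\pi\|_1$; it then removes the discount by defining the gap $\mathcal{F}(\gamma)$ between the two sides, observing it is a polynomial in $\gamma$ that is nonnegative on $(0,1)$, and passing to the limit $\gamma\to 1^-$ so that $\sum_{t=0}^H\gamma^{t+1}\to H+1$ and $\dot d^\pi\to\bar d^\pi$. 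You instead stay undiscounted throughout: a telescoping performance-difference identity with the boundary condition $V_D^\pi(\hat s_{H+1})=0$, the same H\"older step, and a direct induction on the time-indexed marginals giving $\|P_t^{\pi'}-P_t^\pi\|_1\le 2\sum_{k=0}^{t-1}\mathbb{E}_{\hat s\sim P_k^\pi}[\mathcal{D}_{TV}(\pi'\|\pi)[\hat s]]$, summed over $t$. Your version buys self-containedness and transparency: it avoids the paper's somewhat delicate conventions (setting $(P_\pi)^{H+1}$ to the zero matrix so the Neumann series truncates, and the continuity-at-$\gamma=1$ argument), and it makes the origin of the $H+1$ factor explicit as the maximal weight $H-k$ in the double sum --- indeed your calculation shows the constant could be tightened to $H$. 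What the paper's route buys is near-verbatim reuse of the CPO lemmas, so the discounted reward bound (\Cref{theo: reward}) and the cost bound fall out of one shared set of preliminaries. The only caveat, which applies equally to the paper's proof, is that both treat $V_D^\pi$ as time-homogeneous even though the horizon is finite; your telescoping step inherits rather than introduces that simplification.
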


 The proof for \Cref{theo: state wise cost} is summarized in \Cref{sec: proof for cost theorem}. 
 Next, we note the following relationship between the total variation divergence and the KL divergence~\citep{ boyd2003subgradient, achiam2017cpo}: $\mathbb{E}_{{\hat s}\sim \bar d^\pi}[\mathcal{D}_{TV}(p \| q)[{\hat s}]] \leq \sqrt{\frac{1}{2} \mathbb{E}_{{\hat s} \sim \bar d^\pi}[\mathcal{D}_{KL}(p \| q)[{\hat s}]]}$. The following bound then follows directly from \Cref{theo: state wise cost}:

\begin{align}
\label{eq: j return in terms of KL}
  \mathcal{J}_{D}(\pi') \leq   \mathcal{J}_{D}(\pi) +  \underset{\substack{{\hat s} \sim \bar d^\pi \\ a\sim \pi'}}{\mathbb{E}}\Bigg[  A^{\pi}_{D}({\hat s},a) + 2(H+1)\epsilon_{D}^{\pi'} \sqrt{\frac{1}{2} \mathbb{E}_{{\hat s} \sim \bar d^\pi}[\mathcal{D}_{KL}( \pi' \| \pi)[{\hat s}]]}\Bigg].
\end{align}

By \Cref{eq: j return in terms of KL}, we have a guarantee for satisfaction of maximum state-wise constraints:
\begin{proposition}[SCPO Update Constraint Satisfaction]
Suppose $\pi_k, \pi_{k+1}$ are related by \eqref{eq: scpo optimization final}, then $D_i$-return for $\pi_{k+1}$ satisfies
\begin{align}
    \forall i \red{ \in 1,\cdots ,m}, \mathcal{J}_{D_i}(\pi_{k+1}) \leq w_i. \nonumber 
\end{align}
\label{prop: scpo cost guarantee}
\end{proposition}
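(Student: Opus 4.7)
The plan is to chain together three ingredients that are already in hand: the surrogate bound in \Cref{eq: j return in terms of KL}, the KL trust region constraint in \eqref{eq: scpo optimization final}, and the explicit constraint row of \eqref{eq: scpo optimization final} itself. Concretely, I would instantiate \Cref{eq: j return in terms of KL} with $\pi \leftarrow \pi_k$ and $\pi' \leftarrow \pi_{k+1}$ for each cost increment $D_i$. This yields, for every $i \in 1,\dots,m$,
\begin{align*}
\mathcal{J}_{D_i}(\pi_{k+1}) \;\leq\; \mathcal{J}_{D_i}(\pi_k) \;+\; \underset{\substack{{\hat s}\sim \bar d^{\pi_k}\\ a\sim \pi_{k+1}}}{\mathbb{E}}\!\left[A^{\pi_k}_{D_i}({\hat s},a)\right] \;+\; 2(H+1)\epsilon_{D_i}^{\pi_{k+1}} \sqrt{\tfrac{1}{2}\,\mathbb{E}_{{\hat s}\sim \bar d^{\pi_k}}[\mathcal{D}_{KL}(\pi_{k+1}\|\pi_k)[{\hat s}]]}.
\end{align*}

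Next I would invoke the trust region constraint $\mathbb{E}_{{\hat s}\sim \bar d^{\pi_k}}[\mathcal{D}_{KL}(\pi_{k+1}\|\pi_k)[{\hat s}]] \leq \delta$ from \eqref{eq: scpo optimization final}. Since the square root is monotone and $\epsilon_{D_i}^{\pi_{k+1}} \geq 0$ and $H+1 > 0$, the KL term can be replaced by $\delta$ without flipping the inequality, giving
\begin{align*}
\mathcal{J}_{D_i}(\pi_{k+1}) \;\leq\; \mathcal{J}_{D_i}(\pi_k) \;+\; \underset{\substack{{\hat s}\sim \bar d^{\pi_k}\\ a\sim \pi_{k+1}}}{\mathbb{E}}\!\left[A^{\pi_k}_{D_i}({\hat s},a)\right] \;+\; 2(H+1)\epsilon_{D_i}^{\pi_{k+1}} \sqrt{\tfrac{1}{2}\,\delta}.
\end{align*}
The right-hand side is exactly the left-hand side of the $i$-th constraint in the SCPO update \eqref{eq: scpo optimization final}, which by feasibility of $\pi_{k+1}$ is bounded above by $w_i$. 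Chaining the two inequalities closes the argument.

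The only subtlety, and where I would pause to justify carefully, is making sure that the instantiation of \Cref{theo: state wise cost} applied to $\pi' = \pi_{k+1}$ and $\pi = \pi_k$ uses the same $\bar d^{\pi_k}$-weighted KL and the same $\epsilon_{D_i}^{\pi_{k+1}}$ that appear in \eqref{eq: scpo optimization final}; this is a matter of matching definitions rather than a technical obstacle. The proof as a whole is a direct substitution and contains no hidden calculation, so I do not expect any genuinely hard step, merely bookkeeping that the surrogate constraint in \eqref{eq: scpo optimization final} was designed precisely to upper-bound the right-hand side of \Cref{eq: j return in terms of KL} under the trust region restriction.
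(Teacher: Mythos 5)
Your proposal is correct and is exactly the argument the paper intends: the paper states \Cref{prop: scpo cost guarantee} as following directly from \eqref{eq: j return in terms of KL}, and your chain --- instantiate that bound with $\pi'=\pi_{k+1}$, $\pi=\pi_k$, replace the KL term by $\delta$ via the trust-region constraint, and invoke feasibility of the $i$-th surrogate constraint in \eqref{eq: scpo optimization final} --- is precisely the omitted bookkeeping. Your observation that $\epsilon_{D_i}^{\pi}$ in \eqref{eq: scpo optimization final} evaluates at the optimum to the same $\epsilon_{D_i}^{\pi_{k+1}}$ appearing in the bound is the only detail worth checking, and it matches the paper's definitions.
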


\tmlr{
\red{
Proposition \ref{prop: scpo cost guarantee} is the first finite-horizon variant of the policy improvement theorem, and it also presents the first constraint satisfaction guarantee under MMDP. 
Unlike trust region methods such as CPO and TRPO, which assume a discounted infinite horizon sum characteristic, MMDP's non-discounted finite horizon sum characteristic invalidates these theories and separate treatment is required. As the maximum state-wise cost is calculated through a summation of non-discounted increments, analysis must be performed on a finite horizon to upper bound the worst-case summation. 
}
}


Next, we provide the performance guarantee of SCPO. Previous analyses of performance guarantees have focused on infinite-horizon MDP. We generalize the analysis to finite-horizon MDP, inspired by previous work~\citep{kakade2002approximately, schulman2015trust, achiam2017cpo}, and prove it in \Cref{sec: proof for reward theorem}. The infinite-horizon case can be viewed as a special case of the finite-horizon setting.

\begin{proposition}[SCPO Update Worst Performance Degradation]
\label{prop: scpo performance guarantee}
Suppose $\pi_k, \pi_{k+1}$ are related by \eqref{eq: scpo optimization final}, with $\epsilon^{\pi_{k+1}} \doteq \mathbf{max}_{{\hat s}}|\mathbb{E}_{a\sim\pi_{k+1}}[A^{\pi_k}({\hat s},a)]|$,  then performance return for $\pi_{k+1}$ satisfies
\begin{align}
    \mathcal{J}(\pi_{k+1}) - \mathcal{J}(\pi_k) > - \frac{    \sqrt{2\delta}\gamma \epsilon^{\pi_{k+1}}}{1-\gamma}. \nonumber
\end{align}
\end{proposition}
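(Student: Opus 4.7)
The plan is to mirror the classical Kakade--Langford/TRPO/CPO trust region performance argument, adapted to the finite-horizon augmented MDP, and then exploit the trivial feasibility of $\pi_k$ in the SCPO update to isolate a lower bound on worst-case reward degradation. First, I would derive the reward analog of \Cref{theo: state wise cost} in the form
\begin{equation*}
\mathcal{J}(\pi') - \mathcal{J}(\pi) \geq \frac{1}{1-\gamma}\underset{\substack{\hat s\sim d^\pi \\ a\sim\pi'}}{\mathbb{E}}\bigl[A^{\pi}(\hat s,a)\bigr] - \frac{2\gamma\epsilon^{\pi'}}{1-\gamma}\underset{\hat s\sim d^\pi}{\mathbb{E}}\bigl[\mathcal{D}_{TV}(\pi'\|\pi)[\hat s]\bigr],
\end{equation*}
by starting from the performance-difference identity $\mathcal{J}(\pi')-\mathcal{J}(\pi) = \mathbb{E}_{\tau\sim\pi'}\bigl[\sum_{t=0}^H \gamma^t A^\pi(\hat s_t,a_t)\bigr]$, rewriting the right-hand side in the form $\tfrac{1}{1-\gamma}\mathbb{E}_{\hat s\sim d^{\pi'},a\sim\pi'}[A^\pi(\hat s,a)]$, and then bounding the error introduced by substituting $d^{\pi'}\!\to d^\pi$ via a telescoping expansion of the visitation-distribution difference through the policy-dependent transition operator, with the pointwise one-step perturbation controlled by $\mathcal{D}_{TV}(\pi'\|\pi)[\hat s]$.

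Next, I would apply Pinsker's inequality exactly as the paper does immediately after \Cref{theo: state wise cost}, namely $\mathbb{E}[\mathcal{D}_{TV}(\pi'\|\pi)] \leq \sqrt{\tfrac{1}{2}\mathbb{E}[\mathcal{D}_{KL}(\pi'\|\pi)]}$, and combine this with the trust region constraint $\mathbb{E}_{\hat s\sim \bar d^{\pi_k}}[\mathcal{D}_{KL}(\pi_{k+1}\|\pi_k)[\hat s]] \leq \delta$ from \eqref{eq: scpo optimization final}, transferring the expectation from $d^{\pi_k}$ to $\bar d^{\pi_k}$ via the pointwise comparison $d^{\pi_k}(\hat s)\leq (1-\gamma)\bar d^{\pi_k}(\hat s)$. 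Finally, I would argue that $\pi_k$ is itself a feasible point of the SCPO program \eqref{eq: scpo optimization final}: the KL constraint is satisfied with equality at zero, the surrogate $A^{\pi_k}_{D_i}$-term vanishes when $\pi=\pi_k$ (both the advantage and $\epsilon^{\pi_k}_{D_i}$ equal zero), and $\mathcal{J}_{D_i}(\pi_k)\leq w_i$ holds inductively by \Cref{prop: scpo cost guarantee}. Since $\pi_{k+1}$ maximizes the surrogate advantage, it must achieve an objective value no smaller than the value attained by $\pi_k$, which is $0$, giving $\mathbb{E}_{\hat s\sim d^{\pi_k},a\sim\pi_{k+1}}[A^{\pi_k}(\hat s,a)] \geq 0$. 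Dropping this non-negative term from the combined bound yields the stated inequality.

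The main obstacle is pinning down the constants in Step 1. In the infinite-horizon CPO analysis the analogous TV penalty carries a factor of $1/(1-\gamma)^2$ rather than the $1/(1-\gamma)$ appearing in the target bound, so recovering the single $(1-\gamma)$ in the denominator requires carefully exploiting the finite-horizon truncation of $\sum_{t=0}^H \gamma^t$ when bounding $\|d^{\pi'}-d^{\pi}\|_1$, so that only one geometric series (rather than two) contributes a $1/(1-\gamma)$ factor. A related technicality is reconciling the discounted visitation measure $d^{\pi_k}$ appearing in the performance bound with the non-discounted $\bar d^{\pi_k}$ measuring the trust region; the pointwise inequality $d^{\pi_k}\leq (1-\gamma)\bar d^{\pi_k}$ together with Jensen's inequality applied to $\sqrt{\cdot}$ should suffice to close this gap without altering the constants in the final bound.
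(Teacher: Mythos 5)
Your overall route is the same as the paper's: a finite-horizon performance-difference bound (the reward analogue of \Cref{theo: state wise cost}, stated in the paper as \Cref{theo: reward}), then Pinsker's inequality, then the transfer of the KL trust region from $\bar d^{\pi_k}$ to $d^{\pi_k}$, and finally the observation that $\pi_k$ is feasible for \eqref{eq: scpo optimization final} with surrogate objective zero, so the maximizer $\pi_{k+1}$ has non-negative surrogate advantage. Those last steps are sound, and you make the feasibility of $\pi_k$ (zero KL, vanishing advantage term, $\mathcal{J}_{D_i}(\pi_k)\le w_i$ by induction) more explicit than the paper does.

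The genuine gap is the constant in your Step 1, and you have put your finger on it yourself. The penalty coefficient you need is $2\gamma\epsilon^{\pi'}/(1-\gamma)$ against $\mathbb{E}_{\hat s\sim d^\pi}[\mathcal{D}_{TV}]$, but the argument you describe delivers $2\gamma\epsilon^{\pi'}/(1-\gamma)^2$: the H\"older step costs $\epsilon^{\pi'}\lVert \dot d^{\pi'}-\dot d^{\pi}\rVert_1$, the visitation-shift bound (\Cref{lem: lemma 3}) gives $\lVert\dot d^{\pi'}-\dot d^{\pi}\rVert_1\le 2\big(\sum_{t=0}^H\gamma^{t+1}\big)\mathbb{E}_{\hat s\sim\dot d^\pi}[\mathcal{D}_{TV}(\pi'\|\pi)[\hat s]]$, and converting the unnormalized $\mathbb{E}_{\dot d^\pi}$ into the normalized $\mathbb{E}_{d^\pi}$ costs another $1/(1-\gamma)$. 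The finite-horizon truncation does not rescue this: it only replaces $\sum_{t=0}^H\gamma^{t+1}$ by $\gamma(1-\gamma^{H+1})/(1-\gamma)$ (or by $(H+1)\gamma$ if you bound term by term), neither of which is an order-$(1-\gamma)$ improvement, and the normalization factor is untouched; likewise the pointwise comparison $d^{\pi_k}\le(1-\gamma)\bar d^{\pi_k}$ only helps the KL transfer, not the TV penalty. So your proposed fix for the ``main obstacle'' does not go through. You should know that the paper's own proof has exactly the same discrepancy: after dropping the non-negative advantage term, its \eqref{eq: kl bound} yields $\mathcal{J}(\pi_{k+1})-\mathcal{J}(\pi_k) > -\sqrt{2\delta}\,\gamma\epsilon^{\pi_{k+1}}/(1-\gamma)^2$, and the final jump to the single $1-\gamma$ in the denominator of \Cref{prop: scpo performance guarantee} is not justified there either. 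What your argument (and the paper's) actually establishes is the weaker bound with $(1-\gamma)^2$ in the denominator; proving the proposition exactly as stated would require a genuinely different estimate.
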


\tmlr{
\red{
\Cref{prop: scpo performance guarantee} establishes a fundamental result that bounds the performance degradation when policy updates are carried out via solving \eqref{eq: scpo optimization original}, which ensures satisfaction of the trust region step size constraint and the state-wise maximum cost constraints. Intuitively, this proposition assures that when our policy is updated within these specified constraints, the degradation in reward performance will be limited. This means that our approach strikes a balance between improving the policy's performance and satisfying the state-wise safety constraints.
}
}
\section{Practical Implementation}
\label{sec:practical}

In this section, we show how to (a) implement an efficient approximation to the update \eqref{eq: scpo optimization final}, (b) encourage learning even when \eqref{eq: scpo optimization final} becomes infeasible, and (c) handle the difficulty of fitting augmented value $V_{D_i}^\pi$ which is unique to our novel MMDP formulation.
The full SCPO pseudocode is given as \cref{alg:scpo_main} in \cref{append: scpo}.

\paragraph{Practical implementation with sample-based estimation}
We first estimate the objective and constraints in \eqref{eq: scpo optimization final} using samples.
Note that we can replace the expected advantage on rewards using an importance sampling estimator with a sampling distribution $\pi_k$ \citep{achiam2017cpo} as
\begin{equation}\label{eq:IS}
    \mathbb{E}_{{\hat s} \sim d^{\pi_k},~a\sim \pi}[A^{\pi_k}({\hat s},a)] = \mathbb{E}_{{\hat s} \sim {d}^{\pi_k},~a\sim \pi_k}\left[\frac{\pi(a|{\hat s})}{\pi_k(a|{\hat s})}A^{\pi_k}({\hat s},a)\right].
\end{equation}
\eqref{eq:IS} allows us to replace $A^{\pi_k}$ with empirical estimates at each state-action pair $({\hat s}, a)$ from rollouts by the previous policy $\pi_k$.
The empirical estimate of reward advantage is given by $R({\hat s}, a, {\hat s}')+\gamma V^{\pi_k}({{\hat s}}')-V^{\pi_k}({\hat s})$.
$V^{\pi_k}({\hat s})$ can be computed at each augmented state by taking the discounted future return.
The same can be applied to the expected advantage with respect to cost increments, with the sample estimates given by $D_i({\hat s}, a, {\hat s}')+V_{D_i}^{\pi_k}({{\hat s}}')-V_{D_i}^{\pi_k}({\hat s})$.
$V_{D_i}^{\pi_k}({\hat s})$ is computed by taking the non-discounted future $D_i$-return.
To proceed, we convexify \eqref{eq: scpo optimization final} by approximating the objective and cost constraint via first-order expansions, and the trust region constraint via second-order expansions.
Then, \eqref{eq: scpo optimization final} can be efficiently solved using duality \citep{achiam2017cpo}.

\paragraph{Infeasible constraints}
An update to $\theta$ is 
computed
every time \eqref{eq: scpo optimization final} is solved.
However, due to approximation errors, sometimes \eqref{eq: scpo optimization final} can become infeasible.
In that case, we follow \red{\citet{achiam2017cpo}} to propose an recovery update that 
only
decreases the constraint value within the trust region.
In addition, approximation errors can also cause the proposed policy update (either feasible or recovery) to violate the original constraints in \eqref{eq: scpo optimization final}.
Hence, each policy update is followed by a backtracking line search to ensure constraint satisfaction.
If all these fails,
we relax the search condition by also accepting decreasing expected advantage with respect to the costs, when the cost constraints are already violated.
Denoting
$
c_i \doteq \mathcal{J}_{D_i}(\pi_k) + 2(H+1)\epsilon_{D}^{\pi} \sqrt{\delta/2} - w_i
$,
the above criteria can be summarized as
\begin{align}
    \mathbb{E}_{{\hat s} \sim \bar d^{\pi_k}}[\mathcal{D}_{KL}(\pi \| \pi_k)[{\hat s}]] &\leq \delta \\
    {\mathbb{E}_{{\hat s} \sim \bar d^{\pi_k}, a\sim {\pi}}}\left[ A^{\pi_k}_{D_i}({\hat s},a) \right] - {\mathbb{E}_{{\hat s} \sim \bar d^{\pi_k}, a\sim {\pi_k}}}\left[ A^{\pi_k}_{D_i}({\hat s},a) \right] &\leq \mathrm{max}(-c_i, 0) \red{ , i \in 1,\cdots ,m}. \label{constr: cost decrease}
\end{align}
Note that the previous expected advantage ${\mathbb{E}}_{{\hat s} \sim \bar d^{\pi_k},a\sim {\pi_k}}\left[ A^{\pi_k}_{D_i}({\hat s},a) \right]$ is also estimated from rollouts by $\pi_k$ and converges to zero asymptotically, which recovers the original cost constraints in \eqref{eq: scpo optimization final}.

\paragraph{Imbalanced cost value targets}
A critical step in solving \eqref{eq: scpo optimization final} is to fit the cost increment value functions $V_{D_i}^{\pi_k}({\hat s}_t)$.
By definition, $V_{D_i}^{\pi_k}({\hat s}_t)$ is equal to the maximum cost increment in any future state over the maximal state-wise cost so far.
In other words, the true $V_{D_i}^{\pi_k}$ will always be zero for all ${\hat s}_{t:H}$ when the maximal state-wise cost has already occurred before time $t$.
In practice, this causes the distribution of cost increment value function to be highly zero-skewed and makes the fitting very hard.
To mitigate the problem, we sub-sample the zero-valued targets to match the population of non-zero values.
We provide more analysis on this trick in Q3 in \cref{sec: evaluating results}.

\section{Experiments }
\begin{wrapfigure}{hr}{0.45\linewidth}
    \centering
    \begin{subfigure}[b]{0.49\linewidth}
        \begin{subfigure}[t]{1.00\linewidth}\raisebox{-\height}{\includegraphics[width=\linewidth]{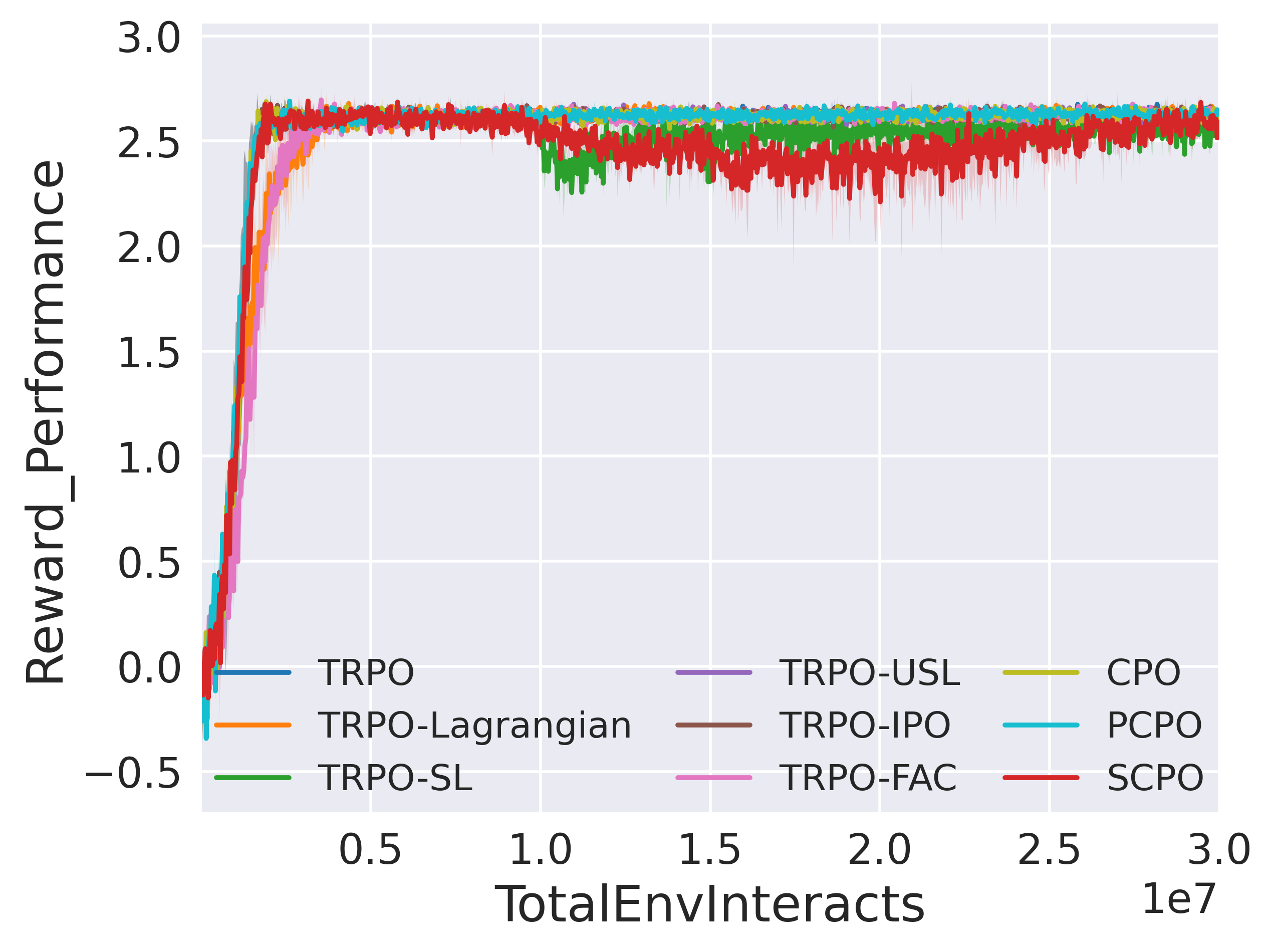}}
        \label{fig:ant-hazard-8-Performance-main}
        \end{subfigure}
        \hfill
        \begin{subfigure}[t]{1.00\linewidth}\raisebox{-\height}{\includegraphics[width=\linewidth]{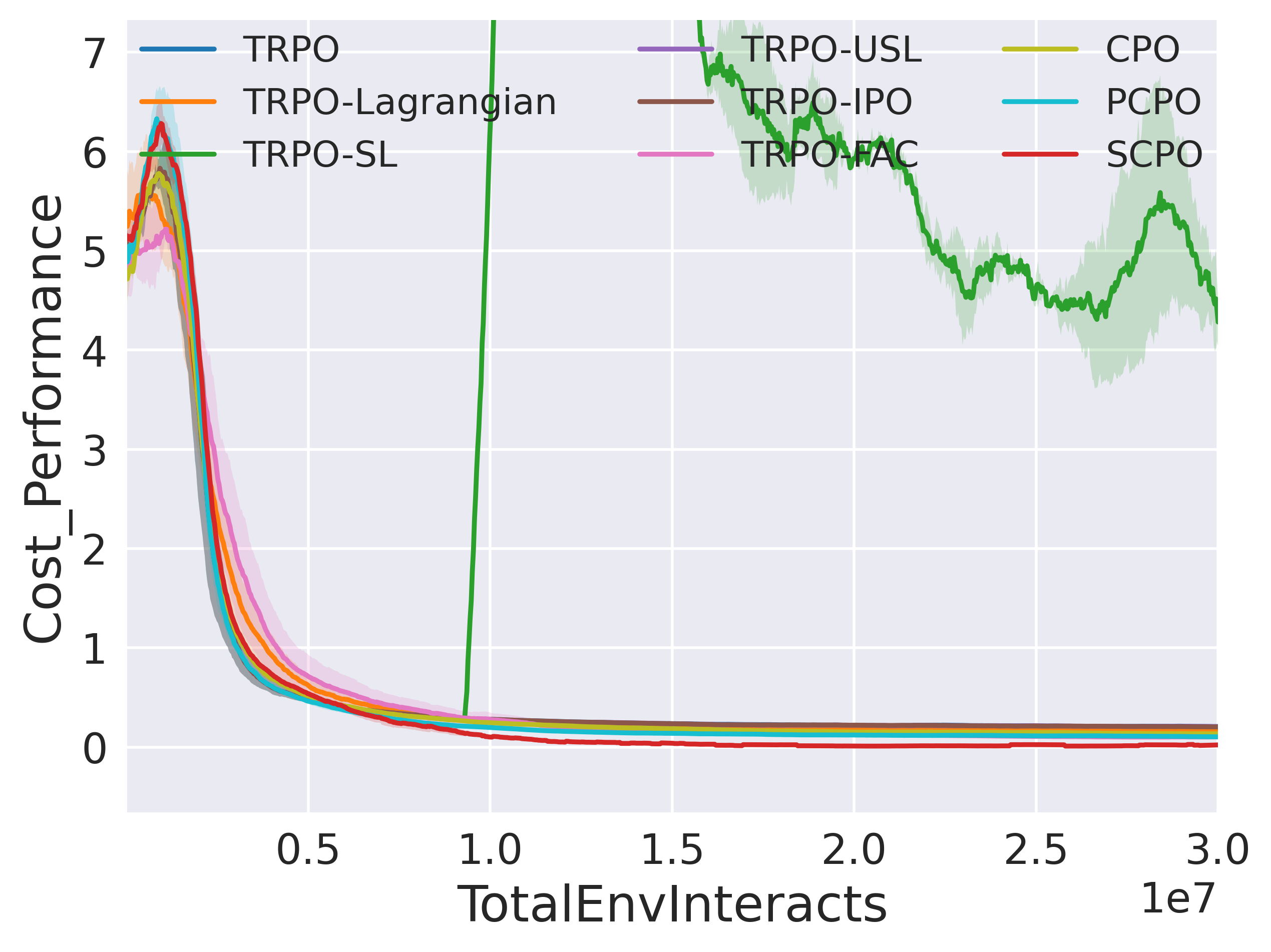}}
        \label{fig:anttiny-8-AverageEpCost-main}
        \end{subfigure}
        \hfill
        \begin{subfigure}[t]{1.00\linewidth}\raisebox{-\height}{\includegraphics[width=\linewidth]{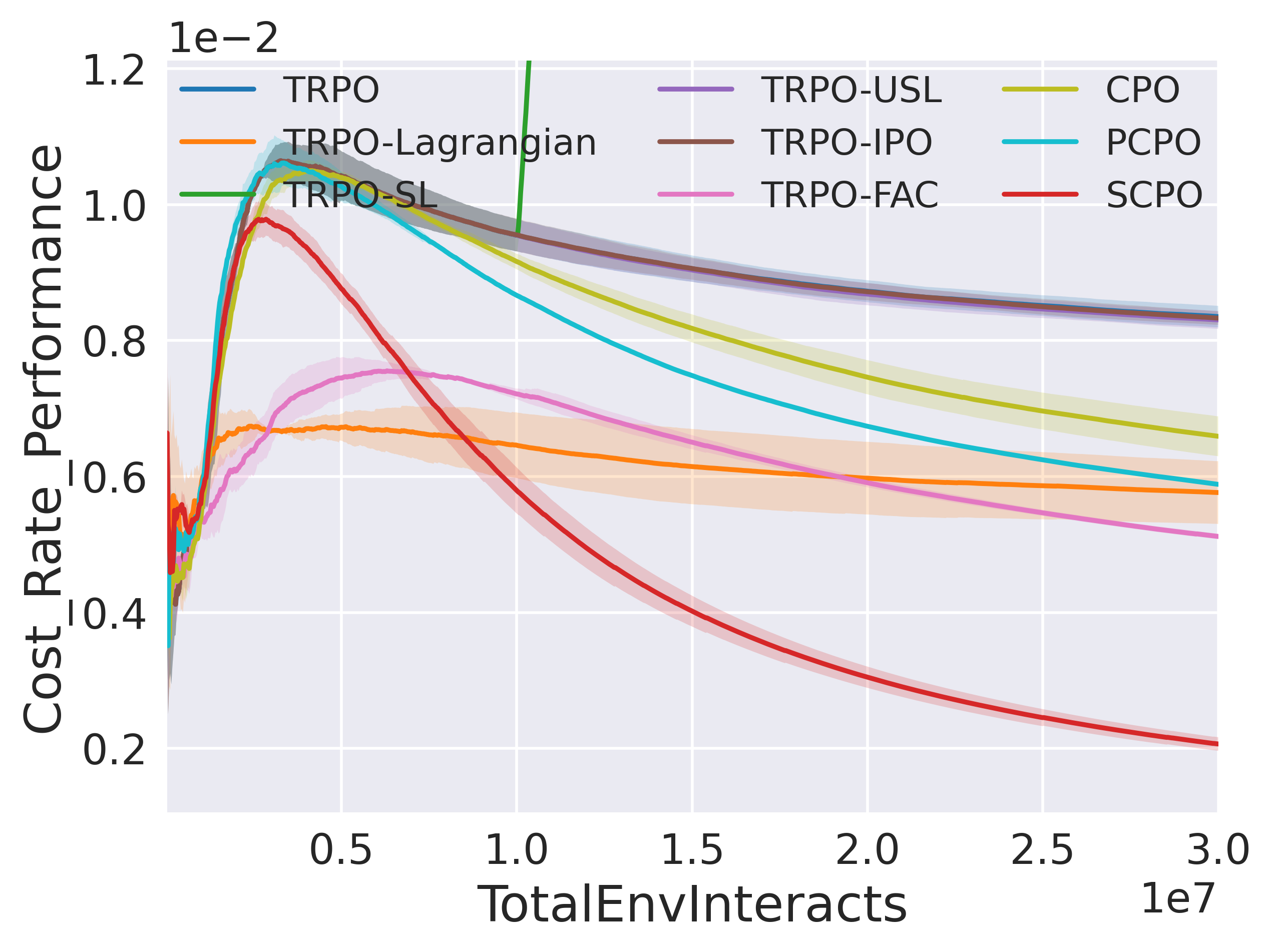}}
        \label{fig:ant-hazard-8-CostRate-main}
        \end{subfigure}
        \caption{Ant-Hazard-8}
        \label{fig:ant-hazard-8-main}
    \end{subfigure}
    \hfill
    \begin{subfigure}[b]{0.49\linewidth}
        \begin{subfigure}[t]{1.00\linewidth}\raisebox{-\height}{\includegraphics[width=\linewidth]{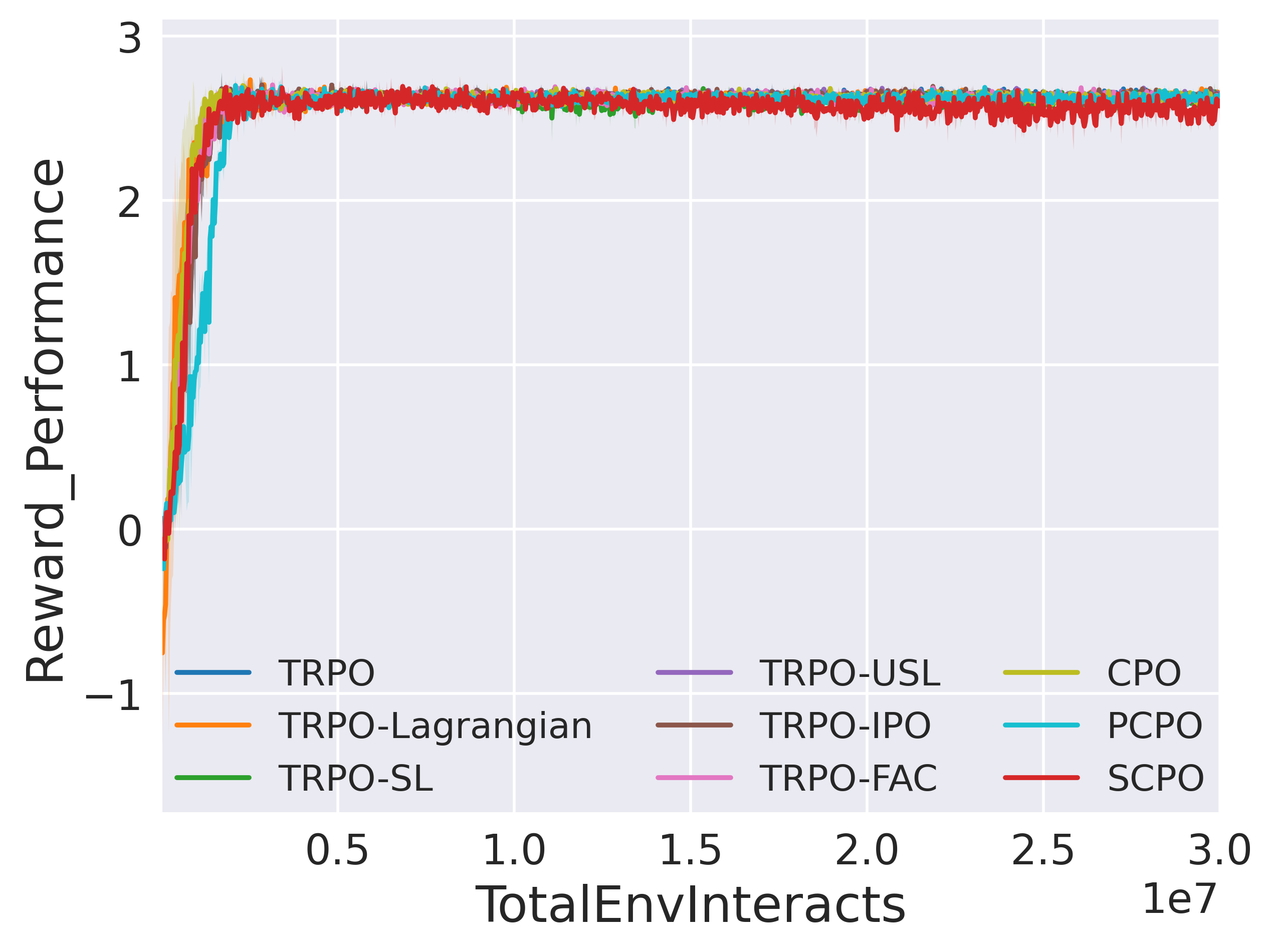}}
        \label{fig:walker-hazard-8-Performance-main}
        \end{subfigure}
        \hfill
        \begin{subfigure}[t]{1.00\linewidth}\raisebox{-\height}{\includegraphics[width=\linewidth]{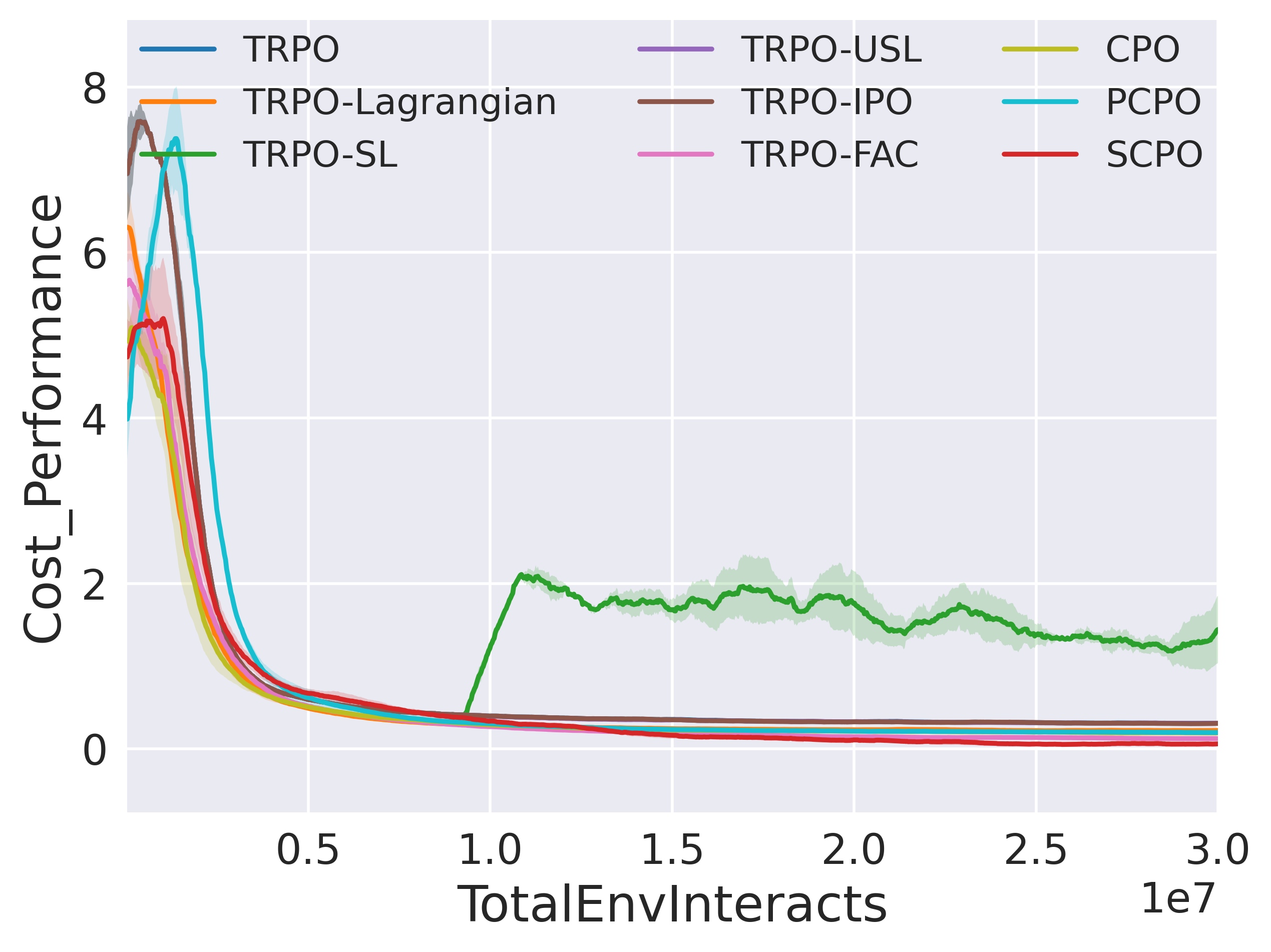}}
        \label{fig:walker-8-AverageEpCost-main}
        \end{subfigure}
        \hfill
        \begin{subfigure}[t]{1.00\linewidth}\raisebox{-\height}{\includegraphics[width=\linewidth]{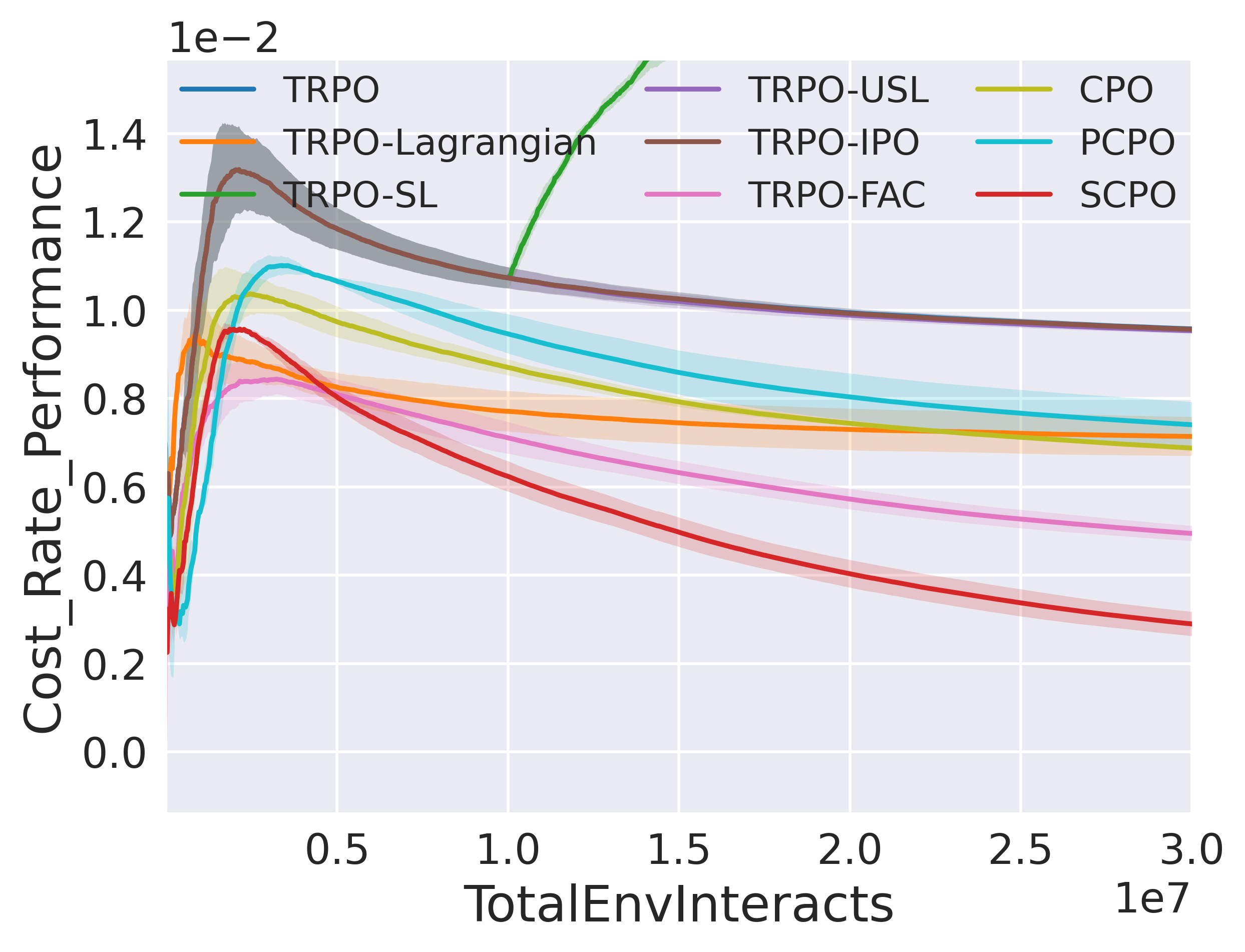}}
        \label{fig:walker-hazard-8-CostRate-main}
        \end{subfigure}
        \caption{Walker-Hazard-8}
        \label{fig:walker-hazard-8-main}
    \end{subfigure}
    \caption{Comparison of results from two representative test suites in high dimensional systems (Ant and Walker).} 
    \label{fig:exp-2-comparison-main}
    \vspace{-50pt}
\end{wrapfigure}
In our experiments, we aim to answer these questions:

\textbf{Q1} 
How does SCPO compare with other state-of-the-art methods for safe RL?

\textbf{Q2} 
What benefits are demonstrated by constraining the maximum state-wise cost?

\red{
\textbf{Q3} 
How does the sub-sampling trick of SCPO impact its performance? Does sub-sampling work for other baselines?}

{
    \textbf{Q4}
    How tight is our derived surrogate function?
}

{
    \textbf{Q5}
    How does the resource usage of our algorithm \\ 
    compare to other algorithms?
}

\subsection{Experiment Setups}
\paragraph{New Safety Gym}To showcase the effectiveness of our state-wise constrained policy optimization approach, we enhance the widely recognized safe reinforcement learning benchmark environment, Safety Gym\red{~\citep{ray2019benchmarking}}, by incorporating additional robots and constraints.
Subsequently, we perform a series of experiments on this augmented environment.

Our experiments are based on five different robots: (i) \textbf{Point: } \Cref{fig: Point} A point-mass robot ($\mathcal{A} \subseteq \mathbb{R}^{2}$) that can move on the ground. (ii) \textbf{Swimmer: } \Cref{fig: Swimmer} A three-link robot ($\mathcal{A} \subseteq \mathbb{R}^{2}$) that can move on the ground. (iii) \textbf{Walker: } \Cref{fig: Walker} A bipedal robot ($\mathcal{A} \subseteq \mathbb{R}^{10}$) that can move on the ground. (iv) \textbf{Ant: } \Cref{fig: Ant} A quadrupedal robot ($\mathcal{A} \subseteq \mathbb{R}^{8}$) that can move on the ground. (v) \textbf{Drone:} \Cref{fig: Drone} A quadrotor robot ($\mathcal{A} \subseteq \mathbb{R}^{4}$) that can move in the air. \red{(vi) \textbf{Arm3: } \Cref{fig: Arm3} A fixed three-joint robot arm($\mathcal{A} \subseteq \mathbb{R}^{3}$) that can move its end effector around with high flexibility. (vii) \textbf{Humanoid: } \Cref{fig: Humanoid} A bipedal robot($\mathcal{A} \subseteq \mathbb{R}^{17}$) that has a torso with a pair of legs and arms.}

All of the experiments are based on the goal task where the robot must navigate to a goal. Additionally, since we are interested in episodic tasks (finite-horizon MDP), the environment will be reset once the goal is reached. \red{For the robots that can move in 3D spaces (e.g, the Drone robot, Arm3 robot), we also design a new 3D goal task with a sphere goal floating in the 3D space.} Three different types of constraints are considered: (i) \textbf{Hazard}: Dangerous areas as shown in~\Cref{fig: hazard}. Hazards are trespassable circles on the ground. The agent is penalized for entering them. (ii) \textbf{3D Hazard}: 3D Dangerous areas as shown in~\Cref{fig: 3Dhazard}. 3D Hazards are trespassable spheres in the air. The agent is penalized for entering them. (iii) \textbf{Pillar}: Fixed obstacles as shown in ~\Cref{fig: pillar}. The agent is penalized for hitting them.

\begin{figure}[t]
  \centering
  \begin{subfigure}[t]{0.12\linewidth}
    \includegraphics[scale=0.08]{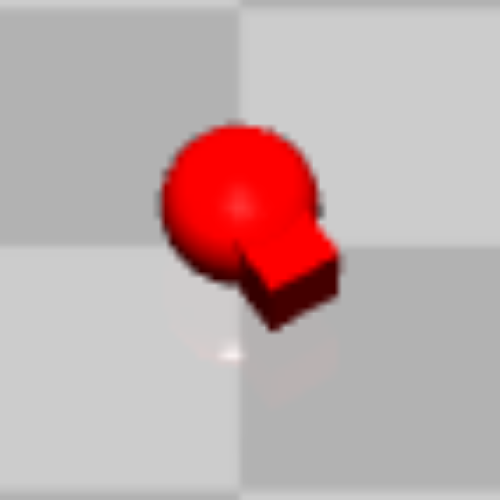}
    \caption{Point}
    \label{fig: Point}
  \end{subfigure}
  \begin{subfigure}[t]{0.12\linewidth}
    \includegraphics[scale=0.08]{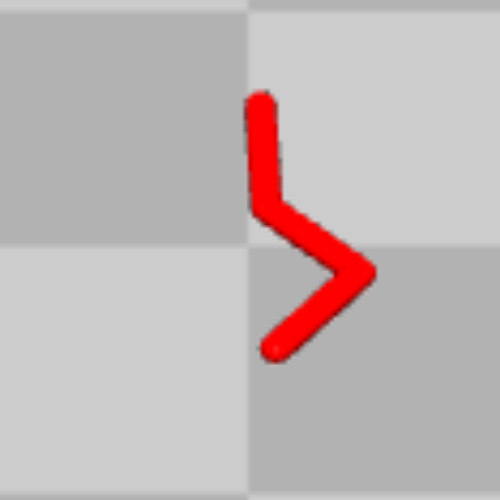}
    \caption{Swimmer}
    \label{fig: Swimmer}
  \end{subfigure}
  \begin{subfigure}[t]{0.12\linewidth}
      \includegraphics[scale=0.08]{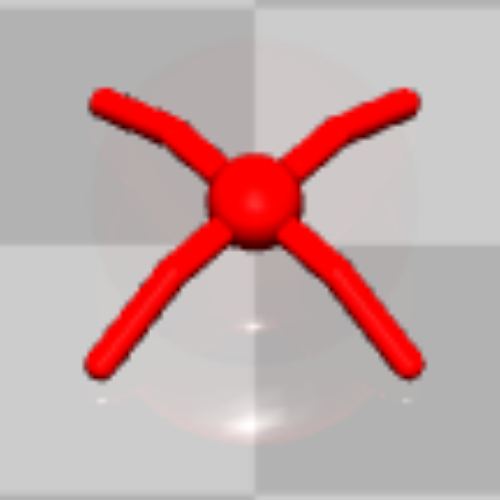}
      \caption{Ant}
      \label{fig: Ant}
  \end{subfigure}
  \begin{subfigure}[t]{0.12\linewidth}
        \includegraphics[scale=0.08]{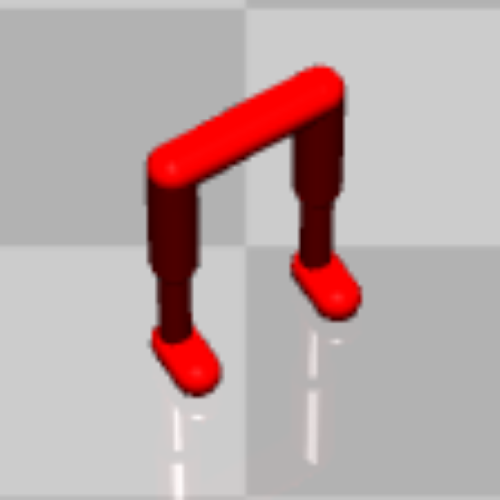}
      \caption{Walker}
      \label{fig: Walker}
  \end{subfigure}
  \begin{subfigure}[t]{0.12\linewidth}
        \includegraphics[scale=0.08]{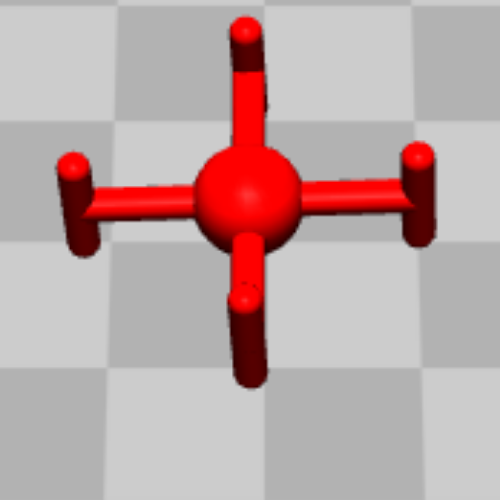}
      \caption{Drone}
      \label{fig: Drone}
  \end{subfigure}
  \begin{subfigure}[t]{0.12\linewidth}
        \includegraphics[scale=0.08]{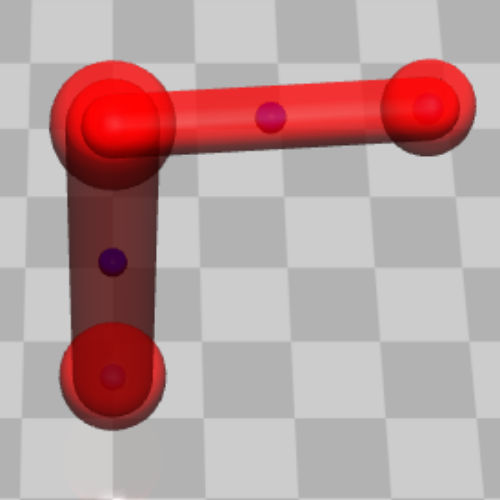}
      \caption{\red{Arm3}}
      \label{fig: Arm3}
  \end{subfigure}
  \begin{subfigure}[t]{0.12\linewidth}
        \includegraphics[scale=0.08]{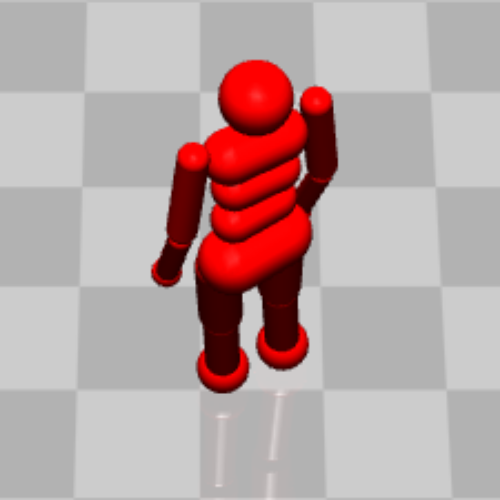}
      \caption{\red{Humanoid}}
      \label{fig: Humanoid}
  \end{subfigure}
  \label{fig: robots}
  \caption{Robots for benchmark problems in upgraded Safety Gym.}
\end{figure}
\begin{figure}[t]
\vspace{-10pt}
  \centering
  \begin{subfigure}[t]{0.2\linewidth}
    \includegraphics[scale=0.15]{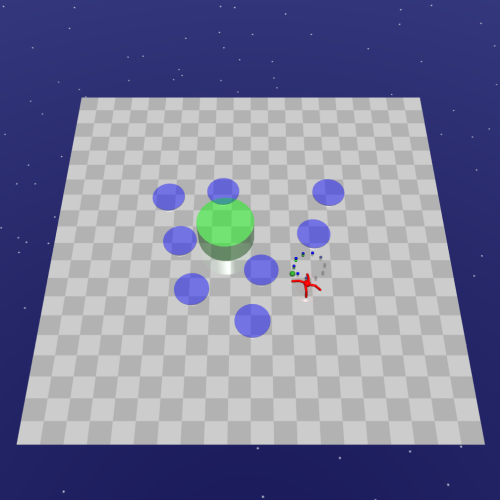}
    \caption{Hazard}
    \label{fig: hazard}
  \end{subfigure}
  \begin{subfigure}[t]{0.2\linewidth}
    \includegraphics[scale=0.15]{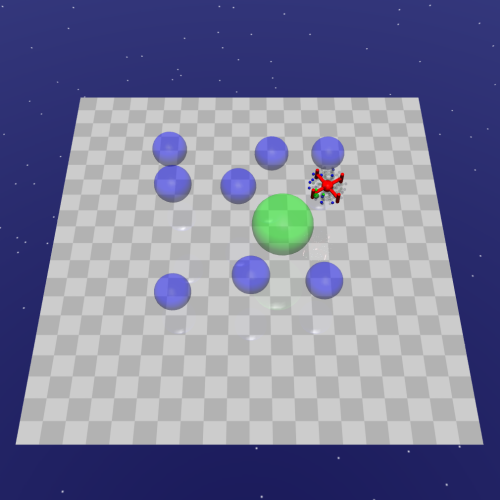}
    \caption{3D Hazard}
    \label{fig: 3Dhazard}
  \end{subfigure}
  \begin{subfigure}[t]{0.2\linewidth}
      \includegraphics[scale=0.15]{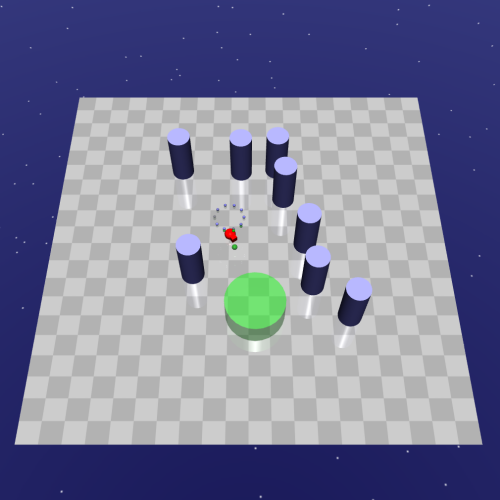}
      \caption{Pillar}
      \label{fig: pillar}
  \end{subfigure}
  \label{fig: constraints}
  \caption{Constraints for benchmark problems in upgraded Safety Gym.}
\end{figure}

 Considering different robots, constraint types, and constraint difficulty levels, we design 14 test suites with 5 types of robots and 9 types of constraints, which are summarized in \Cref{tab: testing suites} in Appendix. We name these test suites as \texttt{\{Robot\}-\{Constraint Type\}-\{Constraint Number\}}.

 \begin{figure}
    \centering
    \begin{subfigure}[t]{1.00\textwidth}
    \centering
    \begin{subfigure}[t]{0.29\textwidth}
    \begin{subfigure}[t]{1.00\textwidth}
        \raisebox{-\height}{\includegraphics[width=0.9\textwidth]{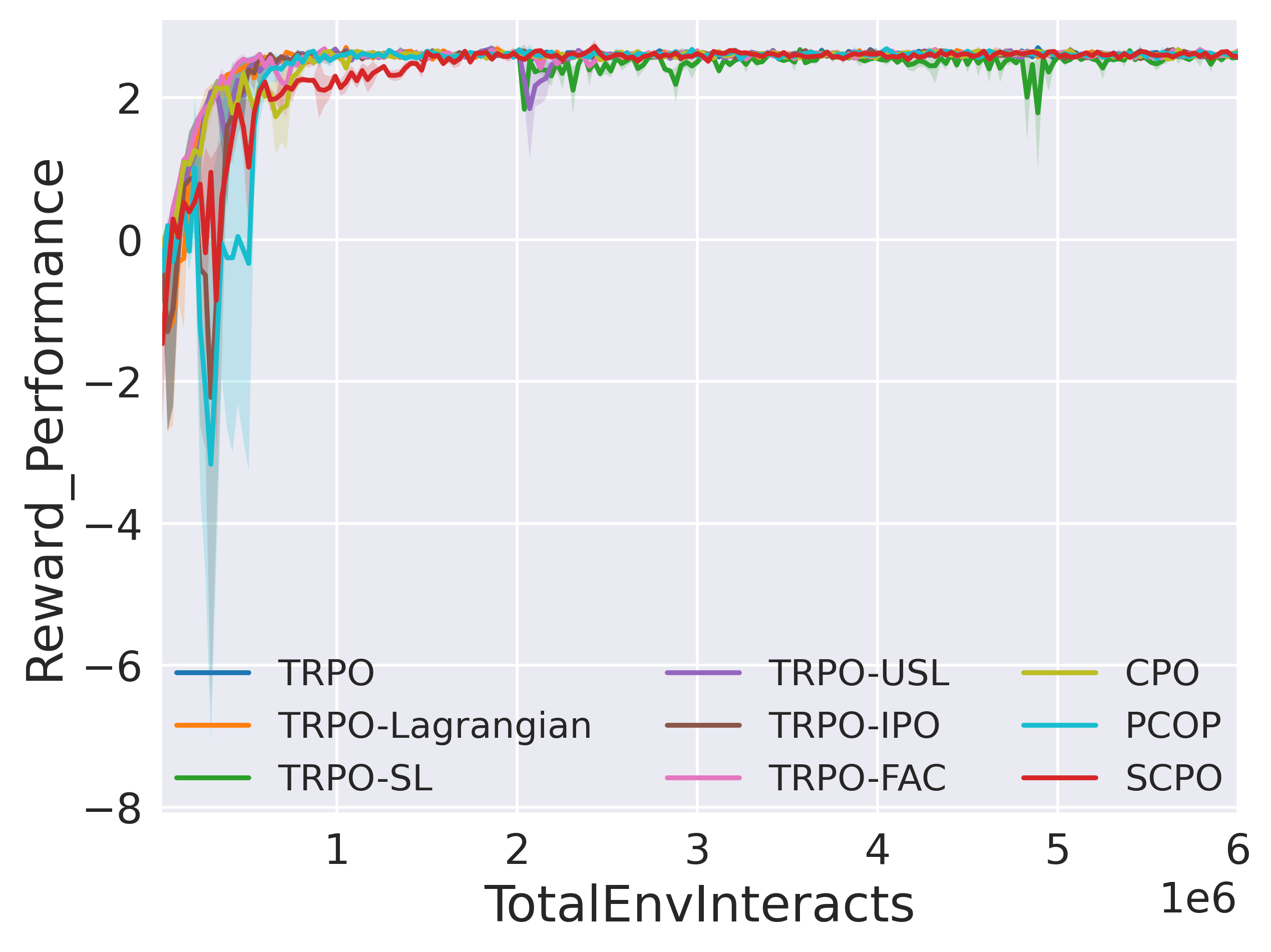}}
        \label{fig:point-hazard-8-Performance-main}
    \end{subfigure}
    \hfill
    \begin{subfigure}[t]{1.00\textwidth}
        \raisebox{-\height}{\includegraphics[width=0.9\textwidth]{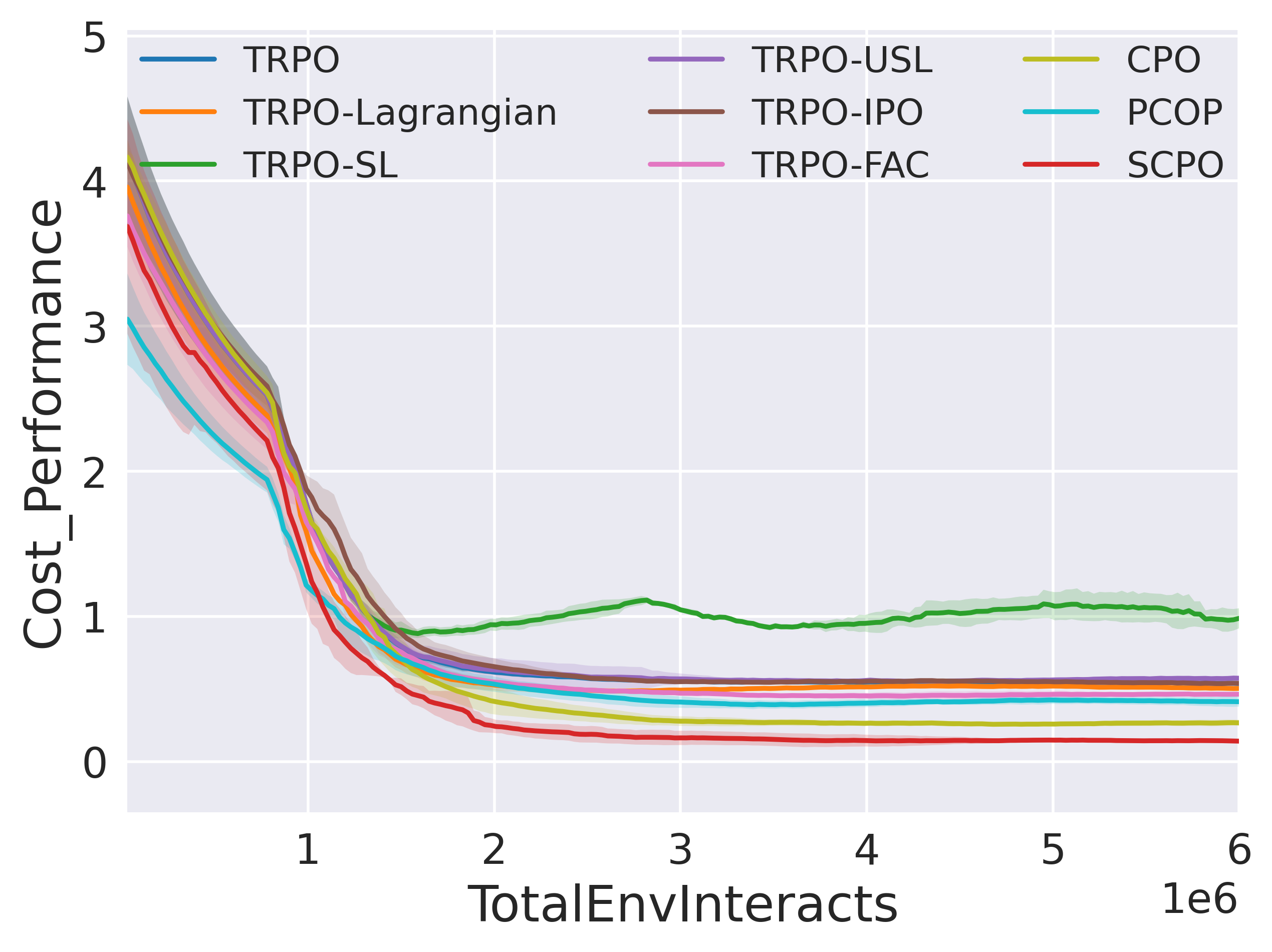}}
    \label{fig:point-hazard-8-AverageEpCost-main}
    \end{subfigure}
    \hfill
    \begin{subfigure}[t]{1.00\textwidth}
        \raisebox{-\height}{\includegraphics[width=0.9\textwidth]{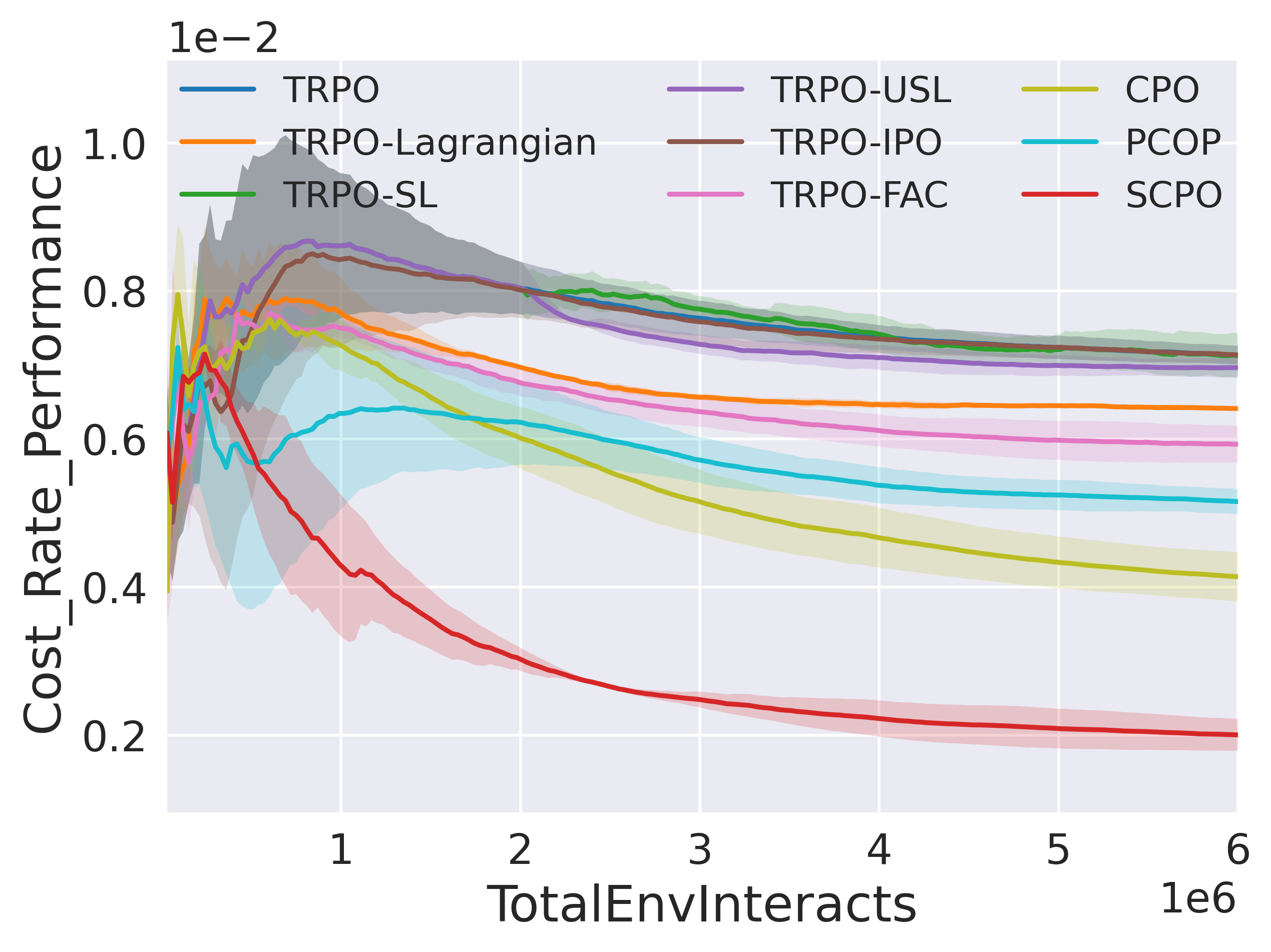}}
    \label{fig:point-hazard-8-CostRate-main}
    \end{subfigure}
    \caption{Point-Hazard-8}
    \label{fig:point-hazard-8-main}
    \end{subfigure}
   \begin{subfigure}[t]{0.29\textwidth}
    \begin{subfigure}[t]{1.00\textwidth}
        \raisebox{-\height}{\includegraphics[width=0.9\textwidth]{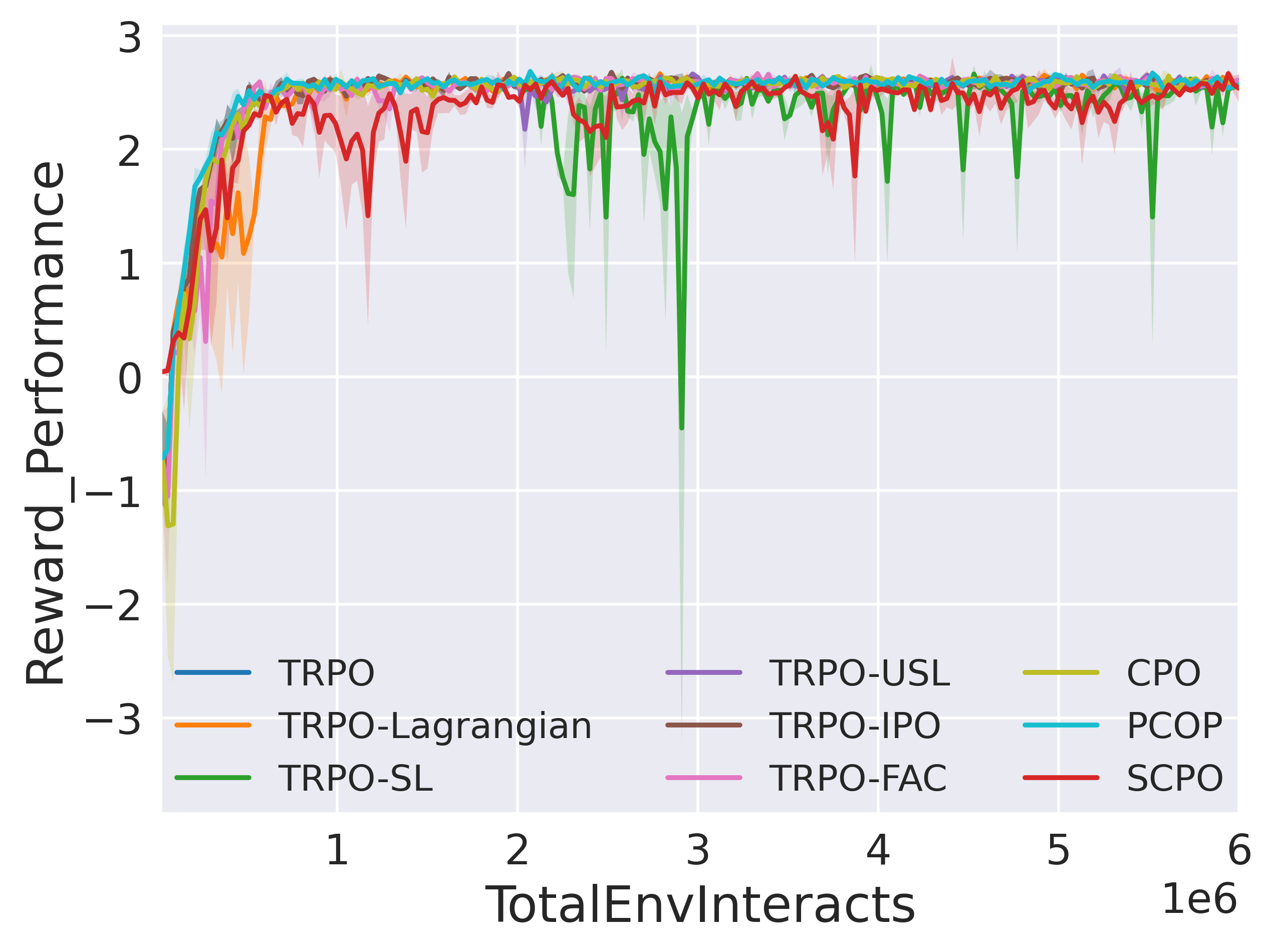}}
        \label{fig:point-pillar-4-Performance-main}
    \end{subfigure}
    \hfill
    \begin{subfigure}[t]{1.00\textwidth}
        \raisebox{-\height}{\includegraphics[width=0.9\textwidth]{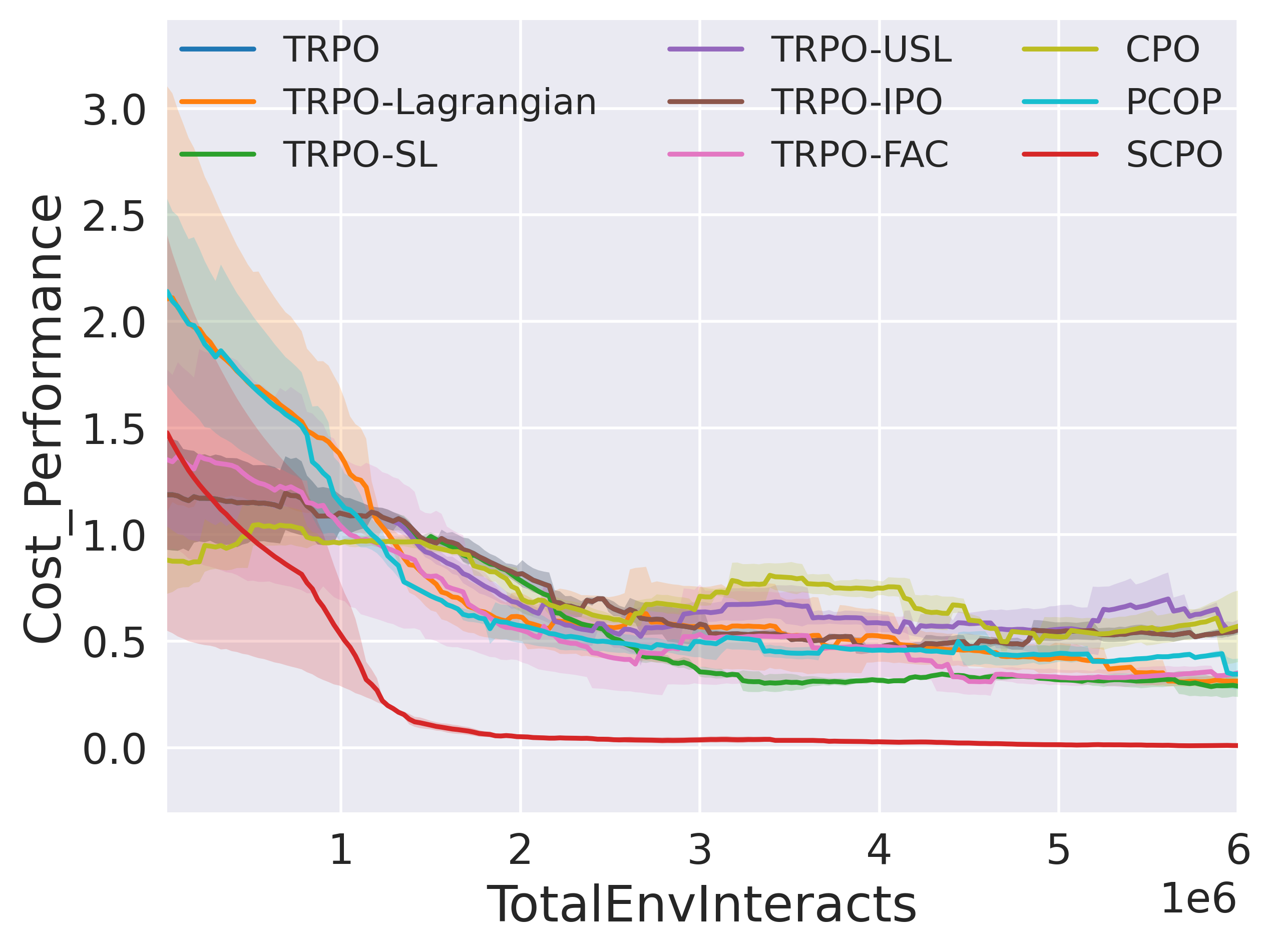}}
    \label{fig:point-pillar-4-AverageEpCost-main}
    \end{subfigure}
    \hfill
    \begin{subfigure}[t]{1.00\textwidth}
        \raisebox{-\height}{\includegraphics[width=0.9\textwidth]{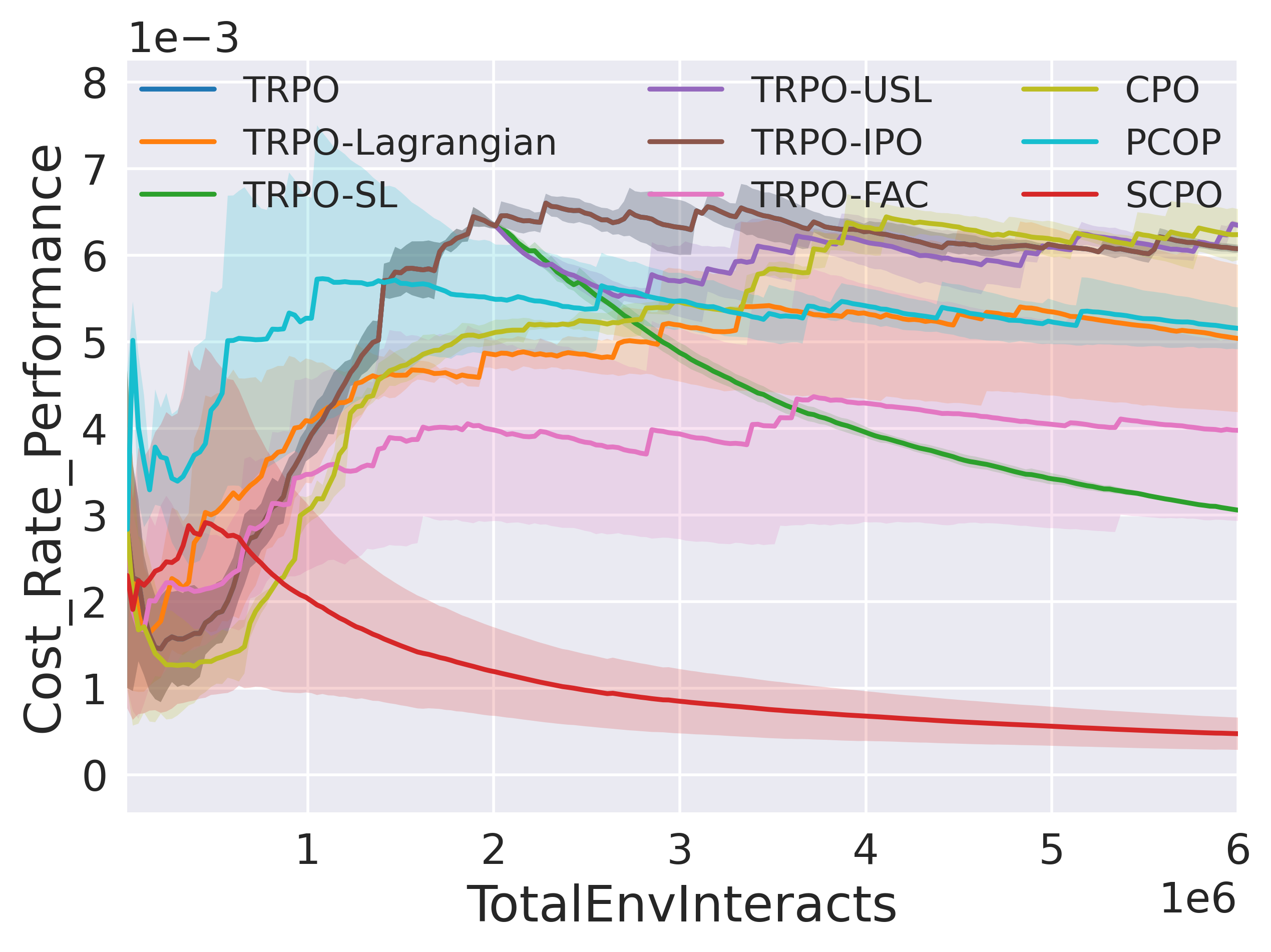}}
    \label{fig:point-hazard-4-CostRate-main}
    \end{subfigure}
    \caption{Point-Pillar-4}
    \label{fig:point-pillar-4-main}
    \end{subfigure}
    \begin{subfigure}[t]{0.29\textwidth}
    \begin{subfigure}[t]{1.00\textwidth}
        \raisebox{-\height}{\includegraphics[width=0.9\textwidth]{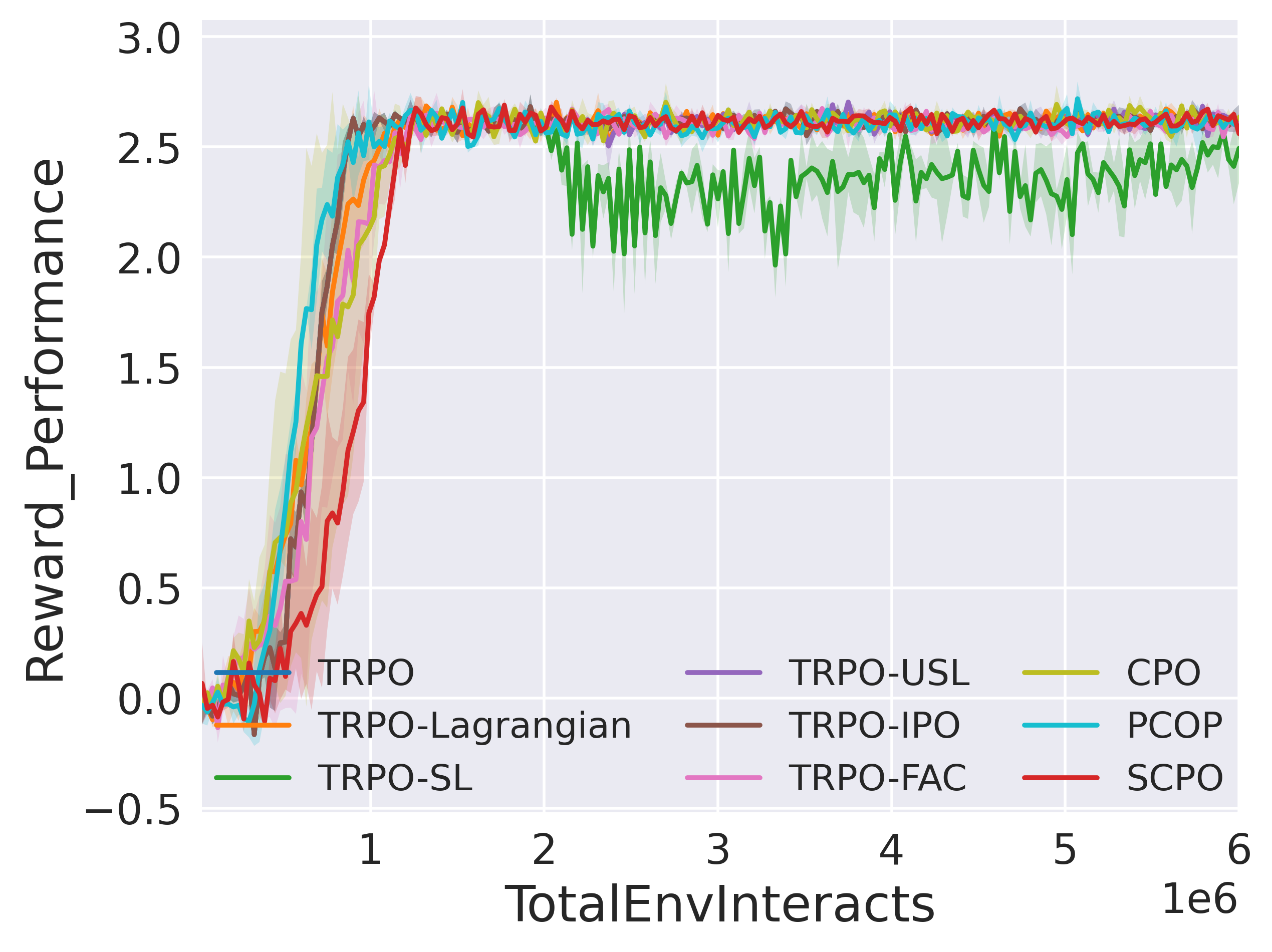}}
        \label{fig:swimmer-hazard-8-Performance-main}
    \end{subfigure}
    \hfill
    \begin{subfigure}[t]{1.00\textwidth}
        \raisebox{-\height}{\includegraphics[width=0.9\textwidth]{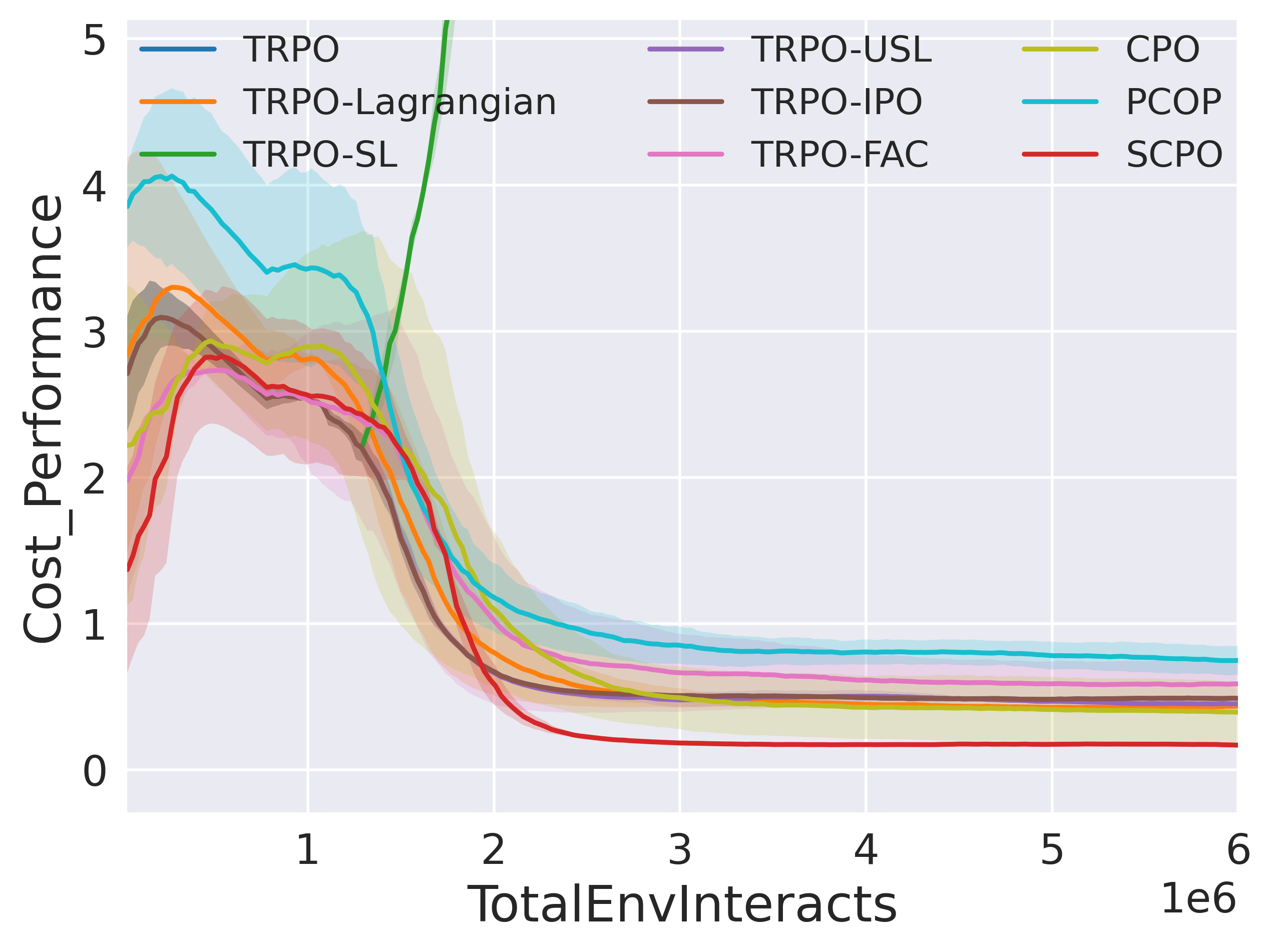}}
    \label{fig:swimmer-hazard-8-AverageEpCost-main}
    \end{subfigure}
    \hfill
    \begin{subfigure}[t]{1.00\textwidth}
        \raisebox{-\height}{\includegraphics[width=0.9\textwidth]{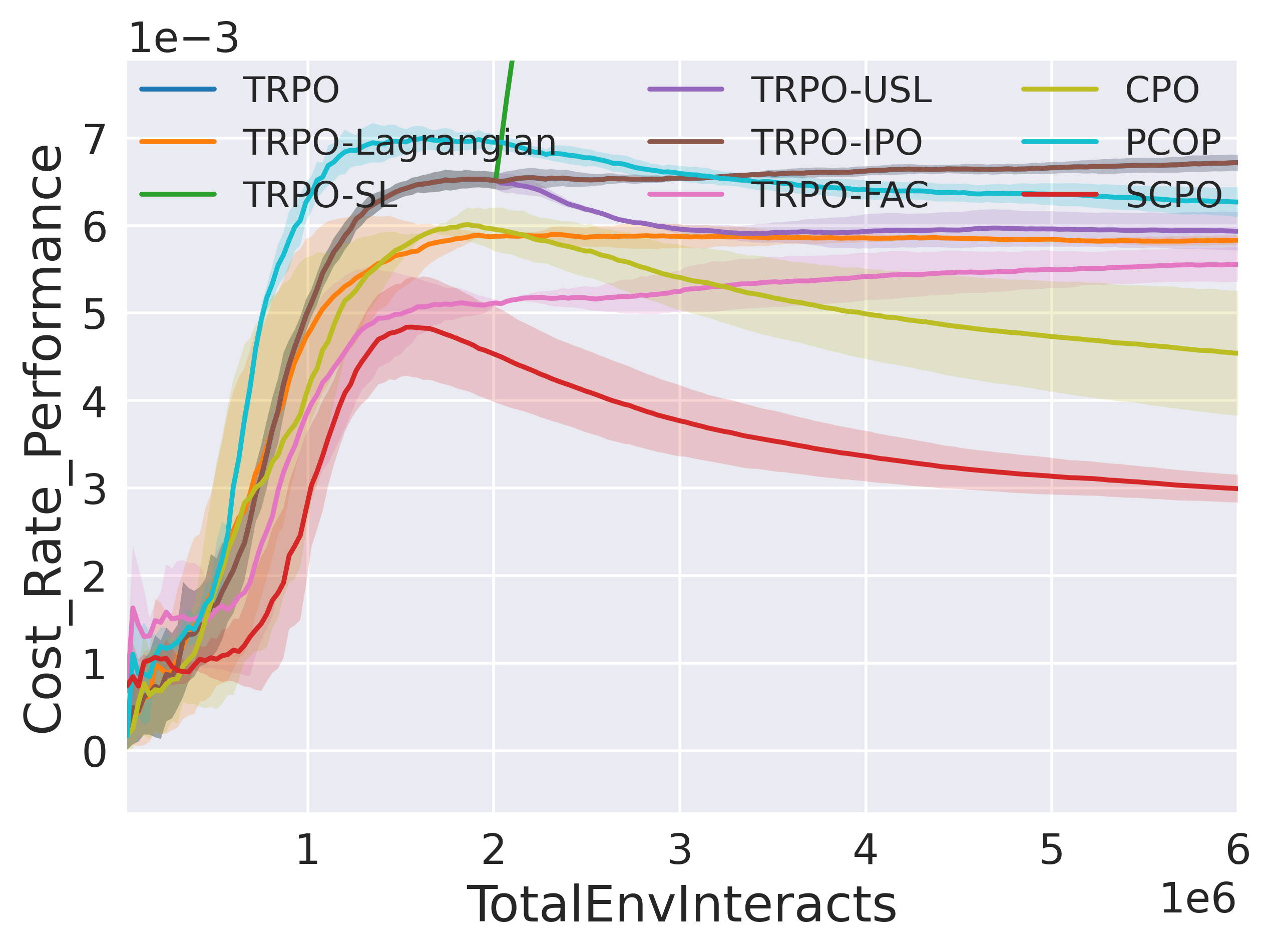}}
    \label{fig:swimmer-hazard-8-CostRate-main}
    \end{subfigure}
    \caption{Swimmer-Hazard-8}
    \label{fig:swimmer-hazard-8-main}
    \end{subfigure}
    \end{subfigure}
    \hfill
    \begin{subfigure}[t]{1.00\textwidth}
    \centering
    \begin{subfigure}[t]{0.29\textwidth}
    \begin{subfigure}[t]{1.00\textwidth}
        \raisebox{-\height}{\includegraphics[width=0.9\textwidth]{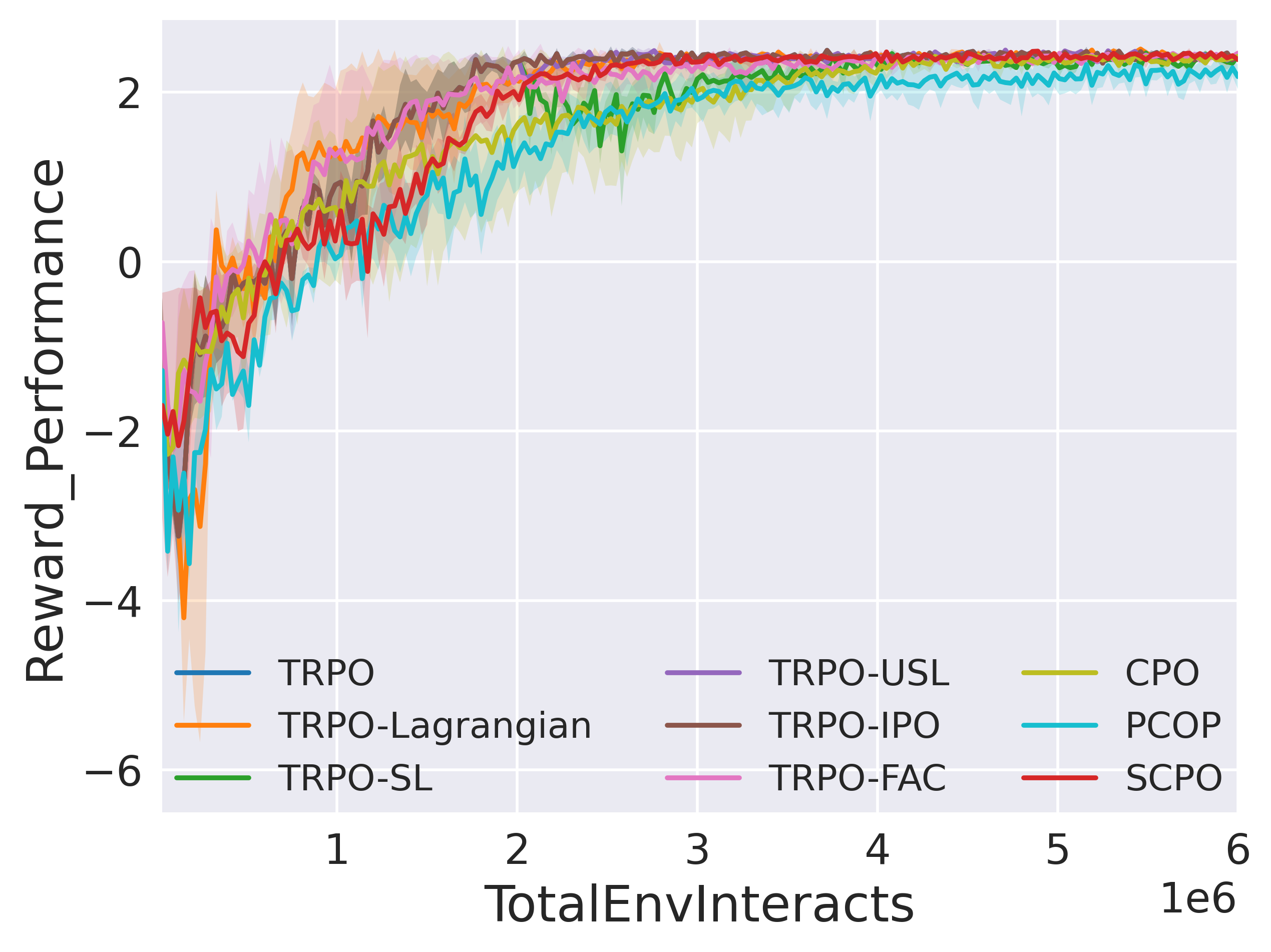}}
        \label{fig:drone-3Dhazard-8-Performance-main}
    \end{subfigure}
    \hfill
    \begin{subfigure}[t]{1.00\textwidth}
        \raisebox{-\height}{\includegraphics[width=0.9\textwidth]{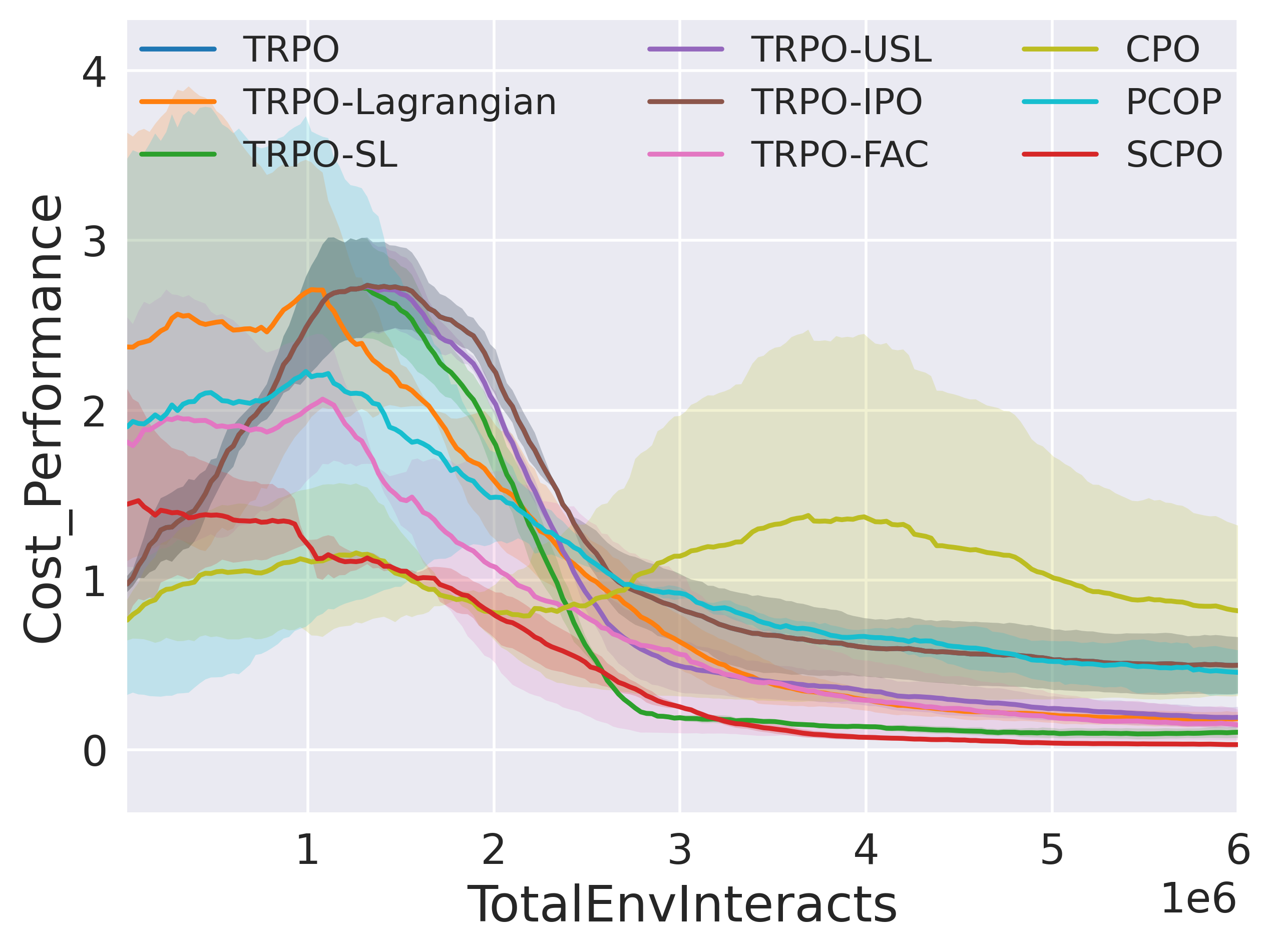}}
    \label{fig:drone-3Dhazard-8-AverageEpCost-main}
    \end{subfigure}
    \hfill
    \begin{subfigure}[t]{1.00\textwidth}
        \raisebox{-\height}{\includegraphics[width=0.9\textwidth]{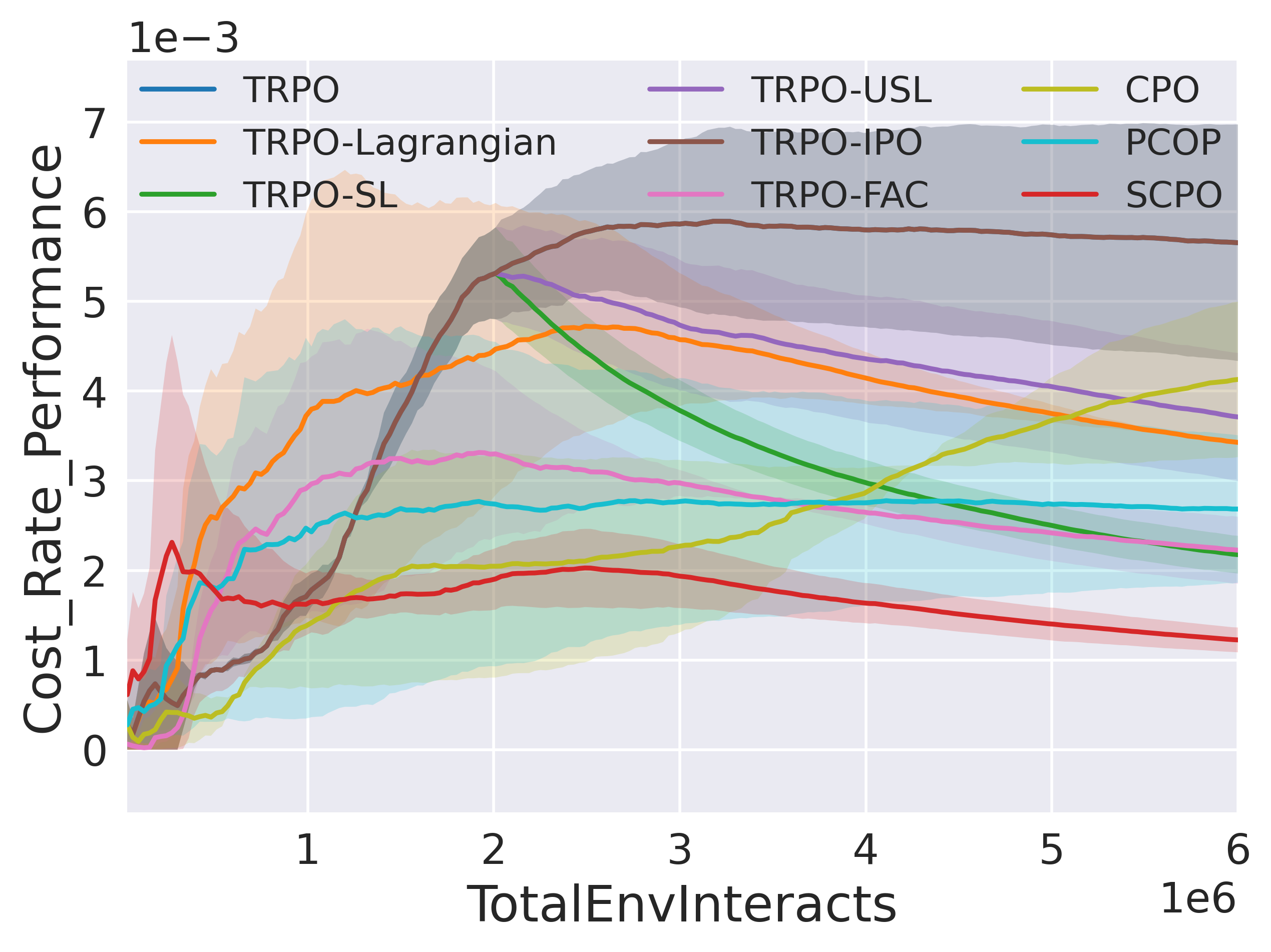}}
    \label{fig:drone-3Dhazard-8-CostRate-main}
    \end{subfigure}
    \caption{Drone-3DHazard-8}
    \label{fig:drone-3Dhazard-8-main}
    \end{subfigure}
    \begin{subfigure}[t]{0.29\textwidth}
    \begin{subfigure}[t]{1.00\textwidth}
        \raisebox{-\height}{\includegraphics[width=0.9\textwidth]{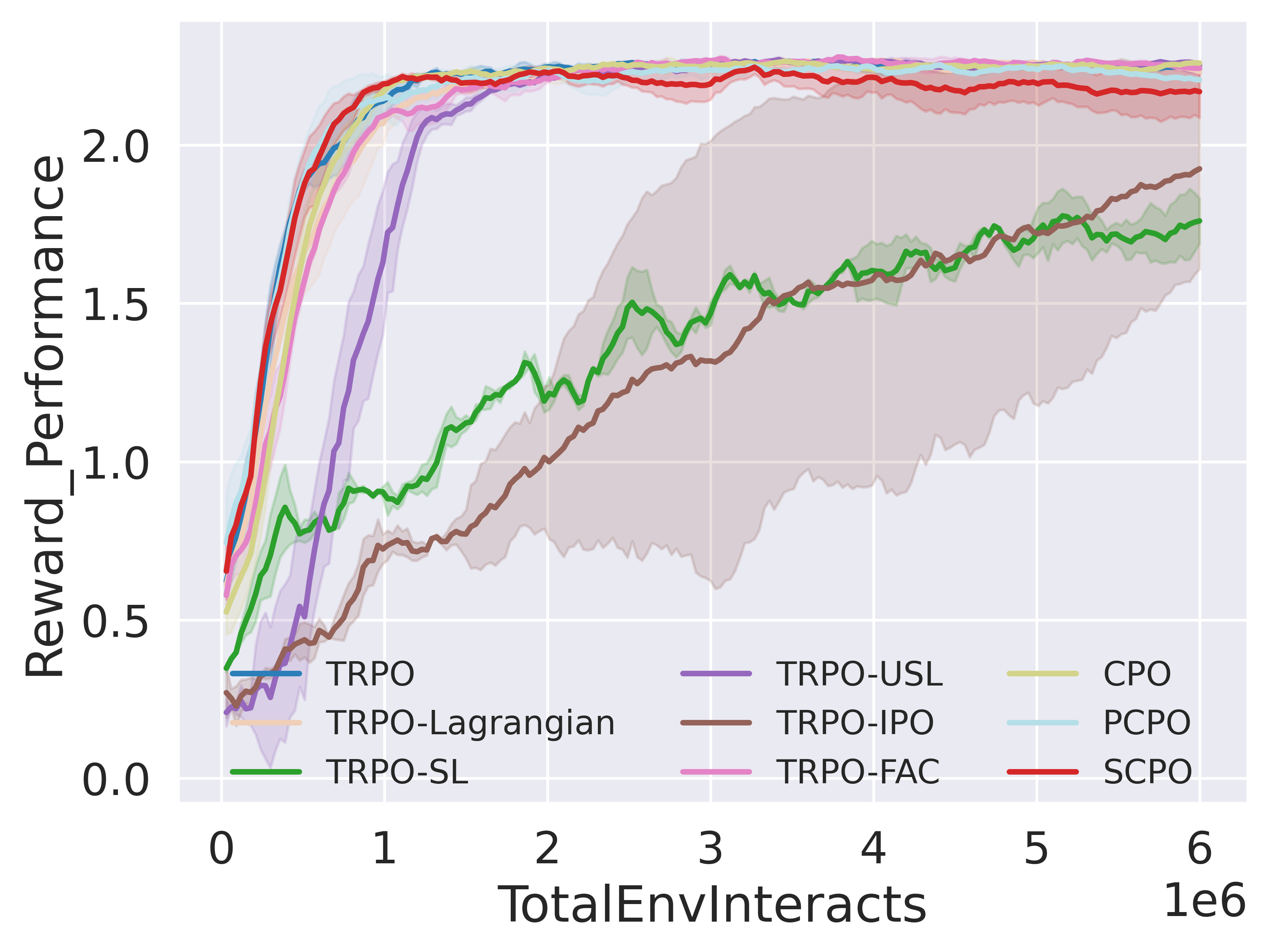}}
        \label{fig:Arm3-hazard-8-Performance-main}
    \end{subfigure}
    \hfill
    \begin{subfigure}[t]{1.00\textwidth}
        \raisebox{-\height}{\includegraphics[width=0.9\textwidth]{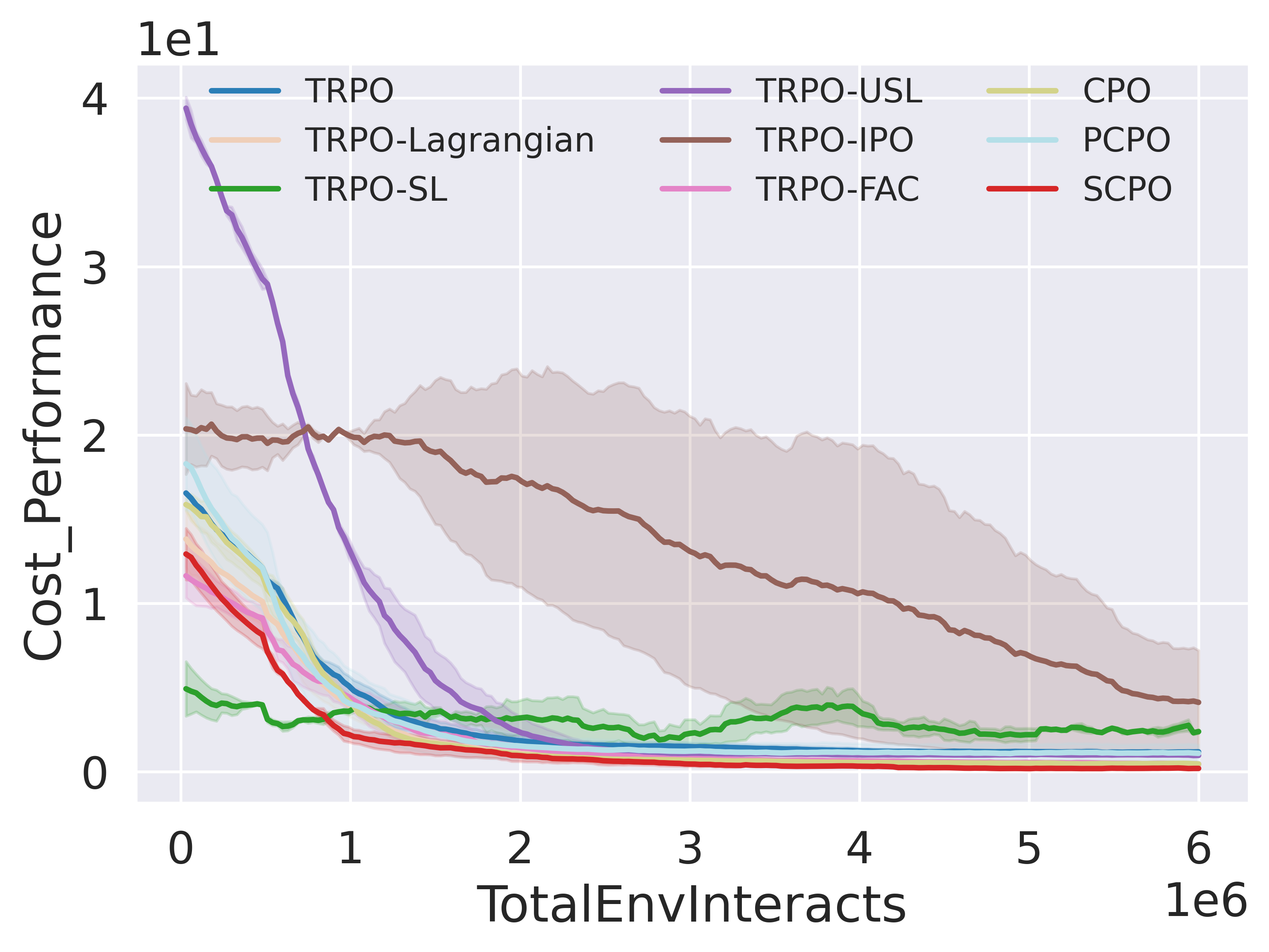}}
    \label{fig:Arm3-hazard-8-AverageEpCost-main}
    \end{subfigure}
    \hfill
    \begin{subfigure}[t]{1.00\textwidth}
        \raisebox{-\height}{\includegraphics[width=0.9\textwidth]{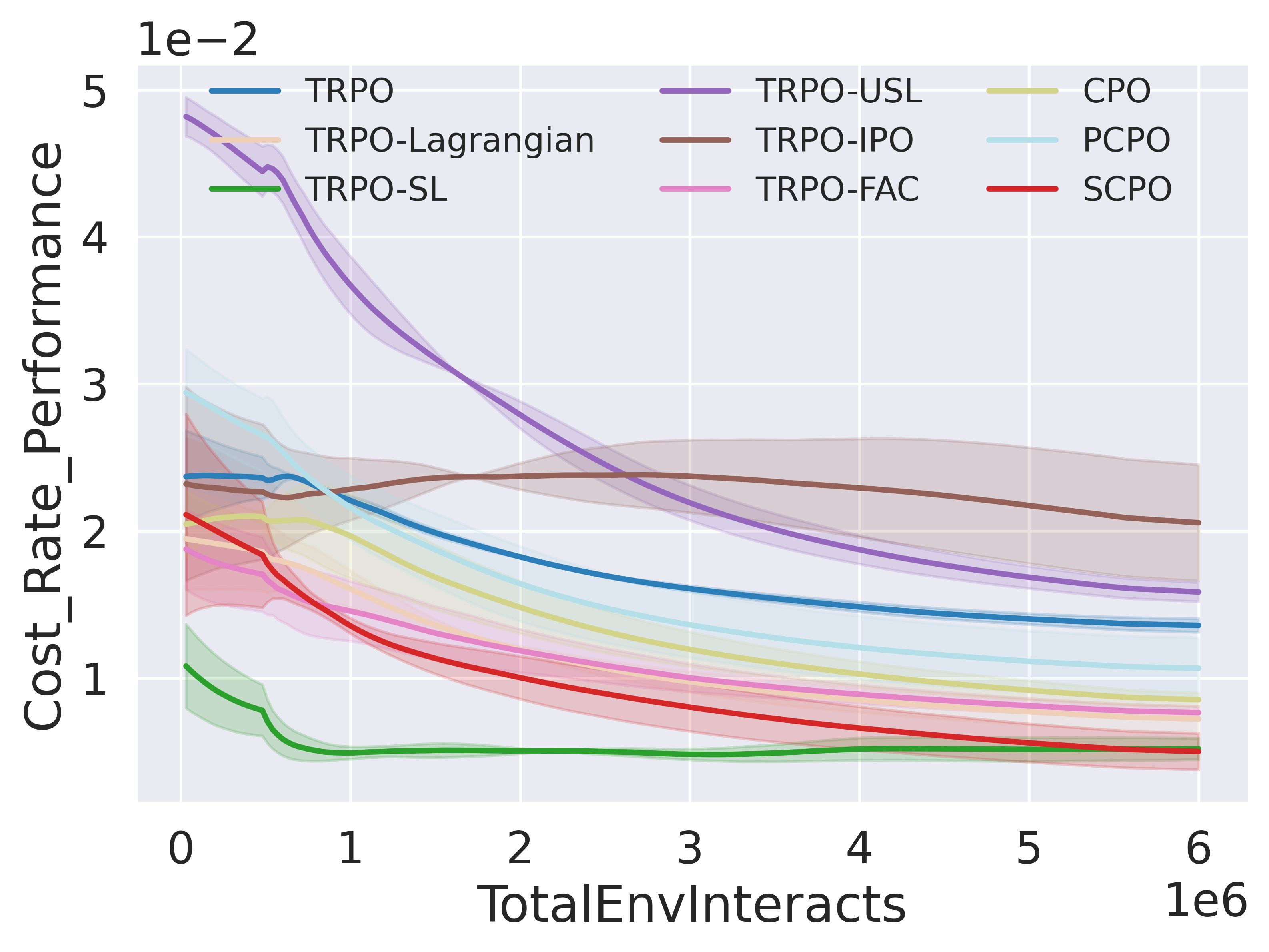}}
    \label{fig:Arm3-hazard-8-CostRate-main}
    \end{subfigure}
    \caption{Arm3-Hazard-8}
    \label{fig:Arm3-3Dhazard-8-main}
    \end{subfigure}
    \begin{subfigure}[t]{0.29\textwidth}
    \begin{subfigure}[t]{1.00\textwidth}
        \raisebox{-\height}{\includegraphics[width=0.9\textwidth]{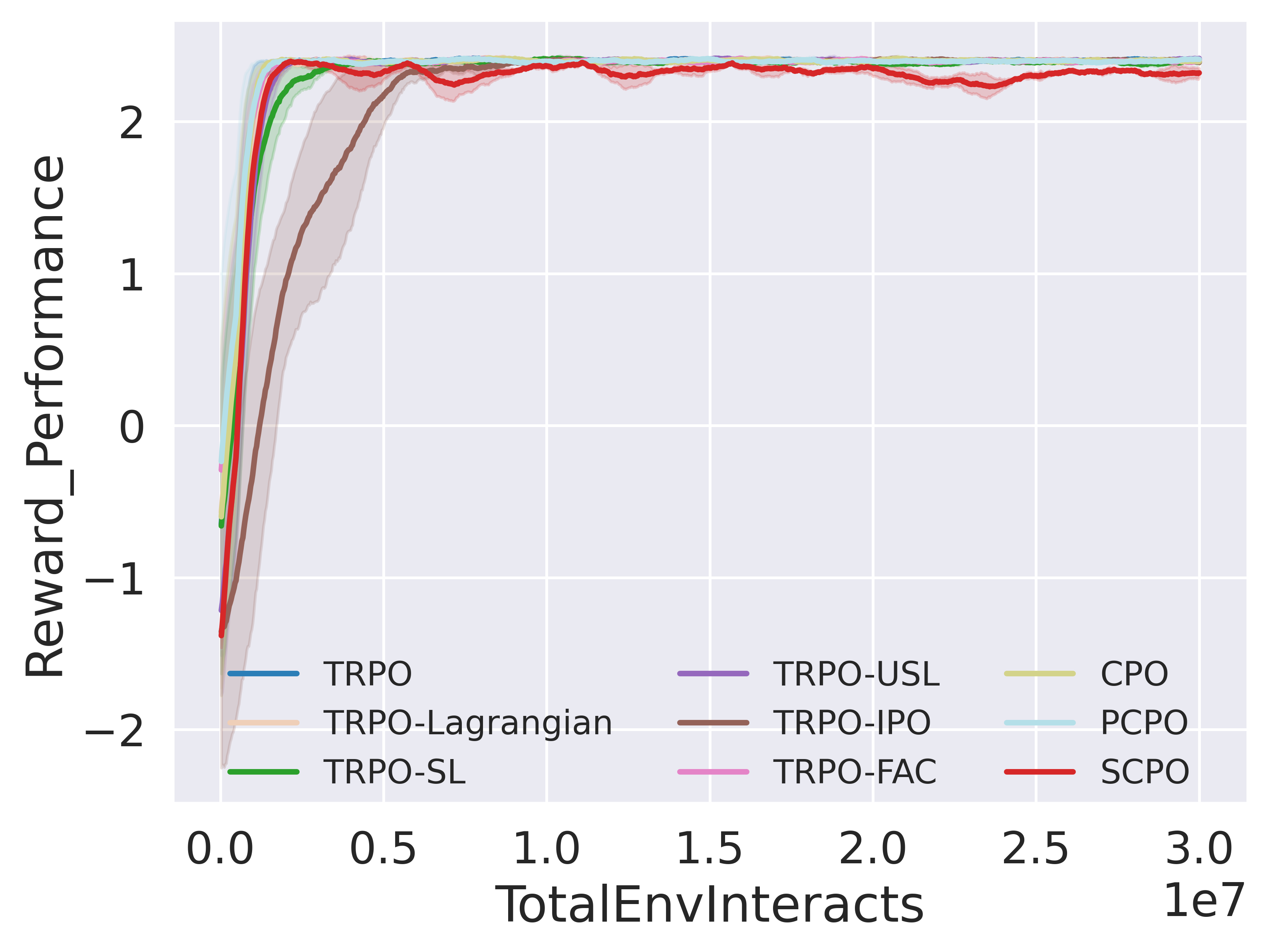}}
        \label{fig:humanoid-hazard-8-Performance-main}
    \end{subfigure}
    \hfill
    \begin{subfigure}[t]{1.00\textwidth}
        \raisebox{-\height}{\includegraphics[width=0.9\textwidth]{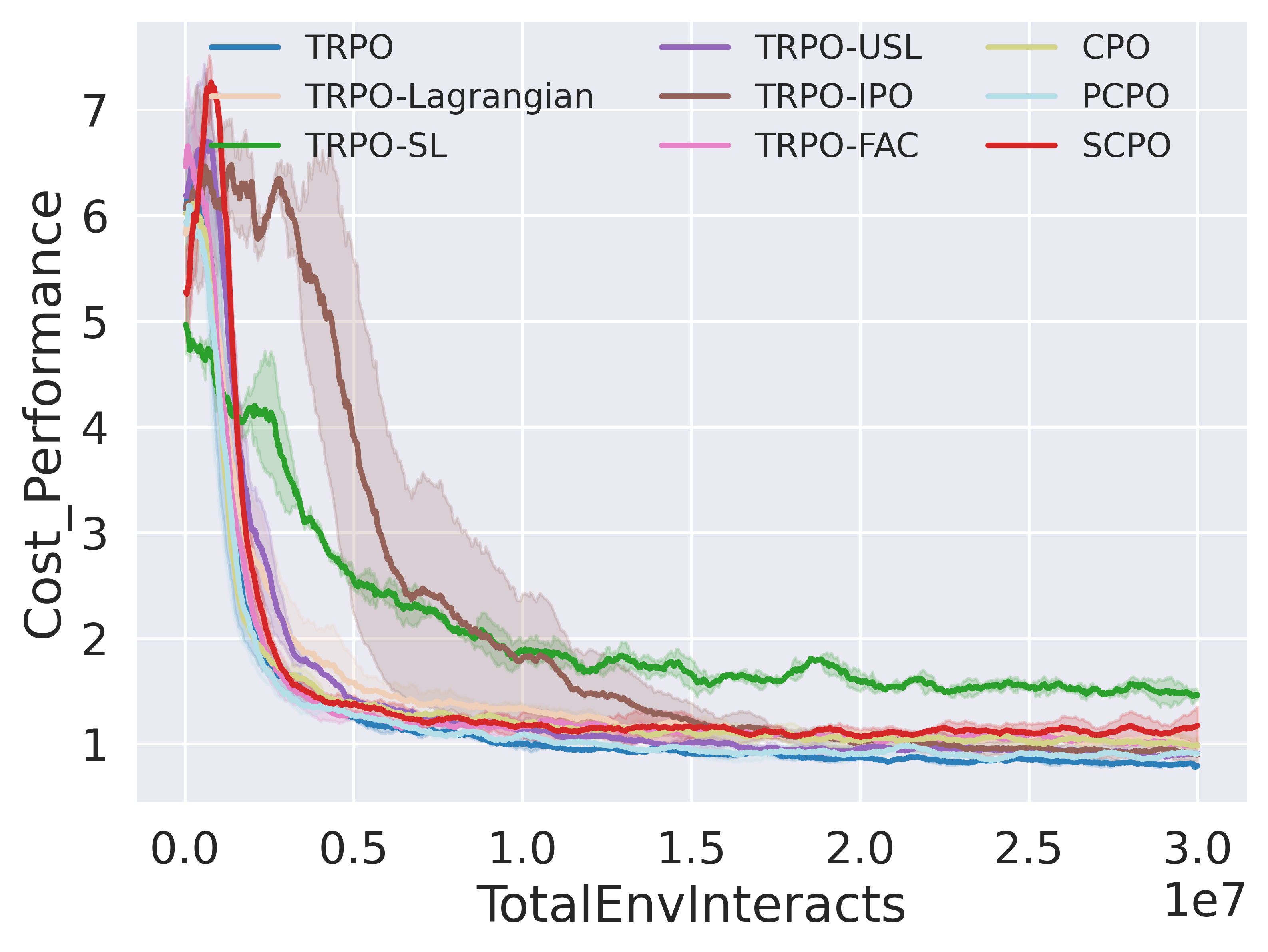}}
    \label{fig:humanoid-hazard-8-AverageEpCost-main}
    \end{subfigure}
    \hfill
    \begin{subfigure}[t]{1.00\textwidth}
        \raisebox{-\height}{\includegraphics[width=0.9\textwidth]{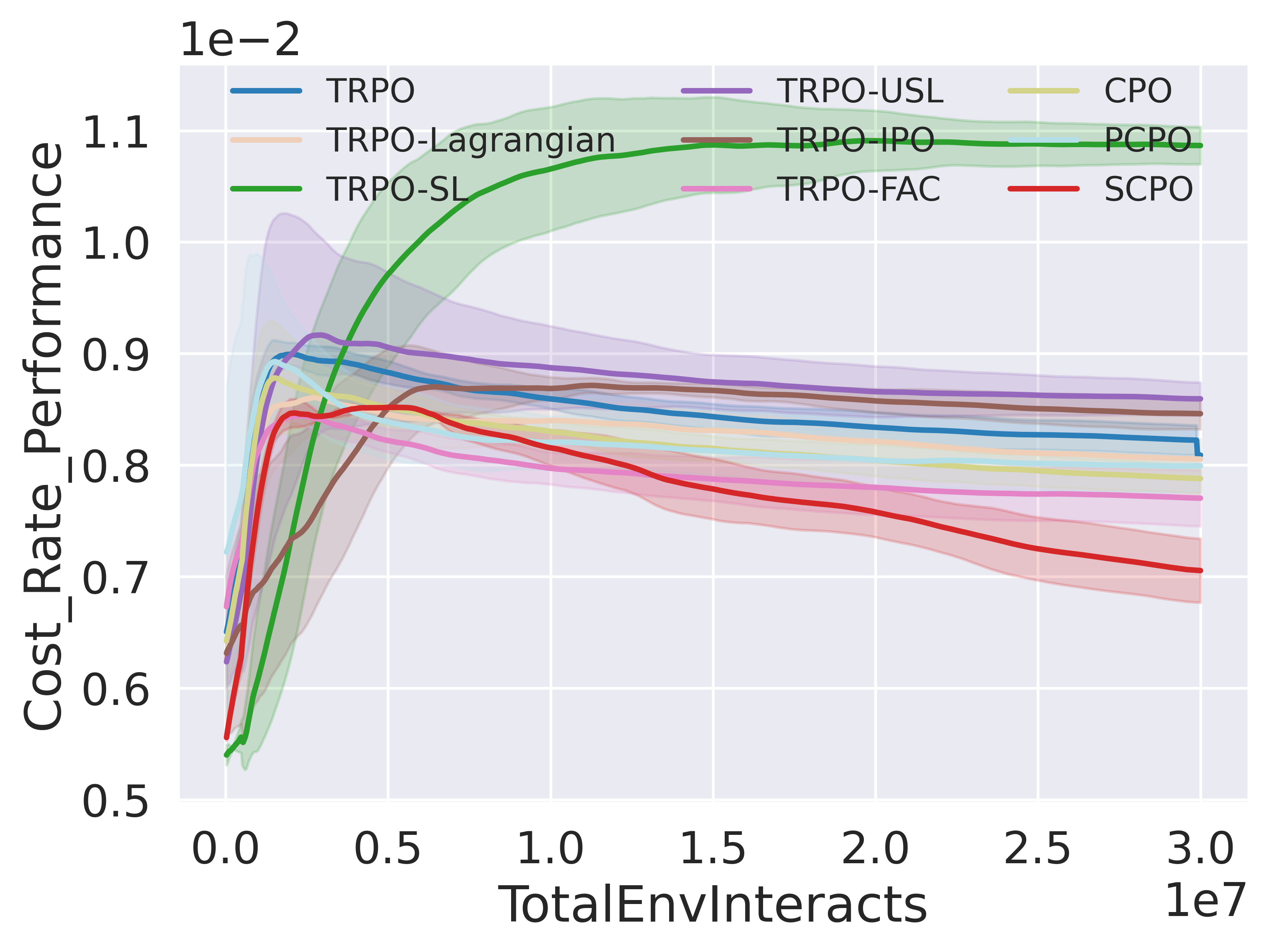}}
    \label{fig:humanoid-hazard-8-CostRate-main}
    \end{subfigure}
    \caption{Humanoid-Hazard-8}
    \label{fig:humanoid-hazard-8-main}
    \end{subfigure}
    \end{subfigure}
    \caption{\red{Comparison of results from (i) four representative test suites in low dimensional systems (Point, Swimmer, Drone), (ii) Arm reaching, and (iii) Humanoid locomotion.}}
    \label{fig:comparison results-main}
\end{figure}

\paragraph{Comparison Group}
The methods in the comparison group include: (i) unconstrained RL algorithm TRPO~\citep{schulman2015trust}  (ii) end-to-end constrained safe RL algorithms CPO~\citep{achiam2017cpo}, TRPO-Lagrangian~\citep{bohez2019value}, TRPO-FAC~\citep{ma2021feasible}, TRPO-IPO~\citep{liu2020ipo}, PCPO~\citep{yang2020projection}, and (iii) hierarchical safe RL algorithms TRPO-SL (TRPO-Safety Layer)~\citep{dalal2018safe}, TRPO-USL (TRPO-Unrolling Safety Layer)~\citep{zhang2022saferl}.
We select TRPO as our baseline method since it is state-of-the-art and already has safety-constrained derivatives that can be tested off-the-shelf.
For hierarchical safe RL algorithms, 
we employ a warm-up phase ($1/3$ of the whole epochs) which does unconstrained TRPO training, and the generated data will be used to pre-train the safety critic for future epochs.
For all experiments, 
the policy $\pi$, the value $(V^\pi, V_{D}^\pi)$ are all encoded in feedforward neural networks using two hidden layers of size (64,64) with tanh activations. More details are provided in \Cref{sec: experiment details}.

\paragraph{Evaluation Metrics}
For comparison, we evaluate algorithm performance based on \red{(i) \textbf{reward performance}, (ii) \textbf{average episode cost} and (iii) \textbf{cost rate} (state-wise cost)}. Comparison metric details are provided in \Cref{sec:metrics}. 
We set the limit of cost to 0 for all the safe RL algorithms since we aim to avoid any violation of the constraints.
For our comparison, we implement the baseline safe RL algorithms exactly following the policy update / action correction procedure from the original papers. We emphasize that in order for the comparison to be fair, we give baseline safe RL algorithms every advantage that is given to SCPO, including equivalent trust region policy updates.

\subsection{Evaluating SCPO and Comparison Analysis}
\label{sec: evaluating results}

\paragraph{Low Dimension System}
\begin{wrapfigure}{r}{0.3\textwidth}
    \vspace{-10pt}
    \centering
    \raisebox{-\height}{\includegraphics[width=0.3\textwidth]{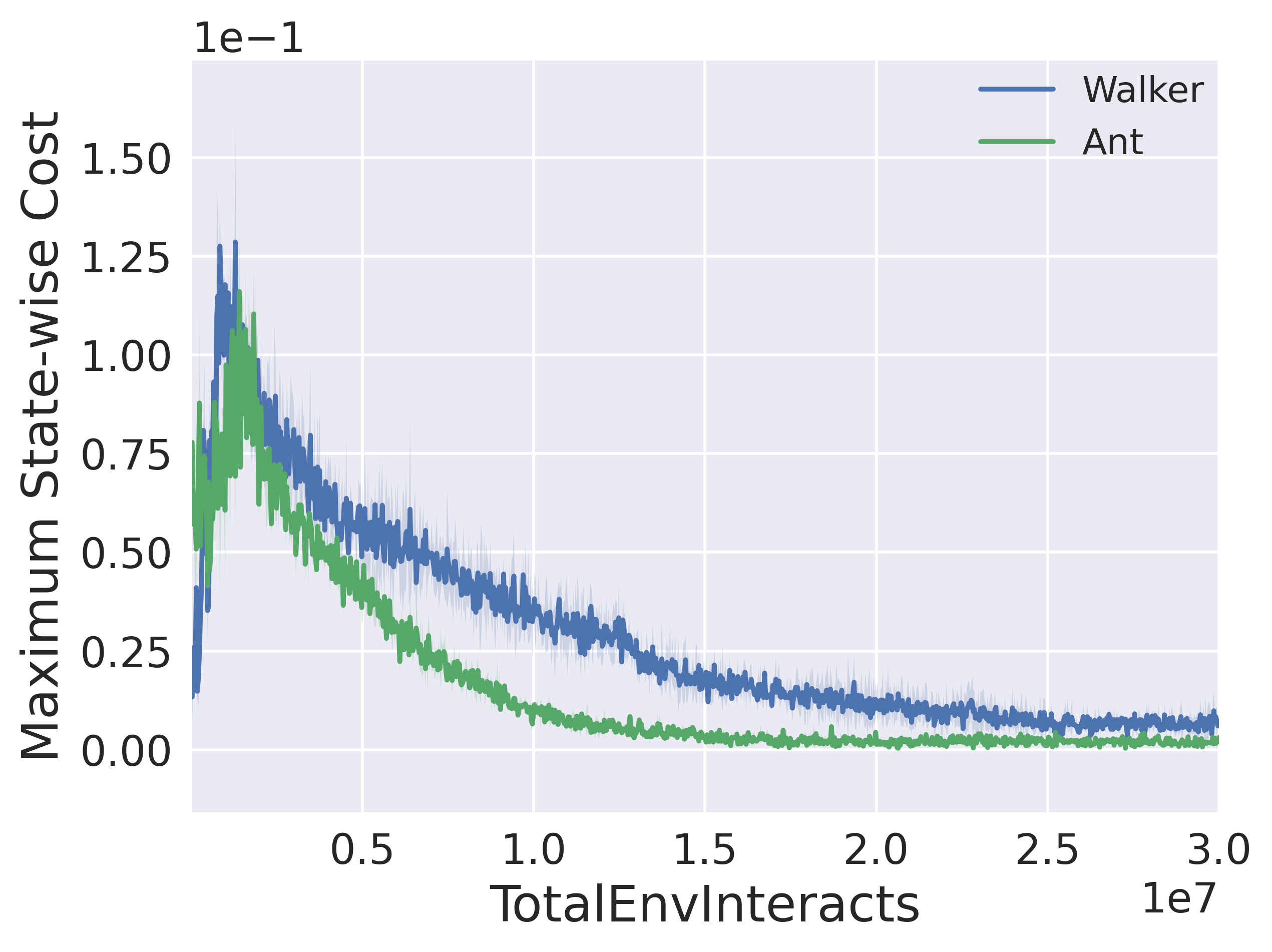}}
    \captionsetup{justification=centering} 
    \caption{\\Maximum state-wise cost} 
    \label{fig:exp-max-cost-performance-main}
    \vspace{-10pt}
\end{wrapfigure}
We select four representative test suites on low dimensional system (Point, Swimmer, Drone) and summarize the comparison results on \Cref{fig:comparison results-main}, which demonstrate that SCPO is successful at approximately enforcing zero constraints violation safety performance in all environments after the policy converges. Specifically, compared with the baseline safe RL methods, SCPO is able to achieve (i) near zero average episode cost and (ii) significantly lower cost rate without sacrificing reward performance. The baseline end-to-end safe RL methods (TRPO-Lagrangian, TRPO-FAC, TRPO-IPO, CPO, PCPO) fail to achieve the near zero cost performance even when the cost limit is set to be 0. The baseline hierarchical safe RL methods (TRPO-SL, TRPO-USL) also fail to achieve near zero cost performance even with an explicit safety layer to correct the unsafe action at every time step. End-to-end safe RL algorithms fail since all methods rely on CMDP to minimize the discounted cumulative cost while SCPO directly work with MMDP to restrict the state-wise maximum cost by \Cref{prop: scpo cost guarantee}. We also observe that TRPO-SL fails to lower the violation during training, due to the fact that the linear approximation of cost function $ C({\hat s}_t, a, {\hat s}_{t+1})$~\citep{dalal2018safe} becomes inaccurate when the dynamics are highly nonlinear like the ones we used in MuJoCo~\citep{todorov2012mujoco}.  More detailed metrics for comparison and experimental results on test suites with low dimension systems are summarized in \Cref{sec:metrics}.

\paragraph{High Dimension System}
To demonstrate the scalability and performance of SCPO in high-dimensional systems, we conducted tests on the Ant-Hazard-8 Walker-Hazard-8 suites, with 8-dimensional and 10-dimensional control spaces, respectively. The comparison results for high-dimensional systems are summarized in Figure \ref{fig:exp-2-comparison-main}, which show that SCPO outperforms all other baselines in enforcing zero safety violation without compromising performance in terms of return. SCPO rapidly stabilizes the cost return around zero and significantly reduces the cost rate, while the other baselines fail to converge to a policy with near-zero cost.

\red{
Furthermore, we tackled more scenarios involving robot arm goal reaching and humanoid locomotion. The comparative results for these tasks are detailed in \Cref{fig:Arm3-3Dhazard-8-main} and \Cref{fig:humanoid-hazard-8-main}, respectively. Notably, SCPO consistently achieves the lowest cost rate (state-wise cost) in both  assignments. It's essential to highlight an observation in the humanoid task: while the episodic cost is higher compared to several baseline methods, the cost rate is the most favorable. This discrepancy arises because SCPO intentionally takes more cautious paths around hazards to ensure safety, leading to an increased number of time steps per episode. Consequently, although the state-wise cost is minimized, the average episodic cost rises due to the longer average episodic horizon.
}
\begin{wrapfigure}{r}{0.45\textwidth}
    \vspace{-10pt}
    \centering
    \begin{subfigure}[b]{0.22\textwidth}
        \begin{subfigure}[t]{1.00\textwidth}
        \raisebox{-\height}{\includegraphics[width=\textwidth]{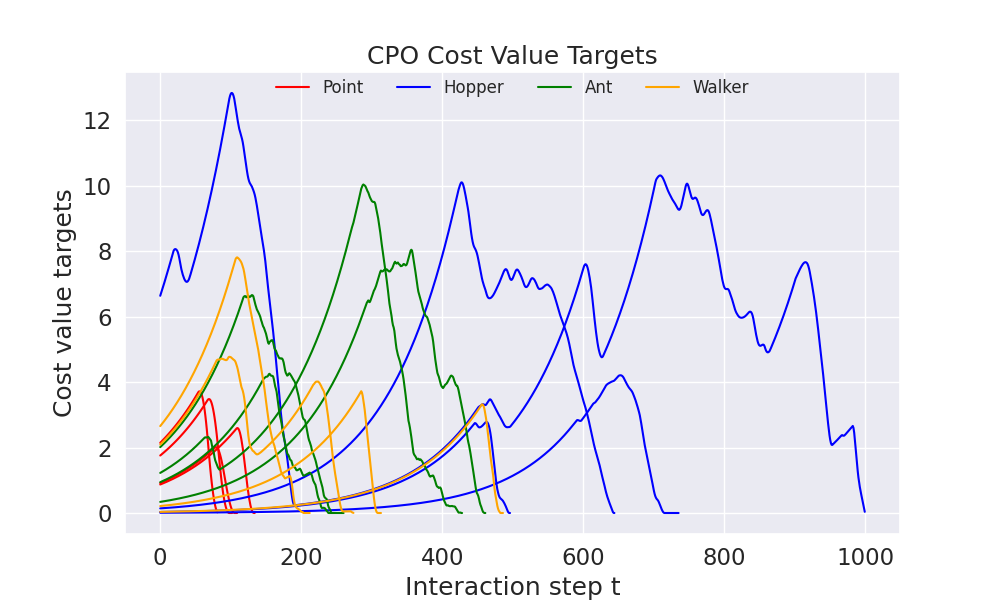}}
        \end{subfigure}
    \end{subfigure}
    \hfill
    \begin{subfigure}[b]{0.22\textwidth}
        \begin{subfigure}[t]{1.00\textwidth}
        \raisebox{-\height}{\includegraphics[width=\textwidth]{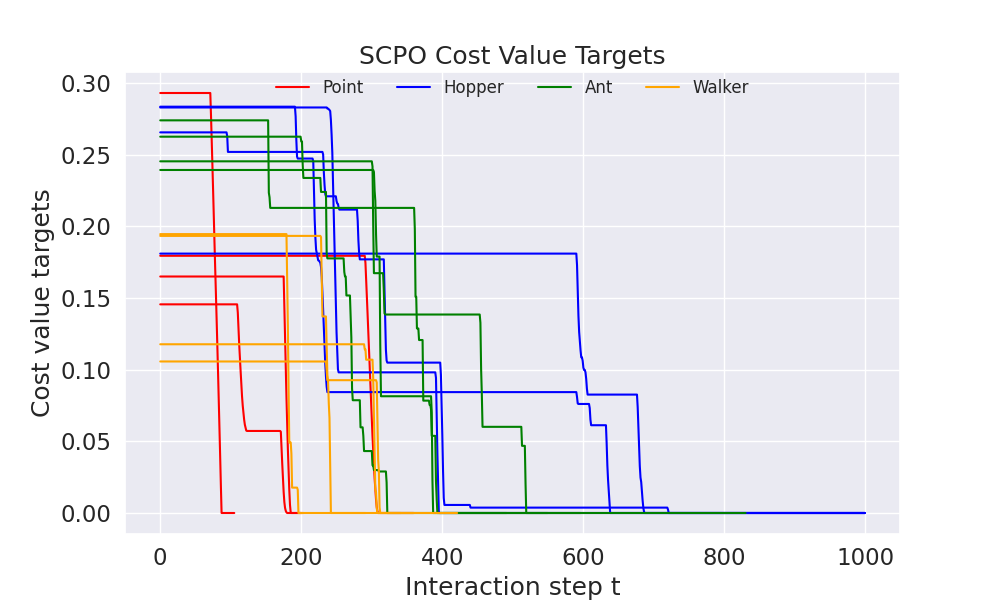}}
        \end{subfigure}
    \end{subfigure}
    \caption{Cost value function target of  five randomly sampled episode of different tasks} 
    \label{fig:cost value function target}
    \vspace{-30pt}
\end{wrapfigure}

The comparison results of both low dimension and high dimension systems answer \textbf{Q1}.

\paragraph{Maximum State-wise Cost}
As pointed in \Cref{sec: MMDP}, the underlying magic for enabling near-zero safety violation is to restrict the maximum state-wise cost to stay around zero. To have a better understanding of this process, we visualize the evolution of maximum state-wise cost for SCPO on the challenging high-dimensional Ant-Hazard-8 and Walker-Hazard-8 test suites in \Cref{fig:exp-max-cost-performance-main} 
, which answers \textbf{Q2}. \red{Within \Cref{fig:exp-max-cost-performance-main}, each data point is obtained by averaging the maximum state-wise cost across all episodes within the current epoch. To facilitate better comparisons with other works, we consistently employ the cost rate as a metric to illustrate the state-wise cost performance in our results.}

\paragraph{Ablation on Sub-sampling Imbalanced Cost Increment Value Targets}

\red{One critical step in SCPO involves learning the cost increments value functions $V_{D_i}^{\pi_k}({\hat s}_t)$. These functions represent the maximum cost increment in any future state relative to the maximal state-wise cost encountered so far. In simpler terms, $V_{D_i}^{\pi_k}({\hat s}_t)$ is a non-increasing step function. In particular, it resembles a staircase, with many steps locating at zero after the point where the maximum state-wise cost over the trajectory has been reached. This characteristic gives the cost increment value target a skewed distribution with a high frequency of zeros, as illustrated in \Cref{fig:cost value function target}.}
\begin{wrapfigure}{r}{0.45\textwidth}
    \centering
    \begin{subfigure}[b]{0.22\textwidth}
        \begin{subfigure}[t]{1.00\textwidth}
        \raisebox{-\height}{\includegraphics[width=\textwidth]{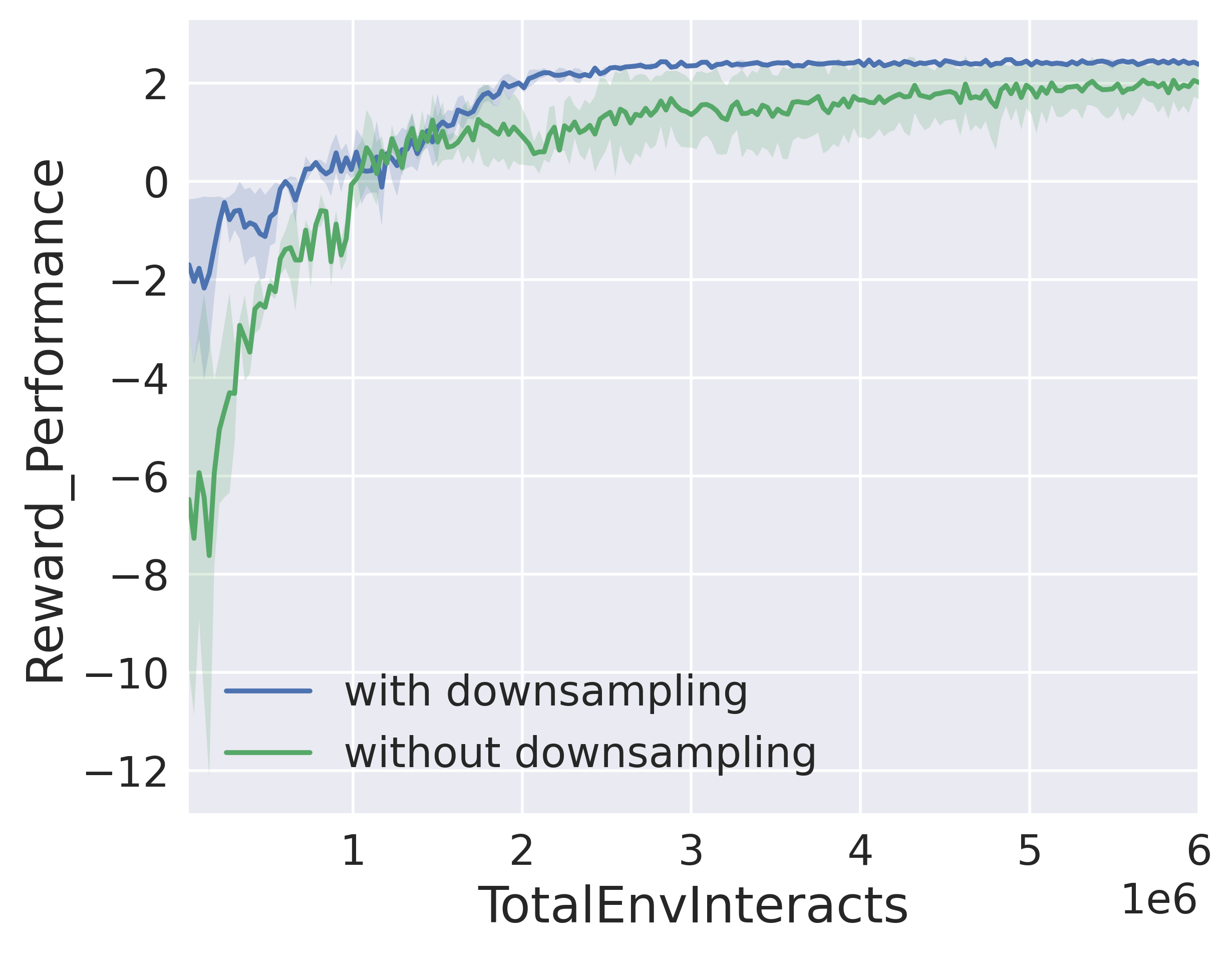}}
        \end{subfigure}
    \end{subfigure}
    \hfill
    \begin{subfigure}[b]{0.22\textwidth}
        \begin{subfigure}[t]{1.00\textwidth}
        \raisebox{-\height}{\includegraphics[width=\textwidth]{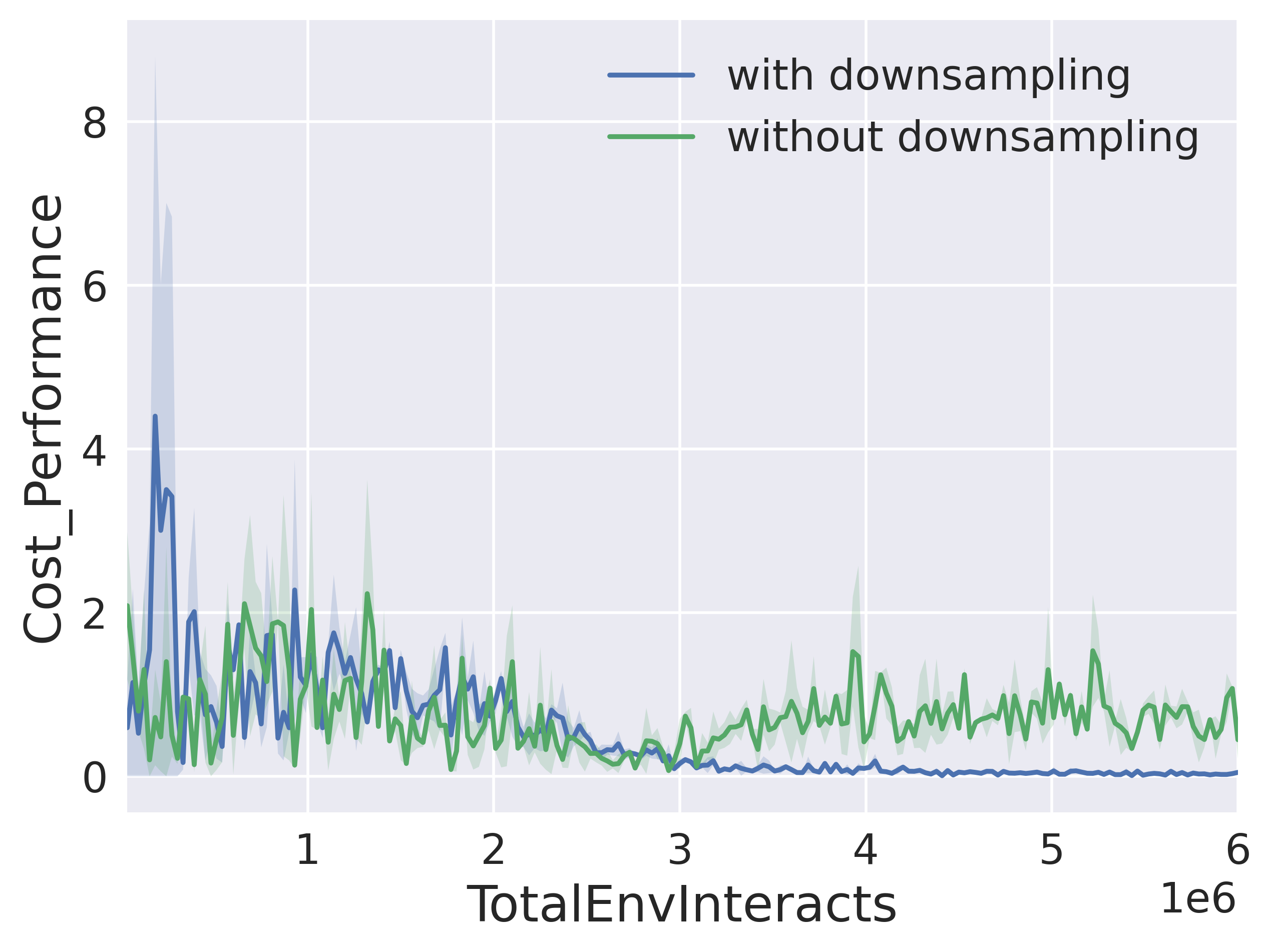}}
        \end{subfigure}
    \end{subfigure}
    \caption{SCPO sub-sampling ablation study with Drone-3DHazard-8} 
    \label{fig: comp sub sampling}
    \vspace{-10pt}
\end{wrapfigure}

paraSub-sampling is employed to bring down the zero targets population to match the population of non-zero targets. This adjustment is unique to SCPO, as other safe RL baselines try to fit a cost value function $V_{C_i}^{\pi_k}({s}_h) = \mathbb{E}_{\tau \sim \pi_k}\big[ \sum_{t=h}^H \gamma^{t-h}C_i({s}_t, a_t, {s}_{t+1}) \big]$. In these baselines, the population of zero-cost value targets (only after encountering the last cost) is usually much smaller than the population of non-zero targets, rendering sub-sampling unnecessary. We also visualize $V_{C_i}^{\pi_k}({s}_h)$ targets in the left hand side of \Cref{fig:cost value function target}.  Consequently, applying sub-sampling to reduce the zero $V_{C_i}^{\pi_k}({\hat s}_t)$ target population is irrelevant and has no impact on their performance.

\begin{wrapfigure}{r}{0.3\textwidth}
    \centering
    \raisebox{-\height}{\includegraphics[width=0.3\textwidth]{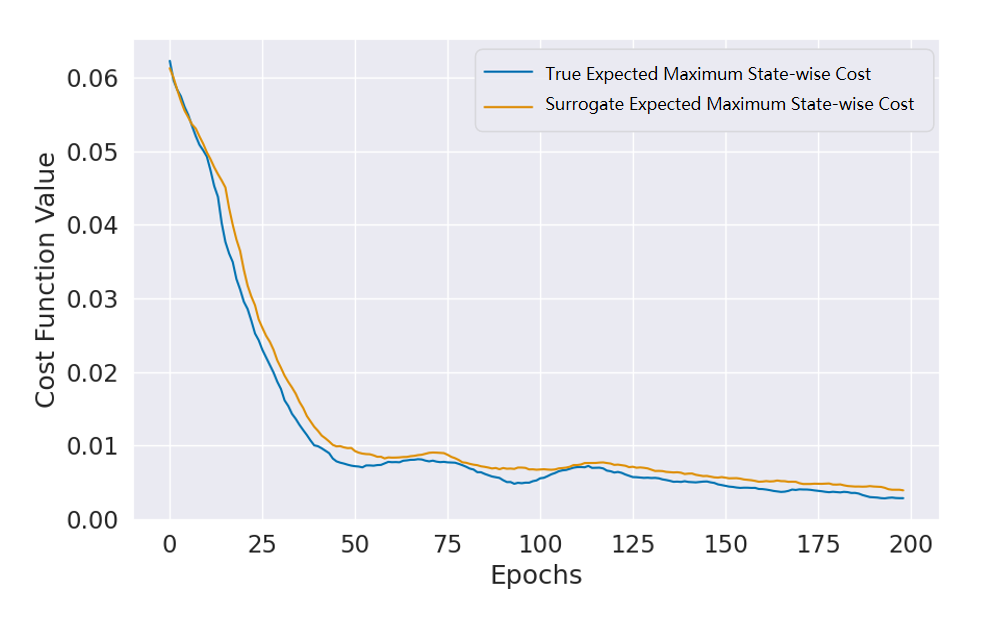}}
    \captionsetup{justification=centering} 
    \caption{Visualization of true cost function and surrogate function} 
    \label{fig:surrogate function bound}
    \vspace{-20pt}
\end{wrapfigure}



Next, to demonstrate the necessity of sub-sampling for solving this challenge, we compare the performance of SCPO with and without sub-sampling trick on the aerial robot test suite, summarized in \Cref{fig: comp sub sampling}.
It is evident that with sub-sampling, the agent achieves higher rewards and more importantly, converges to near-zero costs.
That is because sub-sampling effectively balances the cost increment value targets and improves the fitting of $V_{D_i}^{\pi_k}({\hat s}_t)$.
We also attempted to solve the imbalance issue via over-sampling non-zero targets, but did not observe promising results.
This ablation study provides insights into \textbf{Q3}.

\paragraph{Tightness of State-wise Maximum Cost Bound}

 To assess the tightness of the state-wise maximum cost bound in \eqref{eq: state wise cost bound} (surrogate cost function in \eqref{eq: scpo optimization final}), we track and visualize the true values and surrogate values (bound estimates using the policy from the previous iteration) of $\mathcal{J}_D$ throughout policy training in the Point-Hazard-8 testing suite, as depicted in \Cref{fig:surrogate function bound}.  
Due to approximation errors, a minor overlap between true values and surrogate values is noticeable at the initial stages of training. However, this overlap is quickly resolved, and thereafter, the surrogate cost consistently functions as a precise upper bound for the true value. The deviation, approximately 1e-3, is remarkable considering the scale of the true value, which ranges from 1e-2 to 5e-2.
 This observation underscores the good tightness in the theoretical state-wise maximum cost bound and answers \textbf{Q4}.

\begin{figure}
    \centering

    \begin{subfigure}[b]{0.99\textwidth}
        \includegraphics[width=\linewidth]{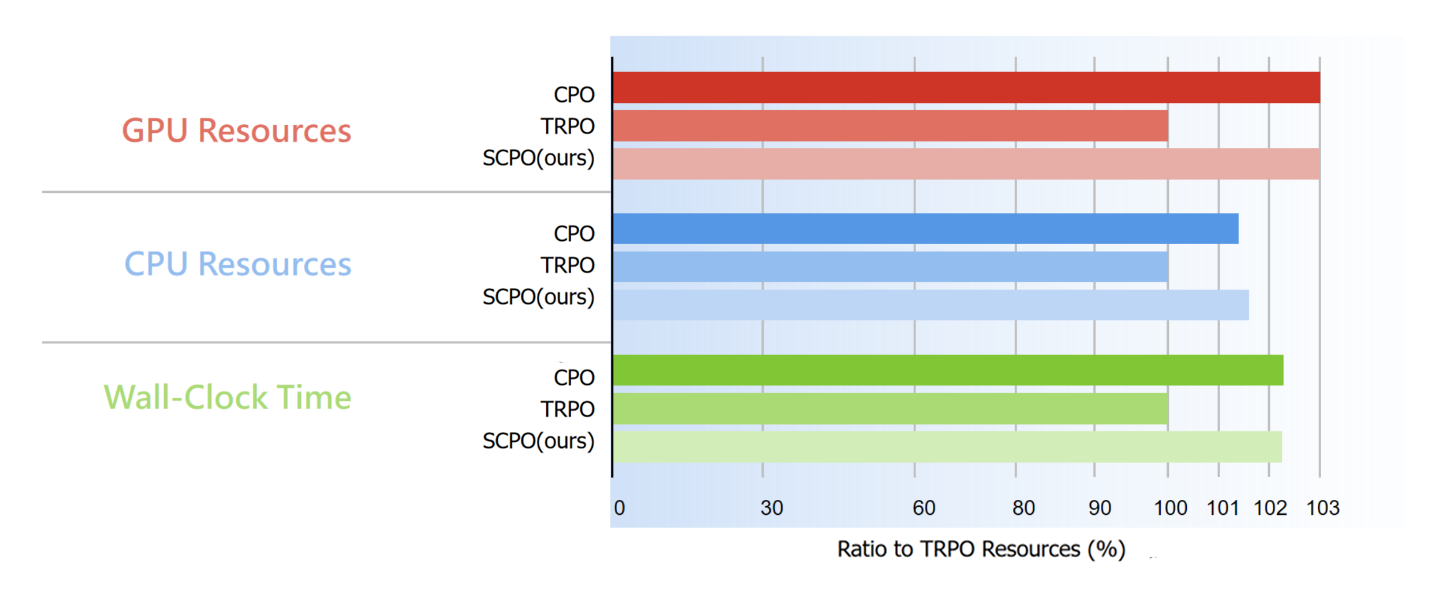}
        \label{fig:subfigA}
    \end{subfigure}
    \vspace{-15pt}
    \caption{Resource usage compared to other algorithms under the Goal-Hazard-8 task, where TRPO is set as the baseline}
    \label{fig:resources}
    \vspace{-10pt}
\end{figure}

\paragraph{Resources Usage Situation}

{
    We conduct tests comparing GPU and CPU memory usage, as well as wall-clock time, for CPO, TRPO, and SCPO, all allocated with identical system resources in the Goal-Hazard-8 task. As illustrated in \Cref{fig:resources}, it is observed that CPO and SCPO exhibit nearly identical GPU occupancy and time consumption, with SCPO utilizing slightly more CPU resources. Notably, when considering the scale change on the horizontal axis, the three algorithms demonstrate comparable performance across all three metrics. This suggests that our algorithm achieves improved performance without noticeable increased system load or consuming additional time, affirming its superior overall efficiency. The results provide answer to \textbf{Q5}.
}
\section{Conclusion and Future Work}
This paper proposed SCPO, the first general-purpose policy search algorithm for state-wise constrained RL. Our approach provides guarantees for state-wise constraint satisfaction at each iteration, allows training of high-dimensional neural network policies while ensuring policy behavior, and is based on a new theoretical result on Maximum Markov {decision process}. We demonstrate SCPO's effectiveness on robot locomotion tasks, showing its significant performance improvement compared to existing methods and ability to handle state-wise constraints. 

\paragraph{Limitation and future work}
One limitation of our work is that, although SCPO satisfies state-wise constraints, the theoretical results are valid only in expectation, meaning that constraint violations are still possible during deployment.
To address that, we will study absolute state-wise constraint satisfaction, i.e. bounding the \textit{maximal possible} state-wise cost, which is even stronger than the current result (satisfaction in expectation).

\bibliography{main}
\bibliographystyle{tmlr}

\newpage 
\appendix
\tmlr{
\section{Complexity analysis for SCMDP}
\label{appendix: complexity of scmdp}
The complete form of \eqref{eq: fundamental problem} is: 
\begin{align}
\label{eq: complete form}
    \underset{\pi}{\textbf{max}}~\mathcal{J}_0(\pi), ~\textbf{s.t.}
    ~\forall \big((s_t, a_t, s_{t+1}) \sim \tau, \tau\sim \pi\big), ~\forall i \red{ \in 1,\cdots ,m},~\mathbb{E}\big[C_i(s_t, a_t, s_{t+1})\big] \leq w_i,
\end{align}
}

\tmlr{
where each state-action transition pair corresponds to a constraint. Consider there's only one constraint function $C_1$, \eqref{eq: complete form} is transformed as:  
\begin{align}
\label{eq:nonlinear program}
    &\underset{\pi}{\textbf{max}}~\mathcal{J}_0(\pi), \nonumber\\ 
    & \text{s.t.} ~~~~\mathcal{G}_1(\pi) \doteq \underset{\substack{(s_0, a_0, s_{1})\sim \tau \\ \tau \sim \pi}}{\mathbb{E}}\big[C_1(s_0, a_0, s_{1})\big] - w_1 \leq 0 \nonumber\\ 
    & ~~~~~~~~ \mathcal{G}_2(\pi) \doteq \underset{\substack{(s_1, a_1, s_{2})\sim \tau \\ \tau \sim \pi}}{\mathbb{E}}\big[C_1(s_1, a_1, s_{2})\big] - w_1 \leq 0 \nonumber\\  
    & ~~~~~~~~ \vdots \nonumber\\  
    & ~~~~~~~~ \mathcal{G}_H(\pi) \doteq \underset{\substack{(s_{H-1}, a_{H-1}, s_{H})\sim \tau \\ \tau \sim \pi}}{\mathbb{E}}\big[C_1(s_{H-1}, a_{H-1}, s_{H})\big] - w_1 \leq 0 ~~~.
\end{align}
}

\tmlr{
Suppose $\pi$ is parameterized by $\hat\theta \in \mathbb{R}^{n_\pi}$, With KKT conditions, 
 \eqref{eq:nonlinear program} can be optimized via solving the following program:
\begin{align}
\label{eq: kkt}
\begin{cases}
    \frac{\partial \mathcal{L}(\pi,\lambda)}{\partial \pi_i} = 0, i = 1, 2, \cdots, n_\pi\\
    \lambda_j \mathcal{G}_j(\pi) = 0, j = 1,2,\cdots, H \\ 
    \lambda_i \geq 0, j = 1,2,\cdots, H~~~,
\end{cases}
\end{align}
where $\mathcal{L}(\pi,\lambda) = \mathcal{J}_0(\pi) + \sum_{i=1}^{H} \lambda_i\mathcal{G}_i(\pi)$.
}

\tmlr{
To understand the time complexity of \eqref{eq: kkt}, we can treat $\mathcal{J}_0$ and $\mathcal{G}_i$ as linear functions with respect to $\pi$. So that \eqref{eq: kkt} represents a linear program, which can be solved by the fastest algorithm~\citep{cohen2021complexity} in time
\begin{align}
    O^*(((n_\pi + H)^\omega + (n_\pi + H)^{2.5 - \alpha/2} + (n_\pi + H)^{2 + 1/6})\log ((n_\pi + H) / \delta))
\end{align}
where $\omega$ is the exponent of matrix multiplicatoin, $\alpha$ is the dual exponent of matrix multiplication, and $\delta$ is the relative accuracy. For the current value of $\omega \sim 2.37$ and $\alpha \sim 0.31$, the state-of-the-art algorithm takes $O^*((n_\pi + H)^\omega \log ((n_\pi + H) / \delta))$ time~\citep{cohen2021complexity}.}

\tmlr{
Consider (i) there are multiple cost functions $C_i$, and (ii) $\mathcal{J}_0$ and $\mathcal{G}_i$ are nonlinear functions with respect to $\pi$, the complexity of solving \eqref{eq: complete form} with good accuracy, i.e. solving SCMDP, will be larger than $O^*((n_\pi + H)^2)$.
}

\section{Preliminaries}
\new{
To facilitate the proof of \Cref{theo: state wise cost}, we begin by establishing key preliminaries that underpin the foundations of finite-horizon variations of the performance improvement bound, considering the discounted sum nature. The subsequent section, \Cref{sec: proof for cost theorem}, will elucidate the policy improvement of finite-horizon undiscounted sum Markov {decision process} (MDP). Our preliminary groundwork draws inspiration from [Appendix 10.1, \cite{achiam2017cpo}], extending the theoretical framework for finite-horizon scenarios.}

\label{sec: prelim A1}
$\dot d^\pi$ we used is defined as
\begin{align}
    \dot d^\pi({\hat s})=\sum\limits_{t=0}^H\gamma^t P({\hat s}_t={\hat s}|\pi).
\end{align}

which has a little difference with $d^\pi$ and is used to ensure the continuity of function we used for proof later. Then it allows us to express the expected discounted total reward or cost compactly as:
\begin{align}
\label{eq: J def}
    \mathcal{J}_g(\pi)=\underset{\substack{{\hat s} \sim \dot d^\pi \\ a\sim {\pi}\\{\hat s}'\sim P}}{\operatorname{E}}\left[g({\hat s},a,{\hat s}')\right],
\end{align}

where by $a\sim \pi$, we mean $a \sim \pi(\cdot|{\hat s})$, and by ${\hat s}'\sim P$,we mean ${\hat s}'\sim P(\cdot|{\hat s},a)$. $g({\hat s},a,{\hat s}')$ represents the cost or reward function. We drop the explicit notation for the sake of reducing clutter, but it should be clear from context that $a$ and ${\hat s}'$ depend on ${\hat s}$. 

Define $P({\hat s}'|{\hat s},a)$ is the probability of transitioning to state ${\hat s}'$ given that the previous state was ${\hat s}$ and the agent took action $a$ at state ${\hat s}$, and $\hat \mu : \mathcal{\hat{\mathcal{S}}} \mapsto [0, 1]$ is the initial augmented state distribution. Let $p_\pi^t\in\mathbb{R}^{|\hat{\mathcal{S}}|} $ denote the vector with components $p_\pi^t({\hat s}) = P({\hat s}_t = {\hat s}|\pi)$, and let $P_{\pi} \in \mathbb{R}^{|\hat{\mathcal{S}}|\times|\hat{\mathcal{S}}|}$ denote the transition matrix with components $P_{\pi}({\hat s}'|{\hat s}) =\int P({\hat s}'|{\hat s},a)\pi(a|{\hat s}) da$; then $p_\pi^t=P_\pi p_\pi^{t-1}=P_\pi^t \hat \mu$ and
\begin{align}
\label{eq: d pi}
    \dot d^\pi& =\sum\limits_{t=0}^H(\gamma P_\pi)^t \hat \mu  \\ \nonumber
    &=(I-(\gamma P_\pi)^{H+1})(I-\gamma P_\pi)^{-1}\hat \mu \\ \nonumber
    &=(I-\gamma P_\pi)^{-1}\hat \mu
\end{align}

Noticing that the finite MDP ends up at step $H$, thus $(P_{\pi})^{H+1}$ should be set to zero matrix. \\

This formulation helps us easily obtain the following lemma.
\begin{lemma}[] 
\label{lem: lemma 1}
For any function $f:\hat{\mathcal{S}}\mapsto\mathbb{R}$ and any policy $\pi$,
\begin{align}
    \underset{\hat s\sim \hat \mu}{\operatorname E}[f(\hat s)]+\underset{\substack{\hat s \sim \dot d^\pi \\ a\sim {\pi}\\ \hat s'\sim P}}{\operatorname E}[\gamma f(\hat s')]-\underset{\hat s\sim \dot d^\pi}{\operatorname E}[f(\hat s)]=0.
\end{align}
\end{lemma}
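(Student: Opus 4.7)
The plan is to reduce the identity to a linear-algebraic statement about the augmented-state distribution $\dot d^\pi$, and then invoke the finite-horizon recursion that is implicit in \eqref{eq: d pi}. First I would rewrite each of the three expectations as an inner product against the state-probability vector. Viewing $f$ as a vector in $\mathbb{R}^{|\hat{\mathcal{S}}|}$, the first term is $\langle f,\hat\mu\rangle$, the third is $\langle f,\dot d^\pi\rangle$, and for the middle one I marginalize over $a\sim\pi(\cdot|\hat s)$ and $\hat s'\sim P(\cdot|\hat s,a)$ to obtain $\gamma\langle f, P_\pi\dot d^\pi\rangle$, where $P_\pi$ is the policy-induced transition matrix defined in the preliminaries.

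Next, combining these three pieces, the claim reduces to
\begin{align}
\langle f,\;\hat\mu + \gamma P_\pi \dot d^\pi - \dot d^\pi\rangle = 0,
\end{align}
and since this must hold for every $f:\hat{\mathcal S}\mapsto\mathbb{R}$, it suffices to prove the vector identity $\dot d^\pi = \hat\mu + \gamma P_\pi\dot d^\pi$. This is precisely the fixed-point relation underlying the closed form \eqref{eq: d pi}. I would derive it directly from the series definition: shifting the index in $\dot d^\pi=\sum_{t=0}^{H}(\gamma P_\pi)^t\hat\mu$ gives
\begin{align}
\gamma P_\pi\dot d^\pi = \sum_{t=1}^{H+1}(\gamma P_\pi)^t\hat\mu,
\end{align}
and then I would invoke the finite-horizon convention $(P_\pi)^{H+1}=0$ noted right after \eqref{eq: d pi} to drop the $t=H+1$ term, yielding $\gamma P_\pi\dot d^\pi=\sum_{t=1}^{H}(\gamma P_\pi)^t\hat\mu=\dot d^\pi-\hat\mu$, which is the required identity.

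The main technical subtlety is precisely this boundary handling: unlike the infinite-horizon case where $(I-\gamma P_\pi)^{-1}\hat\mu$ is a genuine Neumann series and the Bellman-flow identity is automatic, in our finite-horizon MMDP one has to justify that truncating at $H$ does not break the recursion, and this rests on the episodic convention that no probability mass survives beyond step $H$. Once this is in place the rest of the proof is a one-line expansion of expectations and a verification that the resulting inner product vanishes.
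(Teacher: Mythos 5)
Your proof is correct and follows essentially the same route as the paper: the paper's one-line argument (multiply \eqref{eq: d pi} by $(I-\gamma P_\pi)$ and take the inner product with $f$) is exactly the fixed-point identity $\dot d^\pi=\hat\mu+\gamma P_\pi\dot d^\pi$ that you derive. Your version is slightly more careful in that it obtains this identity directly from the truncated series and makes explicit the role of the episodic convention $(P_\pi)^{H+1}=0$, which the paper leaves implicit in its use of the closed form $(I-\gamma P_\pi)^{-1}\hat\mu$.
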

\begin{proof}
    Multiply both sides of \eqref{eq: d pi} by $(I-\gamma P_\pi)$ and take the inner product with the vector $f\in\mathbb{R}^{|\hat{\mathcal{S}}|}$.
\end{proof}

Combining \Cref{lem: lemma 1} with \eqref{eq: J def}, we obtain the following, for any function $f$ and any policy $\pi$:
\begin{align}
\label{eq: J pi def}
    \mathcal{J}_g(\pi)=\underset{\hat s\sim\hat \mu}{\operatorname{E}}\left[f(\hat s)\right]+\underset{\substack{{\hat s} \sim \dot d^\pi \\ a\sim \pi\\ {\hat s}'\sim P}}{\operatorname E}\left[g({\hat s},a,{\hat s}')+\gamma f({\hat s}')-f({\hat s})\right]
\end{align}

\begin{lemma}[]
\label{lem: lemma 2}
For any function $f \mapsto \mathbb{R}$ and any policies $\pi'$ and $\pi$, define
\begin{align}
L_{\pi,f}(\pi')\doteq\underset{\substack{\hat s \sim \dot d^\pi \\ a\sim \pi\\ {\hat s}'\sim P}}{\operatorname{E}}\left[\left(\dfrac{\pi'(a|{\hat s})}{\pi(a|{\hat s})}-1\right)(g({\hat s},a,{\hat s}')+\gamma f({\hat s}')-f({\hat s}))\right],
\end{align}
and $\epsilon_f^{\pi\prime}\doteq\max_{{\hat s}}|\mathrm{E}_{a\sim\pi^\prime,{\hat s}^\prime\sim P}[R({\hat s},a,{\hat s}^\prime)+\gamma f({\hat s}^\prime)-f({\hat s})]|.$ Then the following bounds hold:
\begin{align}
\label{eq: J lower bound}
    \mathcal{J}_g(\pi') - \mathcal{J}_g(\pi) \geq L_{\pi,f}(\pi^{\prime})-\epsilon_{f}^{\pi^{\prime}}\left\|\dot d^{\pi'}-\dot d^{\pi}\right\|_1, \\
\label{eq: J upper bound}
    \mathcal{J}_g(\pi') - \mathcal{J}_g(\pi) \leq L_{\pi,f}(\pi^{\prime})+\epsilon_{f}^{\pi^{\prime}}\left\|\dot d^{\pi'}-\dot d^{\pi}\right\|_1, 
\end{align}
where $D_{TV}$ is the total variational divergence. Furthermore, the bounds are tight(when $\pi'$ = $\pi$, the LHS and RHS are identically zero).
\end{lemma}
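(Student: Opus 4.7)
The plan is to apply the telescoping identity \eqref{eq: J pi def} to both policies $\pi'$ and $\pi$, isolate the term that can be rewritten as $L_{\pi,f}(\pi')$ via importance sampling, and bound the remainder via a H\"older-type inequality using $\epsilon_f^{\pi'}$.

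First, I would write $\delta_f({\hat s},a,{\hat s}') \doteq g({\hat s},a,{\hat s}') + \gamma f({\hat s}') - f({\hat s})$ and invoke \eqref{eq: J pi def} for both policies. Since the $\mathbb{E}_{\hat s \sim \hat\mu}[f({\hat s})]$ term is the same for both, subtracting yields
\begin{align*}
\mathcal{J}_g(\pi') - \mathcal{J}_g(\pi)
= \underset{\substack{{\hat s}\sim \dot d^{\pi'}\\ a\sim \pi'\\ {\hat s}'\sim P}}{\operatorname{E}}[\delta_f]
- \underset{\substack{{\hat s}\sim \dot d^{\pi}\\ a\sim \pi\\ {\hat s}'\sim P}}{\operatorname{E}}[\delta_f].
\end{align*}
Next, I would introduce the hybrid term $\mathbb{E}_{{\hat s}\sim \dot d^\pi,\, a\sim \pi',\, {\hat s}'\sim P}[\delta_f]$ by adding and subtracting it. The difference between this hybrid term and the pure-$\pi$ term is exactly $L_{\pi,f}(\pi')$ after rewriting $a\sim\pi'$ as an importance-weighted expectation under $\pi$ with weight $\pi'(a|{\hat s})/\pi(a|{\hat s})$, and using that $\mathbb{E}_{a\sim\pi}[(\pi'/\pi - 1)\cdot X]$ equals $\mathbb{E}_{a\sim\pi'}[X] - \mathbb{E}_{a\sim\pi}[X]$.

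The remaining residual is
\begin{align*}
\sum_{{\hat s}} \big(\dot d^{\pi'}({\hat s}) - \dot d^{\pi}({\hat s})\big)\, h({\hat s}),
\qquad h({\hat s}) \doteq \underset{\substack{a\sim\pi'\\ {\hat s}'\sim P}}{\operatorname{E}}[\delta_f \mid {\hat s}].
\end{align*}
By definition of $\epsilon_f^{\pi'}$ we have $|h({\hat s})|\le \epsilon_f^{\pi'}$ for all ${\hat s}$, so a H\"older-type bound gives $|\text{residual}| \le \epsilon_f^{\pi'} \|\dot d^{\pi'} - \dot d^\pi\|_1$. Splitting into the two signs of this bound yields \eqref{eq: J lower bound} and \eqref{eq: J upper bound}. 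Tightness when $\pi'=\pi$ follows because both $L_{\pi,f}(\pi)$ and $\|\dot d^{\pi}-\dot d^{\pi}\|_1$ vanish.

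The main obstacle I anticipate is careful bookkeeping of the change of measure: one must ensure the importance-sampling trick is applied in the right direction (from $\pi'$-sampled actions back to $\pi$-sampled actions, keeping the state distribution fixed at $\dot d^\pi$) so that the residual involves only the state-distribution mismatch $\dot d^{\pi'} - \dot d^\pi$, to which the $\epsilon_f^{\pi'}$ bound can be cleanly applied. The finite-horizon setting does not complicate this argument because \Cref{lem: lemma 1} already absorbs the truncation of the geometric series at $H$ through the fact that $(P_\pi)^{H+1}\hat\mu$ is treated as zero, so the identity \eqref{eq: J pi def} holds verbatim as in the infinite-horizon case.
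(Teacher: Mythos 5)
Your proposal matches the paper's proof essentially step for step: the same telescoping identity from \eqref{eq: J pi def}, the same decomposition of the $\pi'$-term into $\left\langle \dot d^{\pi},\dagger{\delta}_{f}^{\pi'}\right\rangle+\left\langle \dot d^{\pi'}-\dot d^{\pi},\dagger{\delta}_{f}^{\pi'}\right\rangle$ (your ``hybrid term'' plus residual), the same H\"older bound with $p=1,q=\infty$, and the same importance-sampling rewrite to produce $L_{\pi,f}(\pi')$. No gaps; your closing remark about the finite-horizon truncation being absorbed by \Cref{lem: lemma 1} is also consistent with how the paper handles it.
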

\begin{proof}
    
 First, for notational convenience, let $\delta_f({\hat s},a,{\hat s}') \doteq g({\hat s},a,{\hat s}') + \gamma f({\hat s}') - f({\hat s})$. By \eqref{eq: J pi def}, we obtain the identity
 \begin{align}
 \label{eq: J difference detail}
     \mathcal{J}_g(\pi')-\mathcal{J}_g(\pi)= \underset{\substack{{\hat s} \sim \dot d^{\pi'} \\ a\sim {\pi'}\\ {\hat s}'\sim P}}E[\delta_f({\hat s},a,{\hat s}')] - \underset{\substack{{\hat s} \sim \dot d^\pi \\ a\sim \pi\\ {\hat s}'\sim P}}E[\delta_f({\hat s},a,{\hat s}')]
 \end{align}

Now, we restrict our attention to the first term in \eqref{eq: J difference detail}. Let $\dagger{\delta}_f^{\pi'}\in\mathbb{R}^{|\hat{\mathcal{S}}|}$ denote the vector of components, where $\dagger{\delta}_f^{\pi'}({\hat s}) = \mathbb{E}_{a\sim\pi',\hat s'\sim P}[\delta_f ({\hat s},a,{\hat s}')|{\hat s}]$. Observe that
$$
\begin{aligned}
\underset{\substack{{\hat s} \sim \dot d^{\pi'} \\ a\sim {\pi'}\\ {\hat s}'\sim P}}{\operatorname{E}}[\delta_{f}({\hat s},a,{\hat s}^{\prime})]& =\left\langle \dot d^{\pi'},\dagger{\delta}_{f}^{\pi'}\right\rangle  \\
&=\left\langle \dot d^{\pi},\dagger{\delta}_{f}^{\pi'}\right\rangle+\left\langle \dot d^{\pi'}-\dot d^{\pi},\dagger{\delta}_{f}^{\pi'}\right\rangle
\end{aligned}
$$
With the Hölder's inequality; for any $p,q\in [1,\infty]$ such that $\dfrac 1p+\dfrac 1q = 1$, we have 
\begin{align}
\label{eq: discounted original bounds}
    \left\langle \dot d^{\pi},\dagger{\delta}_{f}^{\pi'}\right\rangle + \left\|\dot d^{\pi'}-\dot d^{\pi}\right\|_p\left\|\dagger{\delta}_f^{\pi'}\right\|_q \ge \underset{\substack{{\hat s} \sim \dot d^{\pi'} \\ a\sim {\pi'}\\ {\hat s}'\sim P}}{\operatorname{E}}[\delta_{f}({\hat s},a,{\hat s}^{\prime})] \ge \left\langle \dot d^{\pi},\dagger{\delta}_{f}^{\pi'}\right\rangle - \left\|d^{\pi'}-\dot d^{\pi}\right\|_p\left\|\dagger{\delta}_f^{\pi'}\right\|_q
\end{align}

We choose $p=1$ and $q=\infty$; With $\left\| \dagger{\delta}_f^{\pi'} \right\|_{\infty} = \epsilon_{f}^{\pi^{\prime}}$, and by the importance sampling identity, we have
\begin{align}
\label{eq: importance sampling theo 1}
     \left\langle \dot d^{\pi},\dagger{\delta}_{f}^{\pi'}\right\rangle & = \underset{\substack{{\hat s} \sim \dot d^{\pi} \\ a\sim {\pi'}\\ {\hat s}'\sim P}}{\operatorname{E}}[\delta_{f}({\hat s},a,{\hat s}^{\prime})]\\ \nonumber 
& =\underset{\substack{{\hat s} \sim \dot d^{\pi} \\ a\sim {\pi}\\ {\hat s}'\sim P}}{\operatorname{E}}[\left( \frac{\pi'(a|{\hat s})}{\pi(a|{\hat s})}\right)\delta_{f}({\hat s},a,{\hat s}^{\prime})]
\end{align}


After bringing \eqref{eq: importance sampling theo 1},  $\left\| \dagger{\delta}_f^{\pi'} \right\|_{\infty}$ into \eqref{eq: discounted original bounds}, then substract $\underset{\substack{{\hat s} \sim \dot d^\pi \\ a\sim \pi\\ {\hat s}'\sim P}}E[ \delta_f({\hat s},a,{\hat s}')]$,  the bounds are obtained. The lower bound leads to \eqref{eq: J lower bound}, and the upper bound leads to \eqref{eq: J upper bound}.
\end{proof}

\begin{lemma}[]
\label{lem: lemma 3}
The divergence between discounted future state visitation distributions, $||\dot d^{\pi'}-\dot d^{\pi}||_1$, is bounded by an average divergence of the policies $\pi'$ and $\pi$:
\begin{align}
    \|\dot d^{\pi'}-\dot d^{\pi}\|_1\leq 2\sum_{t=0}^H\gamma^{t+1}\underset{{\hat s}\sim \dot d^{\pi}}{\operatorname{E}}\left[D_{TV}(\pi'||\pi)[{\hat s}]\right],
\end{align}
where $D_{TV}(\pi'||\pi)[{\hat s}]=\dfrac 12\sum_a|\pi'(a|{\hat s})-\pi(a|{\hat s})|.$
\end{lemma}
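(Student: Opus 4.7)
The plan is to mimic the infinite-horizon argument of \citet{achiam2017cpo}, but with careful bookkeeping of the truncation at step $H$. The starting point is the resolvent formula
\begin{align*}
\dot d^{\pi} \;=\; (I-\gamma P_\pi)^{-1}\hat\mu, \qquad \dot d^{\pi'} \;=\; (I-\gamma P_{\pi'})^{-1}\hat\mu,
\end{align*}
established in \eqref{eq: d pi}, where the inverses are interpreted as the truncated Neumann series $\sum_{t=0}^{H}(\gamma P_\pi)^t$ and $\sum_{t=0}^{H}(\gamma P_{\pi'})^t$ respectively (because $(P_\pi)^{H+1}$ is set to zero in the finite-horizon setting).

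First I would derive the standard resolvent identity. Writing $G \doteq (I-\gamma P_\pi)^{-1}$ and $G' \doteq (I-\gamma P_{\pi'})^{-1}$, the relation $G^{-1}-(G')^{-1}=\gamma(P_{\pi'}-P_\pi)$ left-multiplied by $G'$ and right-multiplied by $G$ gives $G'-G=\gamma G'(P_{\pi'}-P_\pi)G$. Applying both sides to $\hat\mu$ yields
\begin{align*}
\dot d^{\pi'}-\dot d^{\pi} \;=\; \gamma\,G'\,(P_{\pi'}-P_\pi)\,\dot d^{\pi}.
\end{align*}
Taking the $\ell_1$ norm and using submultiplicativity, $\|\dot d^{\pi'}-\dot d^{\pi}\|_1 \le \gamma\,\|G'\|_1\,\|(P_{\pi'}-P_\pi)\dot d^{\pi}\|_1$.

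Next I would bound the two factors separately. For the operator, the truncated Neumann expansion and the fact that each $P_{\pi'}$ is column-stochastic (so $\|P_{\pi'}\|_1=1$) give $\|G'\|_1 \le \sum_{t=0}^{H}\gamma^t$. For the vector factor, I would expand the transition kernel through the policy, $P_{\pi}(\hat s'\mid\hat s)=\int P(\hat s'\mid\hat s,a)\pi(a\mid\hat s)\,da$, and compute
\begin{align*}
\|(P_{\pi'}-P_\pi)\dot d^{\pi}\|_1 \;\le\; \sum_{\hat s}\dot d^{\pi}(\hat s)\sum_{\hat s'}\Bigl|\int P(\hat s'\mid\hat s,a)\bigl(\pi'(a\mid\hat s)-\pi(a\mid\hat s)\bigr)\,da\Bigr|,
\end{align*}
which by the triangle inequality and marginalization over $\hat s'$ collapses to $\sum_{\hat s}\dot d^{\pi}(\hat s)\,\|\pi'(\cdot\mid\hat s)-\pi(\cdot\mid\hat s)\|_1 = 2\,\mathbb{E}_{\hat s\sim \dot d^{\pi}}[D_{TV}(\pi'\|\pi)[\hat s]]$. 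Combining these two bounds and pulling the leading $\gamma$ into the sum as $\sum_{t=0}^{H}\gamma^{t+1}$ yields the claim.

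The main potential obstacle is making the truncation rigorous: one must check that the identity $(I-\gamma P_\pi)\sum_{t=0}^{H}(\gamma P_\pi)^t=I$ really holds here (it does precisely because we set $(\gamma P_\pi)^{H+1}=0$ in the finite-horizon model), and that the norm bound $\|G'\|_1\le\sum_{t=0}^{H}\gamma^t$ survives this truncation rather than requiring $\|G'\|_1\le 1/(1-\gamma)$. Everything else is a direct computation, and positivity of $\dot d^{\pi}$ is what lets the inner triangle inequality pass through the weighting $\dot d^{\pi}(\hat s)$ without sign issues.
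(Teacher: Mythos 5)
Your proposal is correct and follows essentially the same route as the paper: the truncated resolvent identity $\dot d^{\pi'}-\dot d^{\pi}=\gamma \bar G(P_{\pi'}-P_\pi)\dot d^{\pi}$, submultiplicativity of the $\ell_1$ norm, the bound $\|\bar G\|_1\le\sum_{t=0}^{H}\gamma^t$ from the truncated Neumann series, and the collapse of $\|(P_{\pi'}-P_\pi)\dot d^{\pi}\|_1$ to twice the expected total variation. Your explicit attention to the finite-horizon truncation $(\gamma P_\pi)^{H+1}=0$ matches the paper's treatment, and your stated multiplication order for the resolvent identity (left by $\bar G$, right by $G$) is in fact the correct one.
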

\begin{proof}
Firstly, we introduce an identity for the vector difference of the discounted future state visitation distributions on two different policies, $\pi'$ and $\pi$. Define the matrices $G \doteq (I-\gamma P_{\pi})^{-1}, \bar G \doteq (I-\gamma P_{\pi'})^{-1}$, and $\Delta = P_{\pi'}-P_{\pi}$. Then:

\begin{align}
G^{-1}-\bar{G}^{-1}& =(I-\gamma P_{\pi})-(I-\gamma P_{\pi'})  \\ \nonumber 
&=\gamma\Delta,
\end{align}

left-multiplying by $G$ and right-multiplying by $\bar G$, we obtain

\begin{align}
    \bar G - G = \gamma \bar G\Delta G.
\end{align}

Thus, the following equality holds: 

\begin{align}
\label{eq: difference between ds}
\dot d^{\pi'}-\dot d^{\pi}&=(1-\gamma)\left(\bar{G}-G\right)\hat \mu  \\ \nonumber 
&=\gamma(1-\gamma)\bar{G}\Delta G\hat \mu \\ \nonumber
&=\gamma\bar{G}\Delta \dot d^{\pi}.
\end{align}

Using \eqref{eq: difference between ds}, we obtain

\begin{align}
\label{eq: bound for diff ds}
\|\dot d^{\pi^{\prime}}-\dot d^{\pi}\|_1& =\gamma\|\bar{G}\Delta d^{\pi}\|_1  \\ \nonumber
&\leq\gamma\|\bar{G}\|_1\|\Delta \dot d^\pi\|_1,
\end{align}

where $||\bar G||_1$ is bounded by:
\begin{align}
\label{eq: bound for G 1}
    \|\bar{G}\|_1=\|(I-\gamma P_{\pi'})^{-1}\|_1\le\sum\limits_{t=0}^\infty\gamma^t\|P_{\pi'}^t\|_1=\sum\limits_{t=0}^H\gamma^t.
\end{align}

Next, we bound $\|\Delta \dot d^{\pi}_1\|$ as following:

\begin{align}
\label{eq: bound for delta d pi}
\|\Delta \dot d^{\pi}\|_1& =\quad \sum\limits_{{\hat s}'}\left|\sum\limits_{{\hat s}}\Delta({\hat s}'|{\hat s})\dot d^\pi({\hat s})\right|  \\ \nonumber
& \le \quad\sum \limits_{{\hat s},{\hat s}'}|\Delta({\hat s}'|{\hat s})|\dot d^\pi({\hat s}) \\ \nonumber
& = \quad \sum_{{\hat s},{\hat s}'}\left|\sum_a P({\hat s}'|{\hat s},a)\left(\pi'(a|{\hat s})-\pi(a|{\hat s})\right)\right|\dot d^{\pi}({\hat s}) \\ \nonumber
& \le \quad \sum_{{\hat s},a,{\hat s}'}P({\hat s}'|{\hat s},a)|\pi'(a|{\hat s})-\pi(a|{\hat s})|\dot d^{\pi}({\hat s}) \\ \nonumber
& = \quad \sum_{{\hat s},a}|\pi'(a|{\hat s})-\pi(a|{\hat s})|\dot d^\pi({\hat s}) \\ \nonumber
& = \quad 2\underset{{\hat s}\sim \dot d^\pi}{\operatorname{E}}[D_{TV}(\pi'||\pi)[{\hat s}]].
\end{align}

By taking \eqref{eq: bound for delta d pi} and \eqref{eq: bound for G 1} into \eqref{eq: bound for diff ds}, this lemma is proved. 

\end{proof}

The new policy improvement bound follows immediately.

\begin{lemma}[]
\label{lem: lemma 4}
For any function $f:\hat{\mathcal{S}}\mapsto \mathbb{R}$ and any policies $\pi'$ and $\pi$, define $\delta_f({\hat s},a,{\hat s}')\doteq g({\hat s},a,{\hat s}')+\gamma f({\hat s}')-f({\hat s})$,
$$
\begin{gathered}
\epsilon_f^{\pi^{\prime}}\doteq\operatorname*{max}_{{\hat s}}|\mathrm{E}_{a\sim\pi^{\prime},{\hat s}^{\prime}\sim P}[\delta_{f}({\hat s},a,{\hat s}^{\prime})]|, \\
L_{\pi,f}(\pi^{\prime})\doteq\underset{\substack{{\hat s} \sim \dot d^{\pi}\\a\sim\pi\\{\hat s}'\sim P}}{\mathrm{E}}\left[\left(\frac{\pi^{\prime}(a|{\hat s})}{\pi(a|{\hat s})}-1\right)\delta_{f}({\hat s},a,{\hat s}^{\prime})\right],a n d \\
D_{\pi,f}^{\pm}(\pi^{\prime})\doteq L_{\pi,f}(\pi^{\prime})\pm 2(\sum_{t=0}^H\gamma^{t+1})\epsilon_{f}^{\pi^{\prime}}\underset{{\hat s}\sim \dot d^{\pi}}{\operatorname{E}}[D_{T V}(\pi^{\prime}||\pi)[{\hat s}]],
\end{gathered}
$$
where $D_{TV}(\pi'||\pi)[{\hat s}] = \frac 12\sum_a|\pi'(a|{\hat s})-\pi(a|{\hat s})|$ is the total variational divergence between action distributions at ${\hat s}$.
The following bounds hold:
$$
D_{\pi,f}^{+}(\pi^{\prime})\geq \mathcal{J}_g(\pi^{\prime})-\mathcal{J}_g(\pi)\geq D_{\pi,f}^{-}(\pi^{\prime}).
$$
Furthermore, the bounds are tight (when $\pi'$ = $\pi$, all three expressions are identically zero)
\end{lemma}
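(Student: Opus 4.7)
}
The plan is to derive Lemma 4 as a direct composition of \Cref{lem: lemma 2} and \Cref{lem: lemma 3}, since these together already pin down both the discrepancy between $\mathcal{J}_g(\pi')-\mathcal{J}_g(\pi)$ and $L_{\pi,f}(\pi')$, and the bound on the $L^1$ distance between discounted visitation distributions in terms of the averaged total variation divergence between policies.

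First, I would invoke \Cref{lem: lemma 2}, which, for the given $f$, $\delta_f$, $\epsilon_f^{\pi'}$, and $L_{\pi,f}(\pi')$ as defined in the statement, yields the two-sided inequality
\begin{align*}
L_{\pi,f}(\pi') - \epsilon_f^{\pi'} \|\dot d^{\pi'} - \dot d^{\pi}\|_1 \;\le\; \mathcal{J}_g(\pi') - \mathcal{J}_g(\pi) \;\le\; L_{\pi,f}(\pi') + \epsilon_f^{\pi'} \|\dot d^{\pi'} - \dot d^{\pi}\|_1.
\end{align*}
Note that these inequalities already hold with the $\delta_f$ and $\epsilon_f^{\pi'}$ defined exactly as in \Cref{lem: lemma 4}, so no reformulation is needed at this stage.

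Next, I would substitute the distribution-divergence bound from \Cref{lem: lemma 3}, namely
\begin{align*}
\|\dot d^{\pi'} - \dot d^{\pi}\|_1 \;\le\; 2\sum_{t=0}^H \gamma^{t+1} \underset{{\hat s}\sim \dot d^\pi}{\mathbb{E}}\bigl[D_{TV}(\pi'\|\pi)[{\hat s}]\bigr],
\end{align*}
into the two-sided inequality above. Because $\epsilon_f^{\pi'} \geq 0$, multiplying both sides by $\epsilon_f^{\pi'}$ preserves direction, and plugging into the upper and lower bounds separately gives exactly $D_{\pi,f}^{+}(\pi')$ and $D_{\pi,f}^{-}(\pi')$ as defined in the statement. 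Finally, tightness at $\pi' = \pi$ follows because $L_{\pi,f}(\pi) = 0$ (the importance ratio is $1$), $D_{TV}(\pi\|\pi)[{\hat s}] = 0$ for every ${\hat s}$, and $\mathcal{J}_g(\pi) - \mathcal{J}_g(\pi) = 0$.

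I do not expect any serious obstacle: the work is already done by \Cref{lem: lemma 2,lem: lemma 3}, and \Cref{lem: lemma 4} is essentially a packaging step that expresses the bound entirely in policy-space quantities (the averaged TV divergence) rather than in visitation-distribution quantities. The only care needed is bookkeeping: verifying that the geometric factor $\sum_{t=0}^H \gamma^{t+1}$ carries through unchanged from \Cref{lem: lemma 3} into $D_{\pi,f}^{\pm}$, and that the definitions of $\epsilon_f^{\pi'}$ and $L_{\pi,f}(\pi')$ in \Cref{lem: lemma 4} agree verbatim with those used in \Cref{lem: lemma 2}, so that no auxiliary estimate is introduced.
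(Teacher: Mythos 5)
Your proposal is correct and follows exactly the paper's own route: the paper proves \Cref{lem: lemma 4} in one line by starting from the two-sided bound of \Cref{lem: lemma 2} and substituting the visitation-divergence bound of \Cref{lem: lemma 3}, which is precisely your composition. Your added bookkeeping (nonnegativity of $\epsilon_f^{\pi'}$, the tightness check at $\pi'=\pi$) is consistent with, and slightly more explicit than, what the paper writes.
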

\begin{proof}
    Begin with the bounds from \cref{lem: lemma 2} and bound the divergence $D_{TV}(\dot d^{\pi'}||\dot d^{\pi})$ by \cref{lem: lemma 3}.
\end{proof}

\section{Proof for \Cref{theo: state wise cost}}
\label{sec: proof for cost theorem}

\begin{proof}

The choice of $f=\hat V_D^{\pi}, g = D$ in \cref{lem: lemma 4} leads to following inequality:

\begin{align}
  &\mathcal{\hat J}_D(\pi') - \mathcal{\hat J}_D(\pi) \leq \underset{\substack{{\hat s}\sim \dot d^\pi\\a\sim\pi'}}{\mathbb{E}}\left[\hat A_D^\pi({\hat s},a) + 2(\sum_{t=0}^H\gamma^{t+1})\epsilon_{D}^{\pi'} \mathcal{D}_{TV}(\pi'||\pi)[{\hat s}]\right].
\end{align}

where $\mathcal{\hat J}_D(\pi) = \mathbb{E}_{\tau \thicksim \pi} \Bigg[ \sum_{t=0}^{H} \gamma^t D\big({\hat s}_t, a_t, {\hat s}_{t+1}\big)\Bigg]$, need to distinguish from $\mathcal{J}_D(\pi)$. And $\hat V_D^{\pi}, \hat A_D^\pi$ are also the discounted version of $V_D^{\pi}$ and $A_D^\pi$. Note that according to \Cref{lem: lemma 4} one can only get this the inequality holds when $\gamma \in (0,1)$. 

Then we can define $\mathcal{F}(\gamma) = \underset{\substack{{\hat s}\sim \dot d^\pi\\a\sim\pi'}}{\mathbb{E}}\left[\hat A_D^\pi({\hat s},a) + 2(\sum_{t=0}^H\gamma^{t+1})\epsilon_{D}^{\pi'} \mathcal{D}_{TV}(\pi'||\pi)[{\hat s}]\right] - \mathcal{\hat J}_D(\pi') + \mathcal{\hat J}_D(\pi)$ with the following condition holds:

\begin{align}
\label{cond: F}
    &\mathcal{F}(\gamma) \geq 0, \text{when}~ \gamma \in (0,1)\\ \nonumber
    &\mathcal{F}(\gamma)\text{'s domain of definition is }  \mathcal{R} \\ \nonumber
    &\mathcal{F}(\gamma)\text{ is a polynomial function}
\end{align}

Because $\mathcal{F}(\gamma)$ is a polynomial function and coefficients are all limited, thus $\underset{\gamma\rightarrow 1^-}{\lim}\mathcal{F}(\gamma)$ exists and $\mathcal{F}(\gamma)$ is continuous at point $(1, \mathcal{F}(1)$). So $\mathcal{F}(1) = \underset{\gamma\rightarrow 1^-}{\lim}\mathcal{F}(\gamma)\geq 0$, which equals to:

\begin{align}
\nonumber
  &\mathcal{J}_D(\pi') - \mathcal{J}_D(\pi) \leq \underset{\substack{{\hat s}\sim \bar d^\pi\\a\sim\pi'}}{\mathbb{E}}\left[A_D^\pi({\hat s},a) + 2(H+1)\epsilon_{D}^{\pi'} \mathcal{D}_{TV}(\pi'||\pi)[{\hat s}]\right].
\end{align}

where $\bar d^{\pi} = \sum_{t=0}^H P({\hat s}_t={\hat s}|\pi)$.

\end{proof}

\clearpage

\section{Proof for \Cref{prop: scpo performance guarantee}}
\label{sec: proof for reward theorem}

\begin{proof}
Here we first present a new bound on the difference in returns
between two arbitrary policies in the context of finite-horizon MDP:
\begin{theorem}[Trust Region Update Performance]
For any policies $\pi', \pi$, with $\epsilon^{\pi'} \doteq max_{{\hat s}}|E_{a\sim\pi'}[A^{\pi}({\hat s},a)]|$, and define $d^\pi = (1-\gamma)\sum\limits_{t=0}^H\gamma^t P({\hat s}_t={\hat s}|\pi)$ as the discounted augmented state distribution using $\pi$, then the following bound holds:
\begin{align}
  \mathcal{J}(\pi') - \mathcal{J}(\pi) > \frac{1}{1-\gamma} \underset{\substack{{\hat s} \sim d^\pi \\ a\sim {\pi'}}}{\mathbb{E}} \bigg[ A^\pi({\hat s},a) - \frac{2\gamma \epsilon^{\pi'}}{1-\gamma} \mathcal{D}_{TV}({\pi'} \| \pi)[{\hat s}] \bigg]
\end{align}
\label{theo: reward}
\end{theorem}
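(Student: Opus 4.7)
The plan is to derive this theorem as the reward-side companion of Theorem 1 by instantiating the general finite-horizon policy improvement inequality from Lemma 4 in the preliminaries with $g = R$ and $f = V^\pi$. Under this choice the perturbation becomes $\delta_f(\hat{s}, a, \hat{s}') = R(\hat{s}, a, \hat{s}') + \gamma V^\pi(\hat{s}') - V^\pi(\hat{s})$. For each $\hat{s}$, the expectation $\mathbb{E}_{a \sim \pi, \hat{s}' \sim P}[\delta_f(\hat{s}, a, \hat{s}')]$ collapses to $0$ by the Bellman consistency of $V^\pi$ (interpreted in the same time-homogeneous convention that the preliminaries adopt via $P_\pi^{H+1} = 0$), while $\mathbb{E}_{a \sim \pi', \hat{s}' \sim P}[\delta_f(\hat{s}, a, \hat{s}')] = \mathbb{E}_{a \sim \pi'}[A^\pi(\hat{s}, a)]$. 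After applying the importance-sampling identity from Lemma 2, the surrogate reduces to $L_{\pi, V^\pi}(\pi') = \mathbb{E}_{\hat{s} \sim \dot{d}^\pi,\, a \sim \pi'}[A^\pi(\hat{s}, a)]$, and $\epsilon_f^{\pi'}$ collapses to the stated $\epsilon^{\pi'} = \max_{\hat{s}}|\mathbb{E}_{a \sim \pi'}[A^\pi(\hat{s}, a)]|$.

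The second step is to translate the preliminaries' non-discounted visitation $\dot{d}^\pi$ into the discounted distribution $d^\pi = (1-\gamma)\dot{d}^\pi$ appearing in the theorem statement, which extracts the outer $1/(1-\gamma)$. For the total-variation penalty, the coefficient $\sum_{t=0}^H \gamma^{t+1}$ from the lower-bound term $D_{\pi,f}^-(\pi')$ of Lemma 4 is strictly upper bounded by the infinite geometric tail $\gamma/(1-\gamma)$ since the horizon $H$ is finite. Combining this with the $1/(1-\gamma)$ produced by the change of measure $\dot{d}^\pi \to d^\pi$ yields a combined coefficient of $2\gamma/((1-\gamma)^2)$, which I would regroup as $(1/(1-\gamma)) \cdot (2\gamma\,\epsilon^{\pi'}/(1-\gamma))$ to match the parenthesized form in the statement. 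The strict truncation of the geometric sum is precisely what produces the strict inequality ``$>$'' rather than ``$\geq$'' in the theorem.

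The main obstacle I anticipate is not analytical difficulty but careful bookkeeping: keeping the substitutions between $\dot{d}^\pi$ and $d^\pi$ clean, and justifying the Bellman-type cancellation $\mathbb{E}_{a \sim \pi, \hat{s}' \sim P}[\delta_{V^\pi}(\hat{s},a,\hat{s}')] = 0$ within the finite-horizon setup that treats $V^\pi$ as time-independent. A minor subtlety is whether to extract the strict inequality from the geometric truncation alone (which suffices when $\gamma > 0$ and $H < \infty$) or to combine it with the strict nonnegativity of the TV penalty; either route works, but I would state the truncation-based argument explicitly since it parallels the continuity trick already used in the proof of Theorem 1 but avoids the limit $\gamma \to 1^-$. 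Aside from these points, the remaining calculation is an algebraic rearrangement of Lemma 4 with $(R, V^\pi, A^\pi)$ playing the role that $(D, V_D^\pi, A_D^\pi)$ played in the proof of Theorem 1.
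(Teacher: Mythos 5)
Your proposal matches the paper's own proof essentially step for step: the paper likewise instantiates Lemma~\ref{lem: lemma 4} with $f=V^{\pi}$, $g=R$ (so that $L_{\pi,f}(\pi')$ collapses to $\mathbb{E}_{\hat s\sim \dot d^{\pi},a\sim\pi'}[A^{\pi}(\hat s,a)]$ and $\epsilon_f^{\pi'}$ becomes $\epsilon^{\pi'}$), rescales $\dot d^{\pi}$ to $d^{\pi}=(1-\gamma)\dot d^{\pi}$ to extract the $1/(1-\gamma)$ factor, and obtains the strict inequality by bounding the truncated sum $\sum_{t=0}^{H}\gamma^{t+1}$ by $\gamma/(1-\gamma)$. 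Your additional bookkeeping (the Bellman cancellation making the $\pi$-term of Lemma~\ref{lem: lemma 2} vanish, and the caveat that strictness degenerates when $\mathcal{D}_{TV}(\pi'\|\pi)\equiv 0$) is sound and, if anything, slightly more careful than the paper's two-line argument.
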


We provide the proof for \Cref{theo: reward} in \Cref{sec: prof of theo2}.
 The following bound then follows directly from \Cref{theo: reward} using the relationship between the total variation divergence and the KL divergence:
\begin{align}
\label{eq: kl bound}
    \mathcal{J}(\pi') - \mathcal{J}(\pi) > \frac{1}{1-\gamma} \underset{\substack{{\hat s} \sim d^\pi \\ a\sim \pi'}}{\mathbb{E}} \bigg[ A^\pi({\hat s},a) - \frac{2\gamma \epsilon^{\pi'}}{1-\gamma} \sqrt{\frac{1}{2} \mathbb{E}_{{\hat s} \sim d^\pi}[\mathcal{D}_{KL}( \pi' \| \pi)[{\hat s}]]}\bigg].
\end{align}

In \eqref{eq: scpo optimization final}, the reward performance between two policies is associated with trust region, i.e.

\begin{align}
\label{eq: trust region update}
    & \pi_{k+1} = \underset{\pi \in \Pi_\theta}{\textbf{argmax}}~ \underset{\substack{{\hat s} \sim d^{\pi_k} \\ a\sim \pi}}{\mathbb{E}} [A^{\pi_k}({\hat s},a)] \\ \nonumber 
    & ~~~~~ \textbf{s.t.}~  ~~ \mathbb{E}_{{\hat s} \sim \bar d^{\pi_k}}[\mathcal{D}_{KL}(\pi \| \pi_k)[{\hat s}]]\leq \delta. 
\end{align}

Due to \Cref{eq: kl relation} (proved in \Cref{sec: lemma kl divergence diff}), if two policies are related with \Cref{eq: trust region update}, they are related with the following optimization:
\begin{align}
\label{eq: trust region update 1}
    &\pi_{k+1} = \underset{\pi \in \Pi_\theta}{\textbf{argmax}}~ \underset{\substack{{\hat s} \sim d^{\pi_k} \\ a\sim \pi}}{\mathbb{E}} [A^{\pi_k}({\hat s},a)] \\ \nonumber 
   & ~~~~~ \textbf{s.t.}~  ~~\mathbb{E}_{{\hat s} \sim d^{\pi_k}}[\mathcal{D}_{KL}(\pi \| \pi_k)[{\hat s}]] \leq \delta.
\end{align}

By \eqref{eq: kl bound} and \eqref{eq: trust region update 1}, if $\pi_k, \pi_{k+1}$ are related by \eqref{eq: scpo optimization final}, then performance return for $\pi_{k+1}$ satisfies
\begin{align}
    \mathcal{J}(\pi_{k+1}) - \mathcal{J}(\pi_k) > - \frac{    \sqrt{2\delta}\gamma \epsilon^{\pi_{k+1}}}{1-\gamma}. \nonumber
\end{align}

\end{proof}

\subsection{KL Divergence Relationship Between $d^{\pi_k}$ and $\bar d^{\pi_k}$}
\label{sec: lemma kl divergence diff}
\begin{lemma}[]
$\underset{{\hat s} \sim d_\pi}{\mathrm{E}}[\mathcal{D}_{KL}(\pi' \| \pi)[{\hat s}]] < \underset{{\hat s} \sim \bar d_\pi}{\mathrm{E}}[\mathcal{D}_{KL}(\pi' \| \pi)[{\hat s}]]$
\label{eq: kl relation}
\end{lemma}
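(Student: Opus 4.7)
The plan is to reduce the inequality to a termwise comparison across time steps. Using the definitions
$d^\pi({\hat s}) = (1-\gamma)\sum_{t=0}^H \gamma^t P({\hat s}_t = {\hat s} \mid \pi)$ and $\bar d^\pi({\hat s}) = \sum_{t=0}^H P({\hat s}_t = {\hat s} \mid \pi)$, I would interchange the outer sum over ${\hat s}$ with the sum over $t$ and obtain, for any non-negative function $f: \hat{\mathcal{S}} \to \mathbb{R}_{\geq 0}$,
\begin{align*}
\sum_{{\hat s}} d^\pi({\hat s}) f({\hat s}) &= \sum_{t=0}^H (1-\gamma)\gamma^t \, \mathbb{E}_{{\hat s}_t \sim P(\cdot\mid\pi)}[f({\hat s}_t)], \\
\sum_{{\hat s}} \bar d^\pi({\hat s}) f({\hat s}) &= \sum_{t=0}^H \mathbb{E}_{{\hat s}_t \sim P(\cdot\mid\pi)}[f({\hat s}_t)].
\end{align*}
Specializing to $f({\hat s}) = \mathcal{D}_{KL}(\pi'\|\pi)[{\hat s}]$ (which is pointwise non-negative) converts the lemma into a comparison of scalar coefficients in front of the non-negative quantities $E_t \doteq \mathbb{E}_{{\hat s}_t \sim P(\cdot\mid\pi)}[\mathcal{D}_{KL}(\pi'\|\pi)[{\hat s}_t]] \geq 0$.

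Next, I would observe that for every $t \in \{0, 1, \dots, H\}$ and every $\gamma \in [0,1)$ one has $(1-\gamma)\gamma^t \leq 1-\gamma < 1$, so the coefficient of each $E_t$ on the left strictly underpowers the unit coefficient on the right. Combining this with $E_t \geq 0$ gives the inequality termwise, and summing over $t$ yields $\mathbb{E}_{{\hat s} \sim d^\pi}[\mathcal{D}_{KL}(\pi'\|\pi)] \leq \mathbb{E}_{{\hat s} \sim \bar d^\pi}[\mathcal{D}_{KL}(\pi'\|\pi)]$. To pass from $\leq$ to the claimed strict $<$, I would appeal to the non-degenerate case used in the proof of Proposition~\ref{prop: scpo performance guarantee}: the trust-region update considers $\pi'$ different from $\pi_k$, so at least one $E_t$ is strictly positive, and since $(1-\gamma)\gamma^t < 1$ the corresponding term contributes a strict gap.

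The main subtlety, rather than any real obstacle, is the strict inequality: if $\pi' = \pi$ almost surely under every visited marginal then every $E_t$ vanishes and the two sides both equal zero, so the strict version fails in that trivial case. In the usage inside \Cref{sec: proof for reward theorem}, however, this is harmless because one only needs that the constraint $\mathbb{E}_{{\hat s}\sim\bar d^{\pi_k}}[\mathcal{D}_{KL}(\pi\|\pi_k)] \leq \delta$ from \eqref{eq: trust region update} implies the corresponding bound under $d^{\pi_k}$, which follows from the weak inequality alone; stating the result with $<$ simply emphasizes that $\bar d^{\pi_k}$ gives a strictly tighter trust region than $d^{\pi_k}$ whenever the policies differ.
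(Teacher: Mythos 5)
Your proposal is correct and takes essentially the same route as the paper's own proof, which likewise expands both expectations as sums over $t$ and compares the coefficients $(1-\gamma)\gamma^t$ against $1$ in front of the non-negative per-step KL terms. You are in fact slightly more careful than the paper, which asserts the strict inequalities outright even though they degenerate to equality when $\pi'=\pi$; your remark that the weak inequality is all that is needed in \Cref{sec: proof for reward theorem} is accurate.
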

\begin{proof}
    $$
    \begin{aligned}
        \underset{{\hat s} \sim d_\pi}{\mathrm{E}}[\mathcal{D}_{KL}(\pi' \| \pi)[{\hat s}]] &= \sum_{{\hat s}} (1-\gamma)\sum\limits_{t=0}^H\gamma^t P({\hat s}_t={\hat s}|\pi)\mathcal{D}_{KL}(\pi' \| \pi)[{\hat s}] \\ 
        &< \sum_{{\hat s}} \sum\limits_{t=0}^H\gamma^t P({\hat s}_t={\hat s}|\pi)\mathcal{D}_{KL}(\pi' \| \pi)[{\hat s}] \\
        &< \sum_{{\hat s}} \sum\limits_{t=0}^H P({\hat s}_t={\hat s}|\pi)\mathcal{D}_{KL}(\pi' \| \pi)[{\hat s}] \\
        &= \underset{{\hat s} \sim \bar d_\pi}{\mathrm{E}}[\mathcal{D}_{KL}(\pi' \| \pi)[{\hat s}]].
    \end{aligned}
    $$
\end{proof}

\subsection{Proof for \Cref{theo: reward}}
\label{sec: prof of theo2}
The choice of $f=V_{\pi}, g = R$ in \cref{lem: lemma 4} leads to following inequality: 

For any policies $\pi', \pi$, with $\epsilon^{\pi'} \doteq max_{{\hat s}}|E_{a\sim\pi'}[A_{\pi}({\hat s},a)]|$, the following bound holds:
$$
\begin{aligned}
\mathcal{J}(\pi') - \mathcal{J}(\pi) &\geq \underset{\substack{{\hat s}\sim \dot d^\pi\\a\sim\pi'}}{\mathrm{E}}\left[A_\pi({\hat s},a)-2(\sum_{t=0}^H\gamma^{t+1})\epsilon^{\pi'}D_{TV}(\pi'||\pi)[{\hat s}]\right] \\ \nonumber
& > \frac{1}{1-\gamma}\underset{\substack{{\hat s}\sim d^\pi\\a\sim\pi'}}{\mathrm{E}}\left[A_\pi({\hat s},a)-\frac{2\gamma\epsilon^{\pi'}}{1-\gamma}D_{TV}(\pi'||\pi)[{\hat s}]\right]
\end{aligned} 
$$
At this point, the \cref{theo: reward} is proved.

\newpage

\section{SCPO Pseudocode}
\label{append: scpo}

\begin{algorithm}
\caption{State-wise Constrained Policy Optimization}\label{alg:scpo_main}
\begin{algorithmic}
\State \textbf{Input:} Initial policy $\pi_0\in\Pi_\theta$.
\For{$k=0,1,2,\dots$}
\State Sample trajectory $\tau\sim\pi_k=\pi_{\theta_k}$
\State Estimate gradient $g \gets \left.\nabla_{\theta}\mathbb{E}_{{\hat s},a\sim\tau}\left[A^{\pi}({\hat s}, a)\right]\right\rvert_{\theta=\theta_k}$\Comment{\cref{sec:practical}}
\State Estimate gradient $b_i \gets \left.\nabla_{\theta}\mathbb{E}_{{\hat s},a\sim\tau}\left[A^{\pi}_{D_i}({\hat s}, a)\right]\right\rvert_{\theta=\theta_k},~\forall i=1,2,\dots,m$\Comment{\cref{sec:practical}}
\State Estimate Hessian $H \gets \left.\nabla^2_{\theta} \mathbb{E}_{{\hat s} \sim \tau}[\mathcal{D}_{KL}(\pi \| \pi_k)[{\hat s}]]\right\rvert_{\theta=\theta_k}$
\State Solve convex programming \Comment{\cite{achiam2017cpo}}\begin{align*}
    \theta^*_{k+1} = \underset{\theta}{\textbf{argmax}} &~~~ g^\top(\theta-\theta_k) \\
    \textbf{s.t.}~ &~~~\frac{1}{2}(\theta-\theta_k)^\top H (\theta-\theta_k) \leq \delta \\
    &~~~ c_i + b_i^\top(\theta-\theta_k) \leq 0,~i=1,2,\dots,m
\end{align*}
\State Get search direction $\Delta\theta^* \gets \theta^*_{k+1} - \theta_k$
\For{$j=0,1,2,\dots$} \Comment{Line search}
\State $\theta' \gets \theta_{k} + \xi^j\Delta\theta^*$ \Comment{$\xi\in(0,1)$ is the backtracking coefficient}
\If{
$\mathbb{E}_{{\hat s} \sim \tau}[\mathcal{D}_{KL}(\pi_{\theta'} \| \pi_k)[{\hat s}]] \leq \delta$ 
\textbf{and} \Comment{Trust region}\\
$~~~~~~~~~~~~~~\mathbb{E}_{{\hat s},a\sim\tau}\left[A^{\pi_{\theta'}}_{D_i}({\hat s}, a)\right] - \mathbb{E}_{{\hat s},a\sim\tau}\left[A^{\pi_{k}}_{D_i}({\hat s}, a)\right] \leq \mathrm{max}(-c_i,0),~\forall i$ \textbf{and} \Comment{Costs}\\
$~~~~~~~~~~~~~~(\mathbb{E}_{{\hat s},a\sim\tau}\left[A^{\pi_{\theta'}}({\hat s}, a)\right] \geq \mathbb{E}_{{\hat s},a\sim\tau}\left[A^{\pi_k}({\hat s}, a)\right] \mathbf{or}~\mathrm{infeasible~\eqref{eq: scpo optimization final}})$} \Comment{Rewards}
\State $\theta_{k+1} \gets \theta'$ \Comment{Update policy}
\State \textbf{break}
\EndIf
\EndFor
\EndFor
\end{algorithmic}
\end{algorithm}

\clearpage
\newpage
\section{Expeiment Details}
\label{sec: experiment details}
\subsection{Environment Settings}
\paragraph{Goal Task}
In the Goal task environments, the reward function is:
\begin{equation}\notag
\begin{split}
    & r(x_t) = d^{g}_{t-1} - d^{g}_{t} + \mathbf{1}[d^g_t < R^g]~,\\
\end{split}
\end{equation}
where $d^g_t$ is the distance from the robot to its closest goal and $R^g$ is the size (radius) of the goal. When a goal is achieved, the goal location is randomly reset to someplace new while keeping the rest of the layout the same. The test suites of our experiments are summarized in \Cref{tab: testing suites}.

\begin{table}[t]
\vskip 0.15in
\caption{The test suites environments of our experiments}
\begin{center}
\begin{tabular}{c|ccc|ccc|c}
\toprule
 & \multicolumn{6}{c|}{Ground robot} &\multicolumn{1}{c}{Aerial robot}\\
\cline{2-8}\\[-1.02em]
\textbf{Task Setting} & \multicolumn{3}{c|}{Low dimension} &\multicolumn{3}{c|}{High dimension} &\\
\cline{2-8}\\[-1.02em]
&Point &Swimmer & \multicolumn {1}{c|}{Arm3} 
& Walker &Ant & \multicolumn {1}{c|}{Humanoid} & Drone\\
\cline{1-8}\\[-1.02em]
Hazard-1    & \checkmark & \checkmark &      &      &       &     & \cellcolor[HTML]{C0C0C0}   \\
Hazard-4    & \checkmark & \checkmark &      &      &       &     & \cellcolor[HTML]{C0C0C0}   \\
Hazard-8    & \checkmark & \checkmark &\checkmark& \checkmark & \checkmark&\checkmark & \cellcolor[HTML]{C0C0C0}   \\
Pillar-1    & \checkmark &            &      &      &       &     & \cellcolor[HTML]{C0C0C0}   \\
Pillar-4    & \checkmark &            &      &      &       &     & \cellcolor[HTML]{C0C0C0}   \\
Pillar-8    & \checkmark &            &      &      &       &     & \cellcolor[HTML]{C0C0C0}   \\
3DHazard-1 & \cellcolor[HTML]{C0C0C0} & \cellcolor[HTML]{C0C0C0}& \cellcolor[HTML]{C0C0C0}& \cellcolor[HTML]{C0C0C0} & \cellcolor[HTML]{C0C0C0} & \cellcolor[HTML]{C0C0C0} & \checkmark\\
3DHazard-4 &  \cellcolor[HTML]{C0C0C0} & \cellcolor[HTML]{C0C0C0}& \cellcolor[HTML]{C0C0C0}& \cellcolor[HTML]{C0C0C0} & \cellcolor[HTML]{C0C0C0} & \cellcolor[HTML]{C0C0C0} & \checkmark\\
3DHazard-8 &  \cellcolor[HTML]{C0C0C0} & \cellcolor[HTML]{C0C0C0}& \cellcolor[HTML]{C0C0C0}& \cellcolor[HTML]{C0C0C0} & \cellcolor[HTML]{C0C0C0} & \cellcolor[HTML]{C0C0C0} & \checkmark\\
\bottomrule
\end{tabular}
\label{tab: testing suites}
\end{center}
\end{table}
\paragraph{Hazard Constraint}
In the Hazard constraint environments, the cost function is:
\begin{equation}\notag
\begin{split}
    & c(x_t) = \max(0, R^h - d^h_t)~,\\
\end{split}
\end{equation}
where $d^h_t$ is the distance to the closest hazard and $R^h$ is the size (radius) of the hazard.
\paragraph{Pillar Constraint}
In the Pillar constraint environments, the cost $c_t = 1$ if the robot contacts with the pillar otherwise $c_t = 0$. 

\paragraph{State Space}
The state space is composed of two parts. The internal state spaces describe the state of the robots, which can be obtained from standard robot sensors (accelerometer, gyroscope, magnetometer, velocimeter, joint position sensor, joint velocity sensor and touch sensor). The details of the internal state spaces of the robots in our test suites are summarized in \Cref{tab:internal_state_space}.
The external state spaces are describe the state of the environment observed by the robots, which can be obtained from 2D lidar or 3D lidar (where each lidar sensor perceives objects of a single kind). The state spaces of all the test suites are summarized in \Cref{tab:external_state_space}. Note that Vase and Gremlin are two other constraints in Safety Gym \red{\citep{ray2019benchmarking}} and all the returns of vase lidar and gremlin lidar are zero vectors (i.e., $[0, 0, \cdots, 0] \in \mathbb{R}^{16}$) in our experiments since none of our test suites environments has vases.

\begin{table}[h]
\vskip 0.15in
\caption{The internal state space components of different test suites environments.}
\begin{center}
\begin{tabular}{c|ccccccc}
\toprule
\textbf{Internal State Space} & Point  & Swimmer & Walker & Ant & Drone & Arm3 & Humanoid\\
\hline
Accelerometer ($\mathbb{R}^3$) & \checkmark & \checkmark & \checkmark & \checkmark & \checkmark  & \checkmark $\times 5$ & \checkmark\\
Gyroscope ($\mathbb{R}^3$) & \checkmark & \checkmark & \checkmark & \checkmark & \checkmark  & \checkmark $\times 5$ & \checkmark\\
Magnetometer ($\mathbb{R}^3$) & \checkmark & \checkmark & \checkmark & \checkmark & \checkmark  & \checkmark $\times 5$ & \checkmark\\
Velocimeter ($\mathbb{R}^{3}$) & \checkmark & \checkmark & \checkmark & \checkmark & \checkmark & \checkmark $\times 5$ & \checkmark\\
Joint position sensor ($\mathbb{R}^{n}$) & ${n=0}$ & ${n=2}$ & ${n=10}$ & ${n=8}$ & ${n=0}$ & ${n=3}$ & ${n=17}$\\
Joint velocity sensor ($\mathbb{R}^{n}$)  & ${n=0}$ & ${n=2}$ & ${n=10}$ & ${n=8}$ & ${n=0}$ & ${n=3}$ & ${n=17}$\\
Touch sensor ($\mathbb{R}^{n}$) & ${n=0}$ & ${n=4}$ & ${n=2}$ & ${n=8}$ & ${n=0}$ & ${n=1}$ & ${n=2}$ \\
\bottomrule
\end{tabular}
\label{tab:internal_state_space}
\end{center}
\end{table}

\begin{table}[h]
\vskip 0.15in
\caption{The external state space components of different test suites environments.}
\begin{center}
\begin{tabular}{c|ccc}
\toprule
\textbf{External State Space} &  Goal-Hazard & 3D-Goal-Hazard & Goal-Pillar\\
\hline\\[-0.8em]
Goal Compass ($\mathbb{R}^{3}$) & \checkmark & \checkmark & \checkmark\\
Goal Lidar ($\mathbb{R}^{16}$) & \checkmark & \xmark & \checkmark\\
3D Goal Lidar ($\mathbb{R}^{60}$) & \xmark & \checkmark & \xmark\\
Hazard Lidar ($\mathbb{R}^{16}$) & \checkmark & \xmark & \xmark\\
3D Hazard Lidar ($\mathbb{R}^{60}$) & \xmark & \checkmark & \xmark\\
Pillar Lidar ($\mathbb{R}^{16}$) & \xmark & \xmark & \checkmark\\
Vase Lidar ($\mathbb{R}^{16}$) & \checkmark & \xmark & \checkmark\\
Gremlin Lidar ($\mathbb{R}^{16}$) & \checkmark & \xmark & \checkmark\\
\bottomrule
\end{tabular}
\label{tab:external_state_space}
\end{center}
\end{table}

\paragraph{Control Space}
For all the experiments, the control space of all robots are continuous, and linearly scaled to [-1, +1].

\subsection{Policy Settings}
The hyper-parameters used in our experiments are listed in \Cref{tab:policy_setting} as default.

Our experiments use separate multi-layer perception with ${tanh}$ activations for the policy network, value network and cost network. Each network consists of two hidden layers of size (64,64). All of the networks are trained using $Adam$ optimizer with learning rate of 0.01.

We apply an on-policy framework in our experiments. During each epoch the agent interact $B$ times with the environment and then perform a policy update based on the experience collected from the current epoch. The maximum length of the trajectory is set to 1000 and the total epoch number $N$ is set to 200 as default. In our experiments the Walker and the Ant were trained for 1000 epochs due to the high dimension.

The policy update step is based on the scheme of TRPO, which performs up to 100 steps of backtracking with a coefficient of 0.8 for line searching.

For all experiments, we use a discount factor of $\gamma = 0.99$, an advantage discount factor $\lambda =0.95$, and a KL-divergence step size of $\delta_{KL} = 0.02$.

For experiments which consider cost constraints we adopt a target cost $\delta_{c} = 0.0$ to pursue a zero-violation policy.

Other unique hyper-parameters for each algorithms are hand-tuned to attain reasonable performance. 

Each model is trained on a server with a 48-core Intel(R) Xeon(R) Silver 4214 CPU @ 2.2.GHz, Nvidia RTX A4000 GPU with 16GB memory, and Ubuntu 20.04.

For low-dimensional tasks, we train each model for 6e6 steps which takes around seven hours.
For high-dimensional tasks, we train each model for 3e7 steps which takes around 60 hours.

\begin{sidewaystable}
\vskip 0.15in
\caption{Important hyper-parameters of different algorithms in our experiments}
\begin{center}
\resizebox{\textwidth}{!}{%
\begin{tabular}{lr|ccccccccc}
\toprule
\textbf{Policy Parameter} & & TRPO & TRPO-Lagrangian & TRPO-SL [18' Dalal]& TRPO-USL & TRPO-IPO & TRPO-FAC & CPO & PCPO &  SCPO\\
\hline\\[-1.0em]
Epochs & $N$ & 200 & 200 & 200 & 200 & 200 & 200 & 200 & 200 & 200 \\
Steps per epoch & $B$& 30000 & 30000 & 30000 & 30000 & 30000 & 30000 & 30000 & 30000 & 30000\\
Maximum length of trajectory & $L$ & 1000 & 1000 & 1000 & 1000 & 1000 & 1000 & 1000 & 1000 & 1000 \\
Policy network hidden layers & & (64, 64) & (64, 64) & (64, 64) & (64, 64) & (64, 64) & (64, 64) & (64, 64) & (64, 64) & (64, 64)\\
Discount factor  &  $\gamma$ & 0.99 & 0.99 & 0.99 & 0.99 & 0.99 & 0.99 & 0.99 & 0.99 & 0.99 \\
Advantage discount factor  & $\lambda$ & 0.97 & 0.97 & 0.97 & 0.97 & 0.97 & 0.97 & 0.97 & 0.97 & 0.97 \\
TRPO backtracking steps & &100 &100 &100 &100 &100 &100 &100 & - &100\\
TRPO backtracking coefficient & &0.8 &0.8 &0.8 &0.8 &0.8 &0.8 &0.8 & - &0.8\\
Target KL & $\delta_{KL}$& 0.02 & 0.02 & 0.02 & 0.02 & 0.02 & 0.02 & 0.02 & 0.02 & 0.02\\

Value network hidden layers & & (64, 64) & (64, 64) & (64, 64) & (64, 64) & (64, 64) & (64, 64) & (64, 64) & (64, 64) & (64, 64)\\
Value network iteration & & 80 & 80 & 80 & 80 & 80 & 80 & 80 & 80 & 80 \\
Value network optimizer & & Adam & Adam & Adam & Adam & Adam & Adam & Adam & Adam & Adam\\
Value learning rate & & 0.001 & 0.001 & 0.001 & 0.001 & 0.001 & 0.001 & 0.001 & 0.001 & 0.001\\

Cost network hidden layers & & - & (64, 64) & (64, 64) & (64, 64) & - & (64, 64) & (64, 64) & (64, 64) & (64, 64)\\
Cost network iteration & & - & 80 & 80 & 80 & - & 80 & 80 & 80 & 80\\
Cost network optimizer & & - & Adam & Adam & Adam & - & Adam & Adam & Adam & Adam\\
Cost learning rate & & - & 0.001 & 0.001 & 0.001 & - & 0.001 & 0.001 & 0.001 & 0.001\\
Target Cost & $\delta_{c}$& - & 0.0 &  0.0 &  0.0 &  0.0 & 0.0 & 0.0 & 0.0 & 0.0\\

Lagrangian optimizer & & - & - & - & - & - & Adam & - & - & -\\
Lagrangian learning rate & & - & 0.005 & - & - & - & 0.0001 & - & - & -\\
USL correction iteration & & - & - & - & 20 & - & - & - & - & -\\
USL correction rate & & - & - & - & 0.05 & - & - & - & - & -\\
Warmup ratio & & - & - & 1/3 & 1/3 & - & - & - & - & -\\
IPO parameter  & ${t}$ & - & - & - & - & 0.01 & - & - & - & -\\
Cost reduction & & - & - & - & - & - & - & 0.0 & - & 0.0\\
\bottomrule
\end{tabular}
}
\label{tab:policy_setting}
\end{center}
\end{sidewaystable}

\subsection{Metrics Comparison}
\label{sec:metrics}
In \Cref{tab: point_hazard,tab: point_pillar,tab: swimmer_hazard,tab: drone_3Dhazard,tab: ant_walker_hazard}, we report all the $14$ results of our test suites by three metrics:
\begin{itemize}
    \item The average episode return $J_r$.
    \item The average episodic sum of costs $M_c$.
    \item The average state-wise cost over the entirety of training $\rho_c$.
\end{itemize}
All of the three metrics were obtained from the final epoch after convergence. Each metric was averaged over two random seed.

The learning curves of all experiments are shown in \Cref{fig:exp-point-hazard,fig:exp-point-pillar,fig:exp-swimmer-hazard,fig:exp-drone-3Dhazard,fig:exp-ant-walker-hazard,fig:exp-ant-walker-hazard}. 

A few general trends can be observed:

\begin{itemize}
    \item All methods can converge to good reward performance under different task settings after about ${1e6}$ time steps. However, it often takes more time for the cost performance to get converge.
    \item The reward learning speed and the cost learning rate trade off against each other because the algorithms without state-wise constraints are more likely to explore unsafe state to gather more rewards.
\end{itemize}

\begin{table}[p]
\caption{Metrics of three \textbf{Point-Hazard} environments obtained from the final epoch.}
\begin{subtable}[tb]{4cm}
\caption{Point-Hazard-1}
\begin{center}
\vskip -0.1in
\resizebox{!}{1.3cm}{
\begin{tabular}{c|ccc}
\toprule
\textbf{Algorithm} & $\bar{J}_r$ & $\bar{M}_c$ & $\bar{\rho}_c$\\
\hline\\[-0.8em]
TRPO & 2.5779 & 0.7340 & 0.0086\\
TRPO-Lagrangian & \textbf{2.6313} & 0.5977 & 0.0058\\
TRPO-SL & 2.4721 & 11.7396 & 0.0116\\
TRPO-USL & 2.5410 & 0.5381 & 0.0083\\
TRPO-IPO & 2.5779 & 0.7340 & 0.0086\\
TRPO-FAC & 2.5731 & 0.3263 & 0.0040\\
CPO & 2.4988 & 0.1713 & 0.0045\\
PCPO & 2.4928 & 0.3765 & 0.0054\\
SCPO & 2.5822 & \textbf{0.0807} & \textbf{0.0013}\\
\bottomrule
\end{tabular}}
\label{tab: point_hazard_1}
\end{center}
\end{subtable}
\hfill
\begin{subtable}[tb]{4cm}
\caption{Point-Hazard-4}
\vspace{-0.1in}
\begin{center}
\resizebox{!}{1.3cm}{
\begin{tabular}{c|ccc}
\toprule
\textbf{Algorithm} & $\bar{J}_r$ & $\bar{M}_c$ & $\bar{\rho}_c$\\
\hline\\[-0.8em]
TRPO & 2.5925 & 0.2412 & 0.0037\\
TRPO-Lagrangian & 2.5494 & 0.2108 & 0.0034\\
TRPO-SL & 2.5174 & 0.2915 & 0.0037\\
TRPO-USL & \textbf{2.6140} & 0.2695 & 0.0035\\
TRPO-IPO & 2.5946 & 0.2297 & 0.0038\\
TRPO-FAC & 2.5566 & 0.1848 & 0.0028\\
CPO & 2.5924 & 0.1654 & 0.0024\\
PCPO & 2.5575 & 0.1824 & 0.0025\\
SCPO & 2.5607 & \textbf{0.0687} & \textbf{0.0009}\\
\bottomrule
\end{tabular}}
\label{tab: point_hazard_4}
\end{center}
\end{subtable}
\hfill
\begin{subtable}[tb]{4cm}
\caption{Point-Hazard-8}
\vspace{-0.1in}
\begin{center}
\resizebox{!}{1.3cm}{
\begin{tabular}{c|ccc}
\toprule
\textbf{Algorithm} & $\bar{J}_r$ & $\bar{M}_c$ & $\bar{\rho}_c$\\
\hline\\[-0.8em]
TRPO & 2.5761 & 0.5413 & 0.0071\\
TRPO-Lagrangian & 2.5851 & 0.5119 & 0.0064\\
TRPO-SL & 2.5683 & 0.8681 & 0.0071\\
TRPO-USL & 2.5808 & 0.5921 & 0.0070\\
TRPO-IPO & 2.5625 & 0.5047 & 0.0071\\
TRPO-FAC & \textbf{2.6599} & 0.4819 & 0.0059\\
CPO & 2.6440 & 0.2944 & 0.0041\\
PCPO & 2.6249 & 0.3843 & 0.0052\\
SCPO & 2.5793 & \textbf{0.1427} & \textbf{0.0020}\\
\bottomrule
\end{tabular}}
\label{tab: point_hazard_8}
\end{center}
\end{subtable}
\hfill
\label{tab: point_hazard}
\end{table}
\begin{table}[p]
\caption{Metrics of three \textbf{Point-Pillar} experiments obtained from the final epoch.}
\begin{subtable}[tb]{4cm}
\caption{Point-Pillar-1}
\vspace{-0.1in}
\begin{center}
\resizebox{!}{1.3cm}{
\begin{tabular}{c|ccc}
\toprule
\textbf{Algorithm} & $\bar{J}_r$ & $\bar{M}_c$ & $\bar{\rho}_c$\\
\hline\\[-0.8em]
TRPO & 2.6059 & 0.2899 & 0.0026\\
TRPO-Lagrangian & 2.5772 & 0.1218 & 0.0020\\
TRPO-SL & 2.5049 & 0.1191 & 0.0014\\
TRPO-USL & 2.5924 & 0.1483 & 0.0021\\
TRPO-IPO & 2.6059 & 0.2899 & 0.0026\\
TRPO-FAC & \textbf{2.6362} & 0.0698 & 0.0013\\
CPO & 2.5464 & 0.2342 & 0.0028\\
PCPO & 2.5857 & 0.2088 & 0.0025\\
SCPO & 2.5928 & \textbf{0.0040} & \textbf{0.0003}\\
\bottomrule
\end{tabular}}
\label{tab: point_pillar_1}
\end{center}
\end{subtable}
\hfill
\begin{subtable}[tb]{4cm}
\caption{Point-Pillar-4}
\vspace{-0.1in}
\begin{center}
\resizebox{!}{1.3cm}{
\begin{tabular}{c|ccc}
\toprule
\textbf{Algorithm} & $\bar{J}_r$ & $\bar{M}_c$ & $\bar{\rho}_c$\\
\hline\\[-0.8em]
TRPO & 2.5958 & 0.4281 & 0.0061\\
TRPO-Lagrangian & 2.6040 & 0.2786 & 0.0050\\
TRPO-SL & 2.5417 & 0.2548 & 0.0031\\
TRPO-USL & 2.5623 & 0.2977 & 0.0063\\
TRPO-IPO & 2.5958 & 0.4281 & 0.0061\\
TRPO-FAC & \textbf{2.6105} & 0.3223 & 0.0040\\
CPO & 2.5720 & 0.5523 & 0.0062\\
PCPO & 2.5709 & 0.3240 & 0.0052\\
SCPO & 2.5367 & \textbf{0.0064} & \textbf{0.0005}\\
\bottomrule
\end{tabular}}
\label{tab: point_pillar_4}
\end{center}
\end{subtable}
\hfill
\begin{subtable}[tb]{4cm}
\caption{Point-Pillar-8}
\vspace{-0.1in}
\begin{center}
\resizebox{!}{1.3cm}{
\begin{tabular}{c|ccc}
\toprule
\textbf{Algorithm} & $\bar{J}_r$ & $\bar{M}_c$ & $\bar{\rho}_c$\\
\hline\\[-0.8em]
TRPO & 2.6095 & 3.4805 & 0.0212\\
TRPO-Lagrangian & 2.6164 & 0.6632 & 0.0129\\
TRPO-SL & 2.5585 & 1.5260 & 0.0074\\
TRPO-USL & 2.5836 & 0.6743 & 0.0172\\
TRPO-IPO & 2.6095 & 3.4805 & 0.0212\\
TRPO-FAC & 2.5701 & 0.4257 & 0.0068\\
CPO & \textbf{2.6440} & 0.5655 & 0.0166\\
PCPO & 2.5704 & 6.6251 & 0.0219\\
SCPO & 2.4162 & \textbf{0.2589} & \textbf{0.0024}\\
\bottomrule
\end{tabular}}
\label{tab: point_pillar_8}
\end{center}
\end{subtable}
\hfill
\label{tab: point_pillar}
\end{table}

\begin{table}[p]
\caption{Metrics of three \textbf{Swimmer-Hazard} experiments obtained from the final epoch.}
\begin{subtable}[tb]{4cm}
\caption{Swimmer-Hazard-1}
\vspace{-0.1in}
\begin{center}
\resizebox{!}{1.3cm}{
\begin{tabular}{c|ccc}
\toprule
\textbf{Algorithm} & $\bar{J}_r$ & $\bar{M}_c$ & $\bar{\rho}_c$\\
\hline\\[-0.8em]
TRPO & 2.6062 & 0.5326 & 0.0070\\
TRPO-Lagrangian & 2.6044 & 0.4060 & 0.0056\\
TRPO-SL & 2.5269 & 10.0374 & 0.0382\\
TRPO-USL & \textbf{2.6296} & 0.3754 & 0.0050\\
TRPO-IPO & 2.6062 & 0.5326 & 0.0070\\
TRPO-FAC & 2.5765 & 0.2439 & 0.0041\\
CPO & 2.6126 & 0.4115 & 0.0049\\
PCPO & 2.5741 & 0.4670 & 0.0051\\
SCPO & 2.6006 & \textbf{0.0743} & \textbf{0.0009}\\
\bottomrule
\end{tabular}}
\label{tab: swimmer_hazard_1}
\end{center}
\end{subtable}
\hfill
\begin{subtable}[tb]{4cm}
\caption{Swimmer-Hazard-4}
\vspace{-0.1in}
\begin{center}
\resizebox{!}{1.3cm}{
\begin{tabular}{c|ccc}
\toprule
\textbf{Algorithm} & $\bar{J}_r$ & $\bar{M}_c$ & $\bar{\rho}_c$\\
\hline\\[-0.8em]
TRPO & 2.5897 & 0.2046 & 0.0033\\
TRPO-Lagrangian & 2.6128 & 0.3953 & 0.0038\\
TRPO-SL & 2.5056 & 4.6391 & 0.0206\\
TRPO-USL & 2.6103 & 0.2260 & 0.0027\\
TRPO-IPO & 2.5844 & 0.2739 & 0.0033\\
TRPO-FAC & 2.5984 & 0.1997 & 0.0028\\
CPO & 2.6023 & 0.1368 & 0.0021\\
PCPO & 2.5922 & 0.4265 & 0.0033\\
SCPO & \textbf{2.6317} & \textbf{0.1082} & \textbf{0.0012}\\
\bottomrule
\end{tabular}}
\label{tab: swimmer_hazard_4}
\end{center}
\end{subtable}
\hfill
\begin{subtable}[tb]{4cm}
\caption{Swimmer-Hazard-8}
\vspace{-0.1in}
\begin{center}
\resizebox{!}{1.3cm}{
\begin{tabular}{c|ccc}
\toprule
\textbf{Algorithm} & $\bar{J}_r$ & $\bar{M}_c$ & $\bar{\rho}_c$\\
\hline\\[-0.8em]
TRPO & 2.6322 & 0.4843 & 0.0067\\
TRPO-Lagrangian & 2.5979 & 0.4205 & 0.0058\\
TRPO-SL & 2.4930 & 9.6048 & 0.0316\\
TRPO-USL & 2.6133 & 0.4259 & 0.0059\\
TRPO-IPO & 2.6322 & 0.4843 & 0.0067\\
TRPO-FAC & 2.6037 & 0.5606 & 0.0056\\
CPO & \textbf{2.6335} & 0.4201 & 0.0045\\
PCPO & 2.5895 & 0.7420 & 0.0063\\
SCPO & 2.5604 & \textbf{0.1527} & \textbf{0.0030}\\
\bottomrule
\end{tabular}}
\label{tab: swimmer_hazard_8}
\end{center}
\end{subtable}
\hfill
\label{tab: swimmer_hazard}
\end{table}

\begin{table}[p]
\caption{Metrics of three \textbf{Drone-3DHazard} experiments obtained from the final epoch.}
\begin{subtable}[tb]{4cm}
\caption{Drone-3DHazard-1}
\vspace{-0.1in}
\begin{center}
\resizebox{!}{1.3cm}{
\begin{tabular}{c|ccc}
\toprule
\textbf{Algorithm} & $\bar{J}_r$ & $\bar{M}_c$ & $\bar{\rho}_c$\\
\hline\\[-0.8em]
TRPO & 2.3777 & 0.3086 & 0.0014\\
TRPO-Lagrangian & 2.4149 & 0.0766 & 0.0007\\
TRPO-SL & 2.4300 & \textbf{0.0044} & 0.0004\\
TRPO-USL & 2.3760 & 0.0690 & 0.0008\\
TRPO-IPO & 2.3724 & 0.2032 & 0.0011\\
TRPO-FAC & 2.3856 & 0.0537 & 0.0007\\
CPO & \textbf{2.4464} & 0.0706 & 0.0007\\
PCPO & 2.1118 & 3.2450 & 0.0015\\
SCPO & 2.3860 & 0.0423 & \textbf{0.0002}\\
\bottomrule
\end{tabular}}
\label{tab: drone_3Dhazard_1}
\end{center}
\end{subtable}
\hfill
\begin{subtable}[tb]{4cm}
\caption{Drone-3DHazard-4}
\vspace{-0.1in}
\begin{center}
\resizebox{!}{1.3cm}{
\begin{tabular}{c|ccc}
\toprule
\textbf{Algorithm} & $\bar{J}_r$ & $\bar{M}_c$ & $\bar{\rho}_c$\\
\hline\\[-0.8em]
TRPO & 2.4163 & 0.3008 & 0.0025\\
TRPO-Lagrangian & 2.4175 & 0.1990 & 0.0022\\
TRPO-SL & 2.3748 & \textbf{0.0529} & 0.0011\\
TRPO-USL & \textbf{2.4658} & 0.1264 & 0.0017\\
TRPO-IPO & 2.4163 & 0.3008 & 0.0025\\
TRPO-FAC & 2.3839 & 0.0867 & 0.0015\\
CPO & 2.3995 & 0.3610 & 0.0026\\
PCPO & 2.4180 & 1.0088 & 0.0034\\
SCPO & 2.4034 & 0.0545 & \textbf{0.0008}\\
\bottomrule
\end{tabular}}
\label{tab: drone_3Dhazard_4}
\end{center}
\end{subtable}
\hfill
\begin{subtable}[tb]{4cm}
\caption{Drone-3DHazard-8}
\vspace{-0.1in}
\begin{center}
\resizebox{!}{1.3cm}{
\begin{tabular}{c|ccc}
\toprule
\textbf{Algorithm} & $\bar{J}_r$ & $\bar{M}_c$ & $\bar{\rho}_c$\\
\hline\\[-0.8em]
TRPO & 2.4206 & 0.4561 & 0.0057\\
TRPO-Lagrangian & 2.4237 & 0.1962 & 0.0034\\
TRPO-SL & 2.4255 & 0.1635 & 0.0022\\
TRPO-USL & 2.4488 & 0.2052 & 0.0037\\
TRPO-IPO & 2.4206 & 0.4561 & 0.0057\\
TRPO-FAC & \textbf{2.4600} & 0.1069 & 0.0022\\
CPO & 2.4221 & 0.6941 & 0.0041\\
PCPO & 2.1837 & 0.5179 & 0.0027\\
SCPO & 2.3846 & \textbf{0.0478} & \textbf{0.0012}\\
\bottomrule
\end{tabular}}
\label{tab: drone_3Dhazard_8}
\end{center}
\end{subtable}
\hfill
\label{tab: drone_3Dhazard}
\end{table}

\begin{table}[p]
\caption{Metrics of \textbf{Ant-Hazard} and \textbf{Walker-Hazard} experiments obtained from the final epoch.}
\centering
\begin{subtable}[tb]{5cm}
\caption{Ant-Hazard-8}
\vspace{-0.1in}
\begin{center}
\resizebox{!}{1.3cm}{
\begin{tabular}{c|ccc}
\toprule
\textbf{Algorithm} & $\bar{J}_r$ & $\bar{M}_c$ & $\bar{\rho}_c$\\
\hline\\[-0.8em]
TRPO & 2.6203 & 0.1869 & 0.0084\\
TRPO-Lagrangian & \textbf{2.6336} & 0.1667 & 0.0058\\
TRPO-SL & 2.5522 & 4.1269 & 0.0510\\
TRPO-USL & 2.6153 & 0.2108 & 0.0083\\
TRPO-IPO & 2.6197 & 0.1990 & 0.0083\\
TRPO-FAC & 2.6218 & 0.0955 & 0.0051\\
CPO & 2.6103 & 0.1330 & 0.0066\\
PCPO & 2.6281 & 0.1046 & 0.0059\\
SCPO & 2.5873 & \textbf{0.0327} & \textbf{0.0021}\\
\bottomrule
\end{tabular}}
\label{tab: ant_hazard_8}
\end{center}
\end{subtable}
\begin{subtable}[tb]{5cm}
\caption{Walker-Hazard-8}
\vspace{-0.1in}
\begin{center}
\resizebox{!}{1.3cm}{
\begin{tabular}{c|ccc}
\toprule
\textbf{Algorithm} & $\bar{J}_r$ & $\bar{M}_c$ & $\bar{\rho}_c$\\
\hline\\[-0.8em]
TRPO & 2.6471 & 0.3274 & 0.0096\\
TRPO-Lagrangian & 2.6167 & 0.2194 & 0.0071\\
TRPO-SL & \textbf{2.6476} & 0.9863 & 0.0204\\
TRPO-USL & 2.6239 & 0.3148 & 0.0095\\
TRPO-IPO & 2.6397 & 0.3115 & 0.0096\\
TRPO-FAC & 2.5917 & 0.1283 & 0.0049\\
CPO & 2.6211 & 0.1779 & 0.0069\\
PCPO & 2.6410 & 0.2013 & 0.0074\\
SCPO & 2.5751 & \textbf{0.0546} & \textbf{0.0029}\\
\bottomrule
\end{tabular}}
\label{tab: walker_hazard_8}
\end{center}
\end{subtable}
\label{tab: ant_walker_hazard}
\end{table}
\begin{figure}[p]
    \centering
    \begin{subfigure}[t]{0.32\textwidth}
    \begin{subfigure}[t]{1.00\textwidth}
        \raisebox{-\height}{\includegraphics[height=0.7\textwidth]{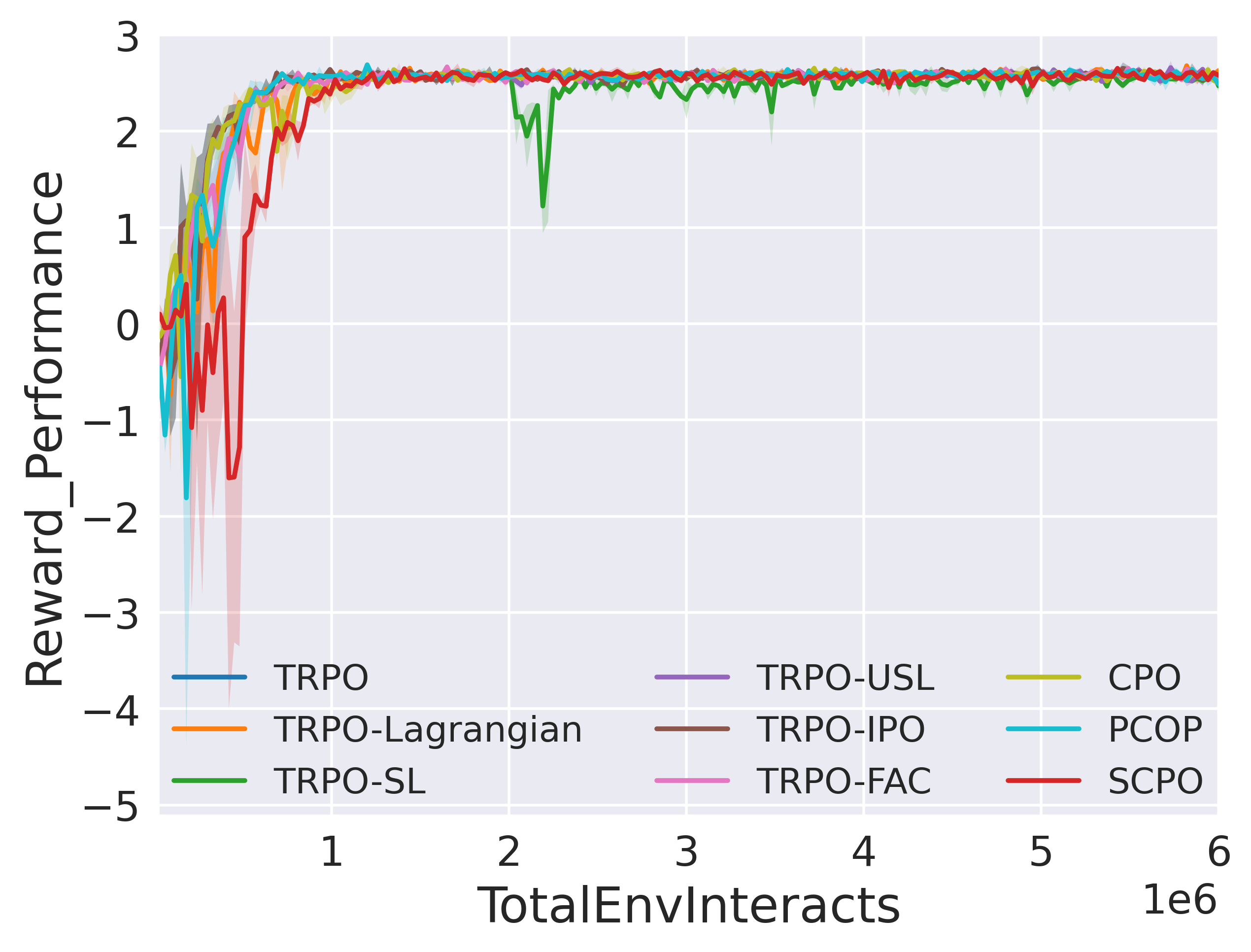}}
        \label{fig:point-hazard-1-Performance}
    \end{subfigure}
    \hfill
    \begin{subfigure}[t]{1.00\textwidth}
        \raisebox{-\height}{\includegraphics[height=0.7\textwidth]{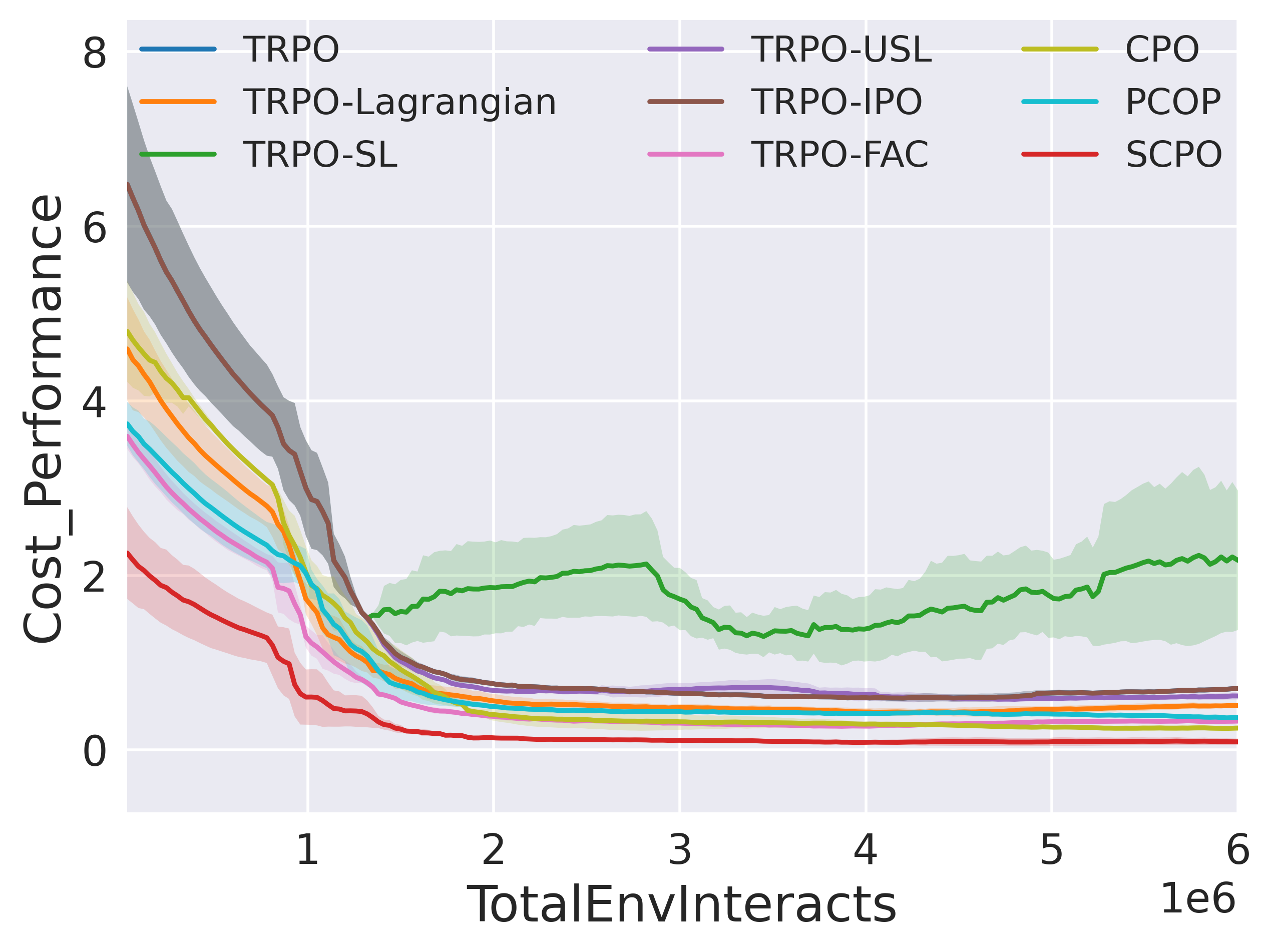}}
    \label{fig:point-hazard-1-AverageEpCost}
    \end{subfigure}
    \hfill
    \begin{subfigure}[t]{1.00\textwidth}
        \raisebox{-\height}{\includegraphics[height=0.7\textwidth]{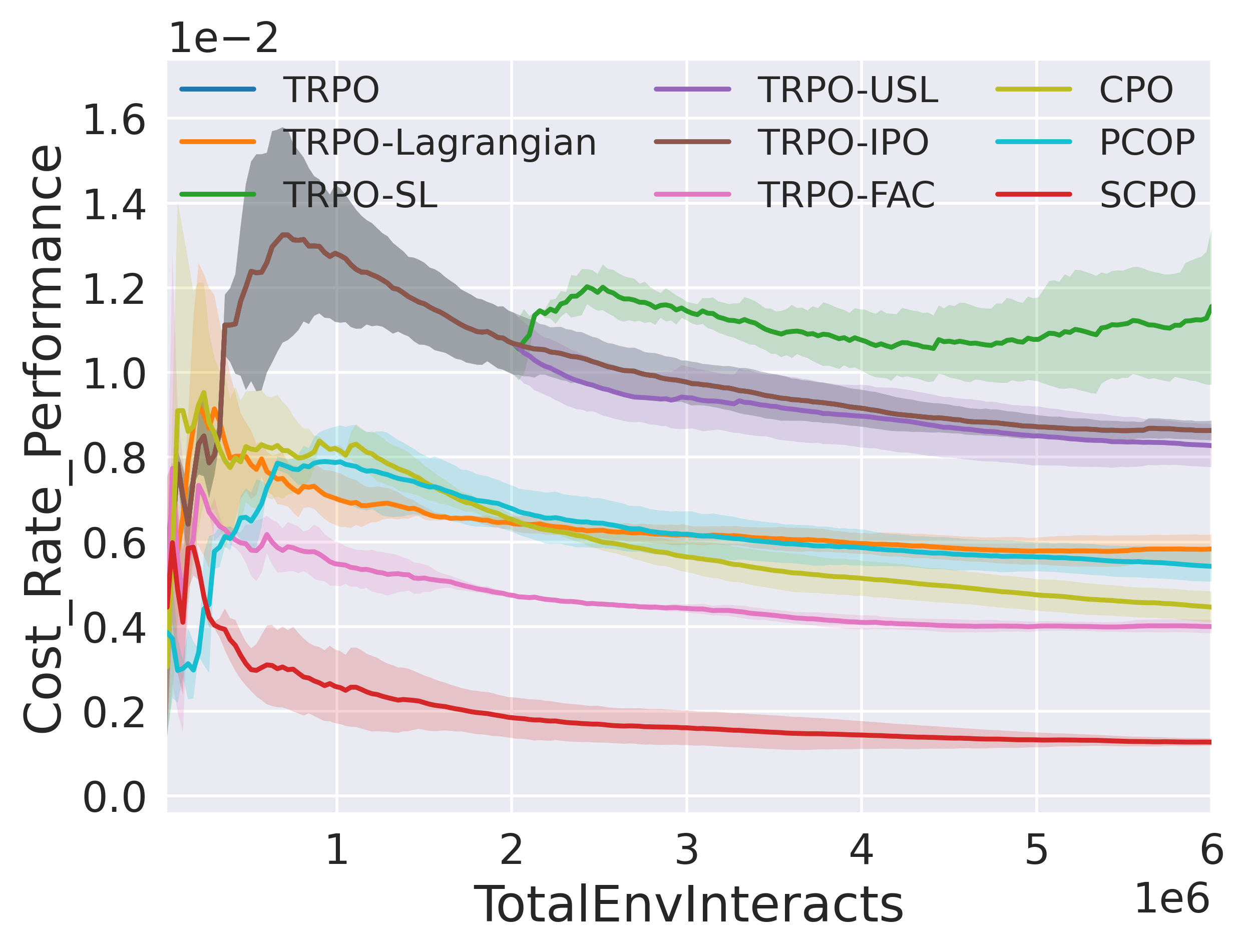}}
    \label{fig:point-hazard-1-CostRate}
    \end{subfigure}
    \caption{Point-Hazard-1}
    \label{fig:point-hazard-1}
    \end{subfigure}
   \begin{subfigure}[t]{0.32\textwidth}
    \begin{subfigure}[t]{1.00\textwidth}
        \raisebox{-\height}{\includegraphics[height=0.7\textwidth]{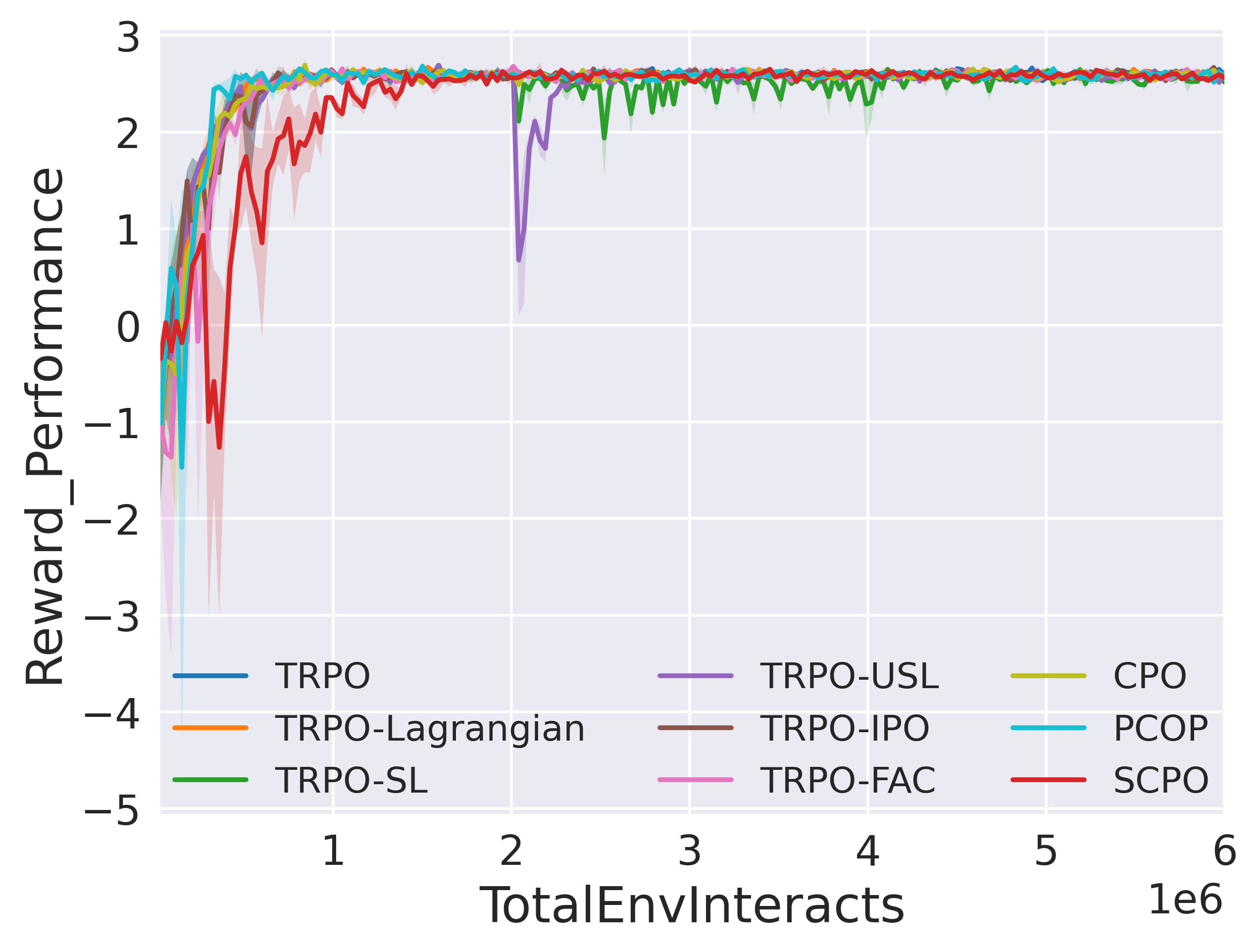}}
        \label{fig:point-hazard-4-Performance}
    \end{subfigure}
    \hfill
    \begin{subfigure}[t]{1.00\textwidth}
        \raisebox{-\height}{\includegraphics[height=0.7\textwidth]{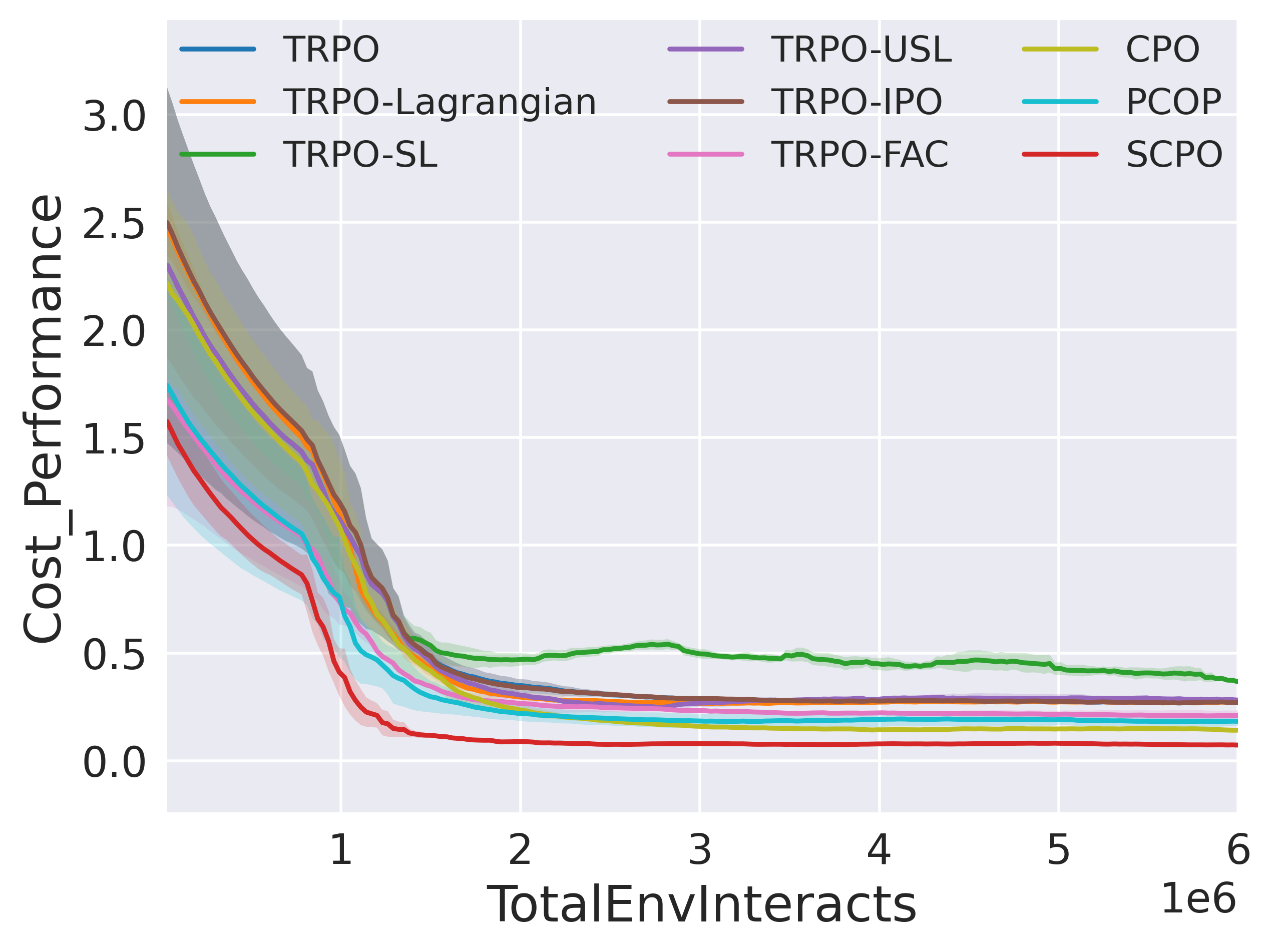}}
    \label{fig:point-hazard-4-AverageEpCost}
    \end{subfigure}
    \hfill
    \begin{subfigure}[t]{1.00\textwidth}
        \raisebox{-\height}{\includegraphics[height=0.7\textwidth]{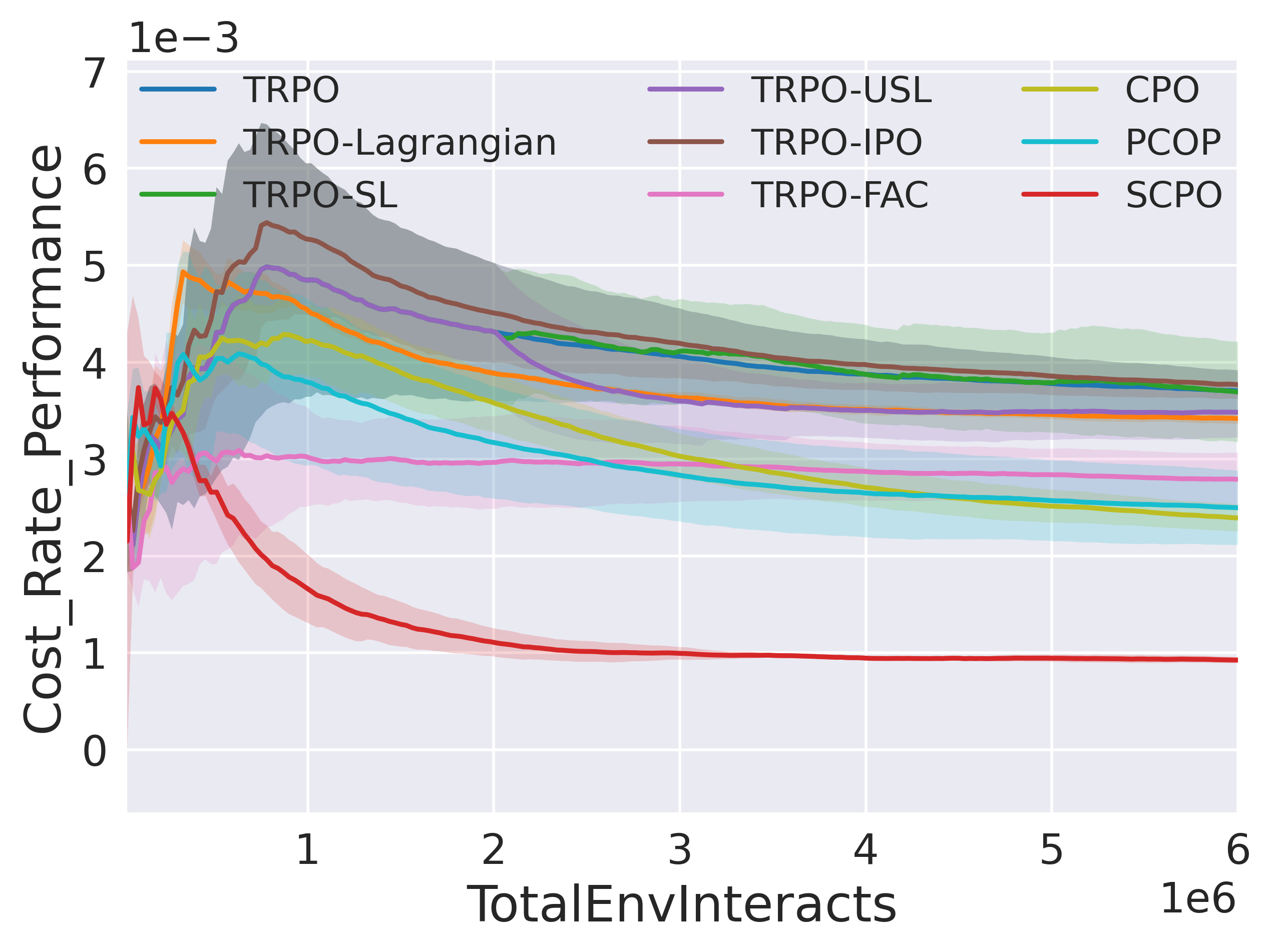}}
    \label{fig:point-hazard-4-CostRate}
    \end{subfigure}
    \caption{Point-Hazard-4}
    \label{fig:point-hazard-4}
    \end{subfigure}
    \begin{subfigure}[t]{0.32\textwidth}
    \begin{subfigure}[t]{1.00\textwidth}
        \raisebox{-\height}{\includegraphics[height=0.7\textwidth]{fig/goal8/Reward_Performance.png}}
        \label{fig:point-hazard-8-Performance}
    \end{subfigure}
    \hfill
    \begin{subfigure}[t]{1.00\textwidth}
        \raisebox{-\height}{\includegraphics[height=0.7\textwidth]{fig/goal8/Cost_Performance.png}}
    \label{fig:point-hazard-8-AverageEpCost}
    \end{subfigure}
    \hfill
    \begin{subfigure}[t]{1.00\textwidth}
        \raisebox{-\height}{\includegraphics[height=0.7\textwidth]{fig/goal8/Cost_Rate_Performance.png}}
    \label{fig:point-hazard-8-CostRate}
    \end{subfigure}
    \caption{Point-Hazard-8}
    \label{fig:point-hazard-8}
    \end{subfigure}
    \caption{Point-Hazard} 
    \label{fig:exp-point-hazard}
\end{figure}

\begin{figure}[p]
    \centering
    \begin{subfigure}[t]{0.32\textwidth}
    \begin{subfigure}[t]{1.00\textwidth}
        \raisebox{-\height}{\includegraphics[height=0.7\textwidth]{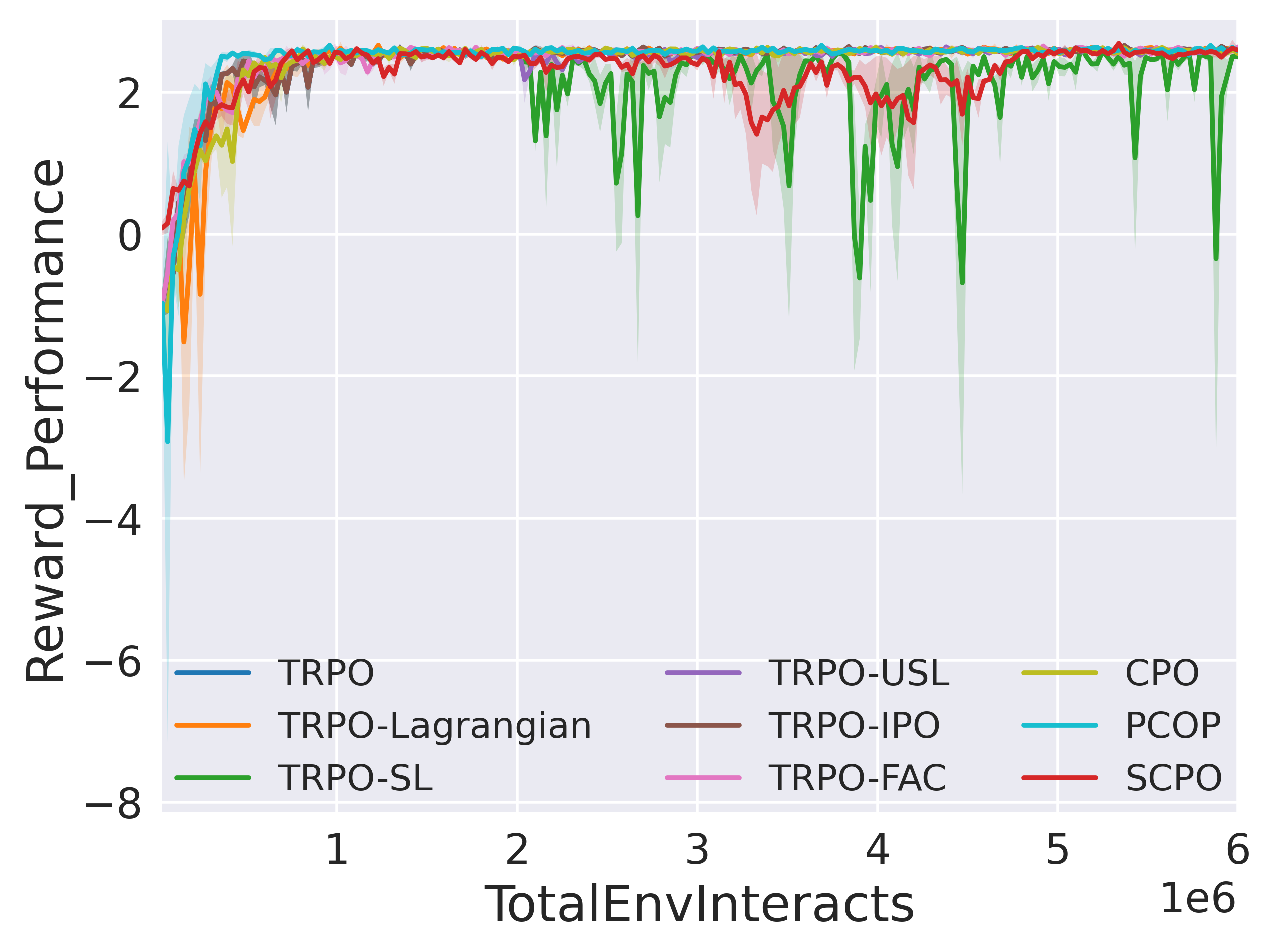}}
        \label{fig:point-pillar-1-Performance}
    \end{subfigure}
    \hfill
    \begin{subfigure}[t]{1.00\textwidth}
        \raisebox{-\height}{\includegraphics[height=0.7\textwidth]{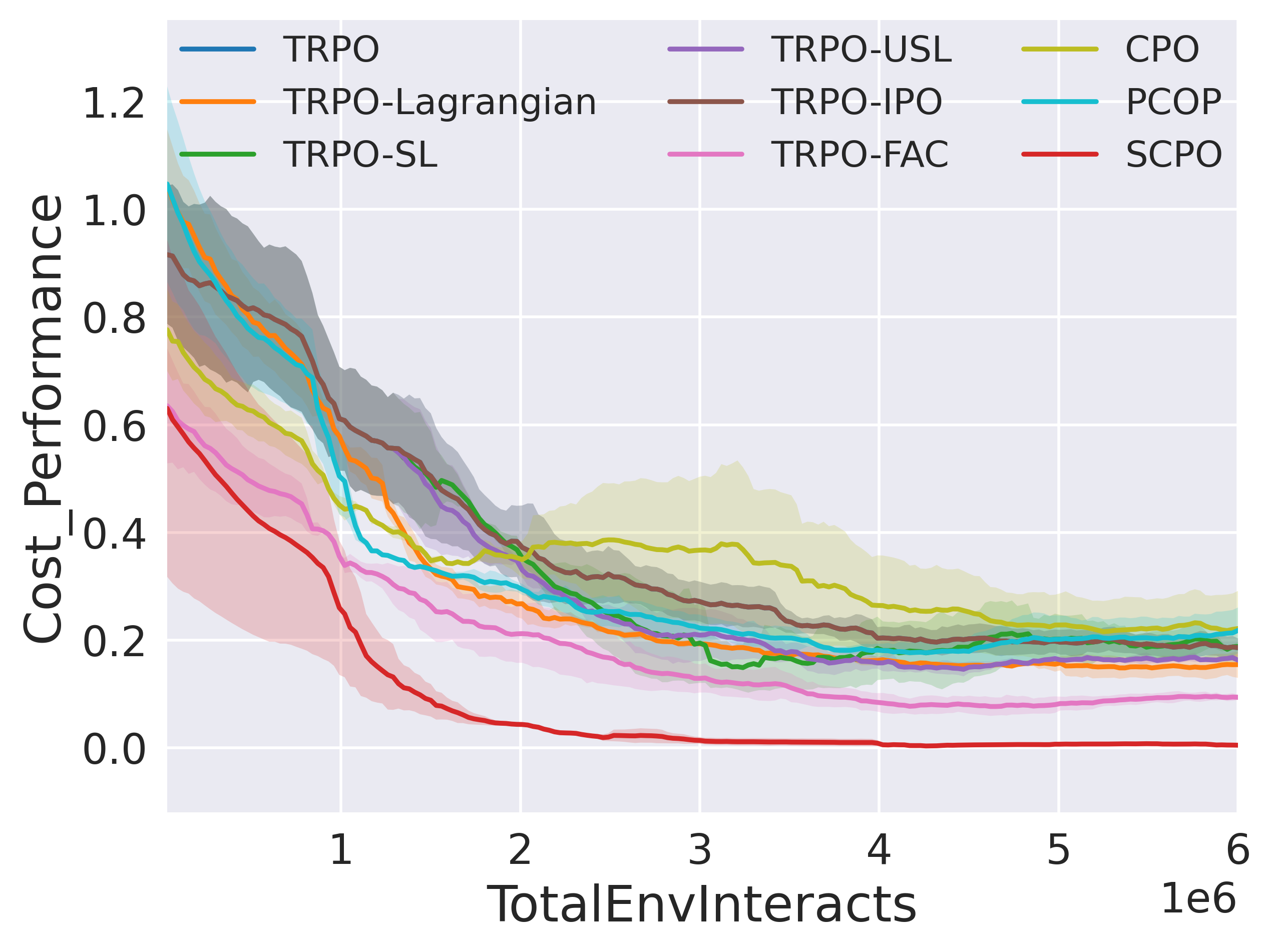}}
    \label{fig:point-pillar-1-AverageEpCost}
    \end{subfigure}
    \hfill
    \begin{subfigure}[t]{1.00\textwidth}
        \raisebox{-\height}{\includegraphics[height=0.7\textwidth]{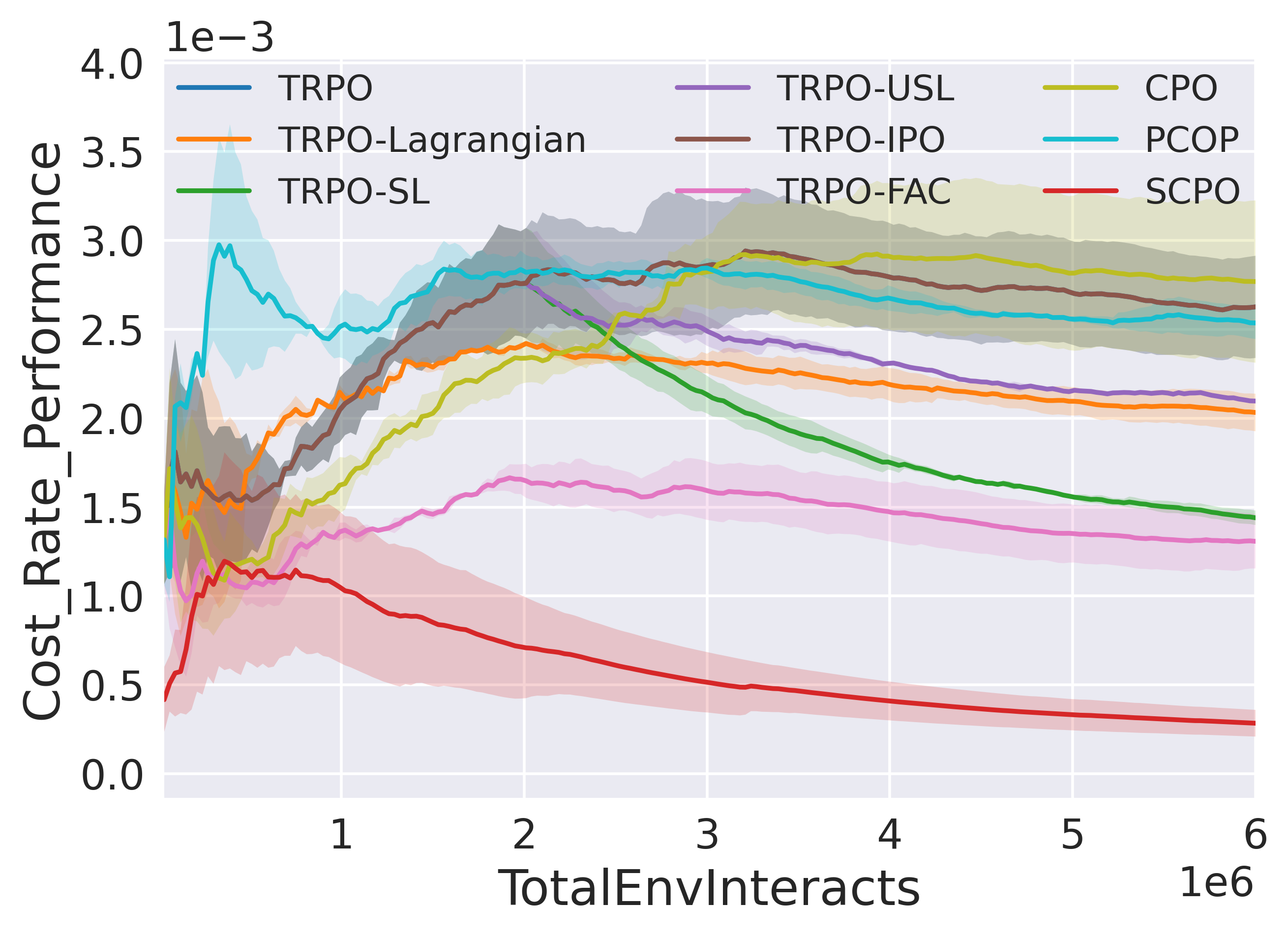}}
    \label{fig:point-pillar-1-CostRate}
    \end{subfigure}
    \caption{Point-Pillar-1}
    \label{fig:point-pillar-1}
    \end{subfigure}
   \begin{subfigure}[t]{0.32\textwidth}
    \begin{subfigure}[t]{1.00\textwidth}
        \raisebox{-\height}{\includegraphics[height=0.7\textwidth]{fig/goal4_pillar/Reward_Performance.png}}
        \label{fig:point-pillar-4-Performance}
    \end{subfigure}
    \hfill
    \begin{subfigure}[t]{1.00\textwidth}
        \raisebox{-\height}{\includegraphics[height=0.7\textwidth]{fig/goal4_pillar/Cost_Performance.png}}
    \label{fig:point-pillar-4-AverageEpCost}
    \end{subfigure}
    \hfill
    \begin{subfigure}[t]{1.00\textwidth}
        \raisebox{-\height}{\includegraphics[height=0.7\textwidth]{fig/goal4_pillar/Cost_Rate_Performance.png}}
    \label{fig:point-pillar-4-CostRate}
    \end{subfigure}
    \caption{Point-Pillar-4}
    \label{fig:point-pillar-4}
    \end{subfigure}
    \begin{subfigure}[t]{0.32\textwidth}
    \begin{subfigure}[t]{1.00\textwidth}
        \raisebox{-\height}{\includegraphics[height=0.7\textwidth]{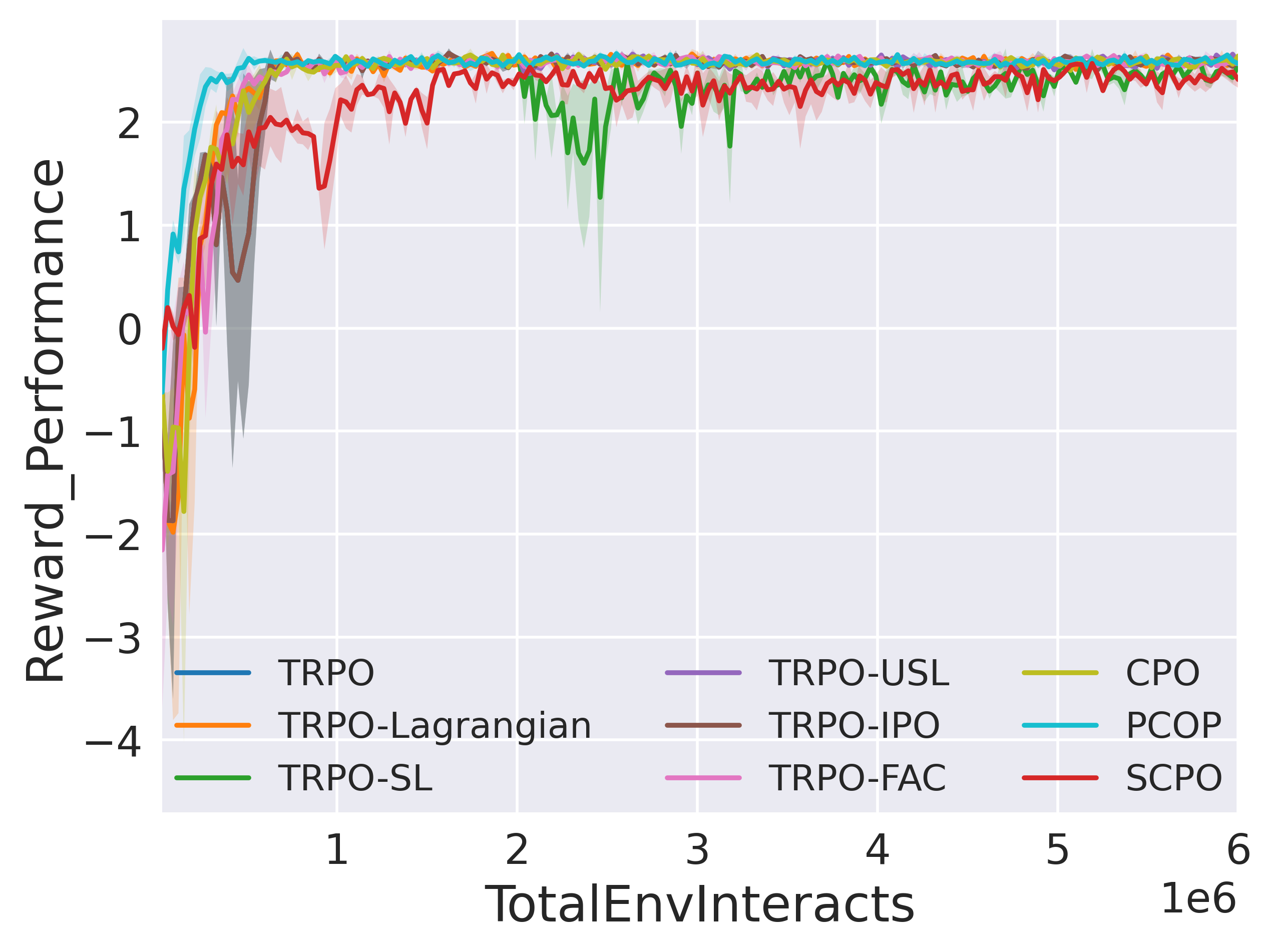}}
        \label{fig:point-pillar-8-Performance}
    \end{subfigure}
    \hfill
    \begin{subfigure}[t]{1.00\textwidth}
        \raisebox{-\height}{\includegraphics[height=0.7\textwidth]{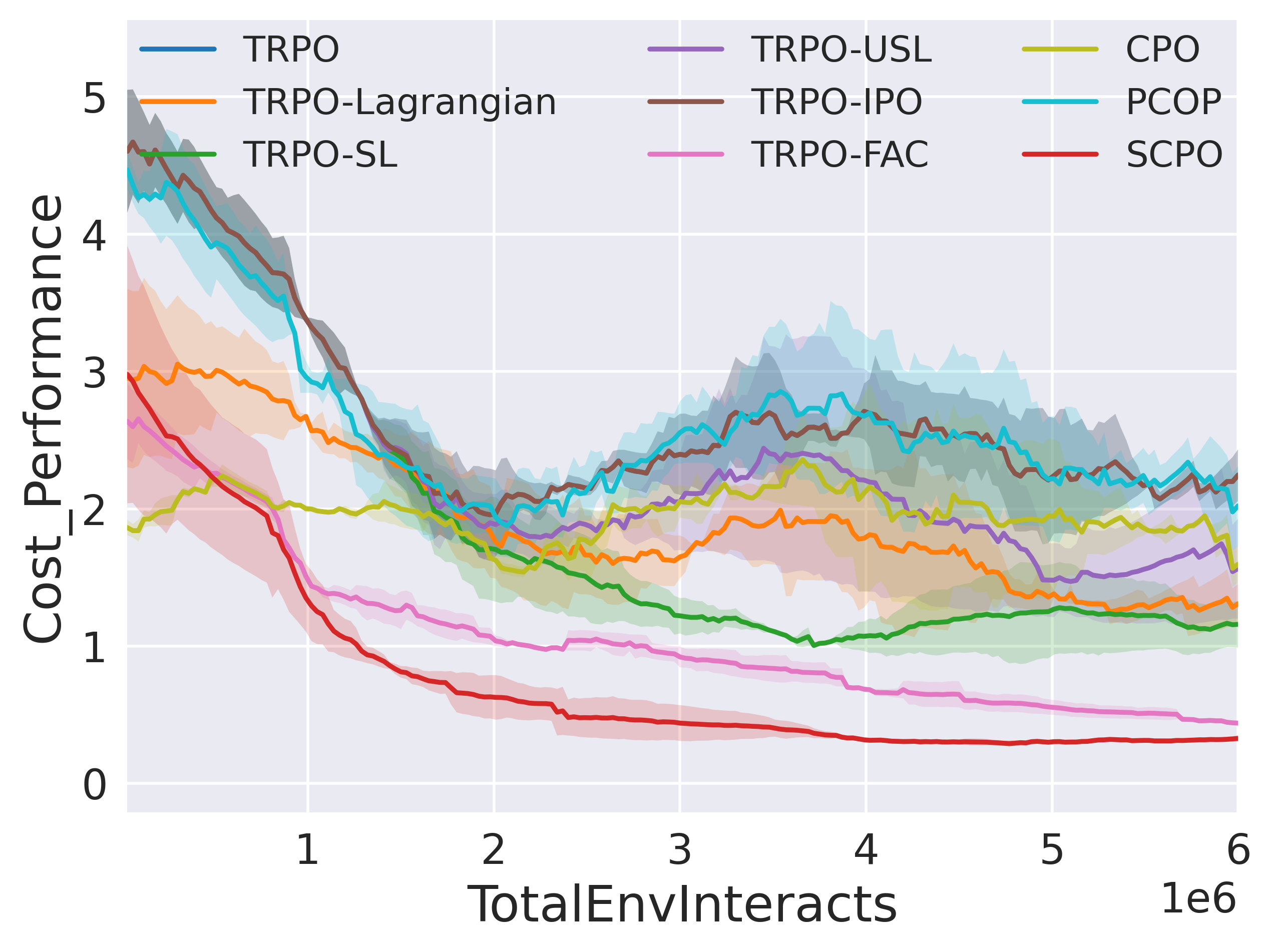}}
    \label{fig:point-pillar-8-AverageEpCost}
    \end{subfigure}
    \hfill
    \begin{subfigure}[t]{1.00\textwidth}
        \raisebox{-\height}{\includegraphics[height=0.7\textwidth]{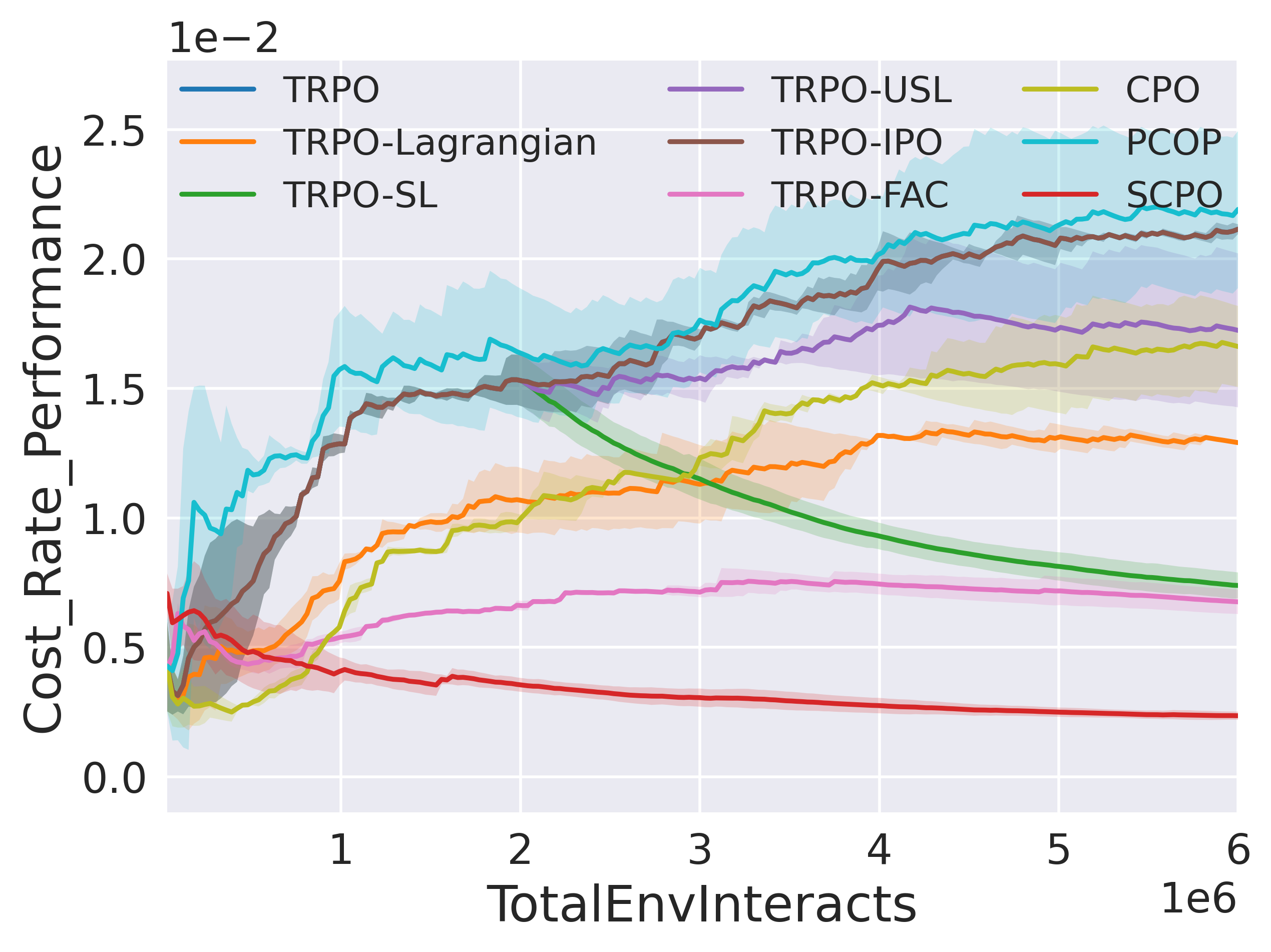}}
    \label{fig:point-pillar-8-CostRate}
    \end{subfigure}
    \caption{Point-Pillar-8}
    \label{fig:point-pillar-8}
    \end{subfigure}
    \caption{Point-Pillar}
    \label{fig:exp-point-pillar}
\end{figure}

\begin{figure}[p]
    \centering
    \begin{subfigure}[t]{0.32\textwidth}
    \begin{subfigure}[t]{1.00\textwidth}
        \raisebox{-\height}{\includegraphics[height=0.7\textwidth]{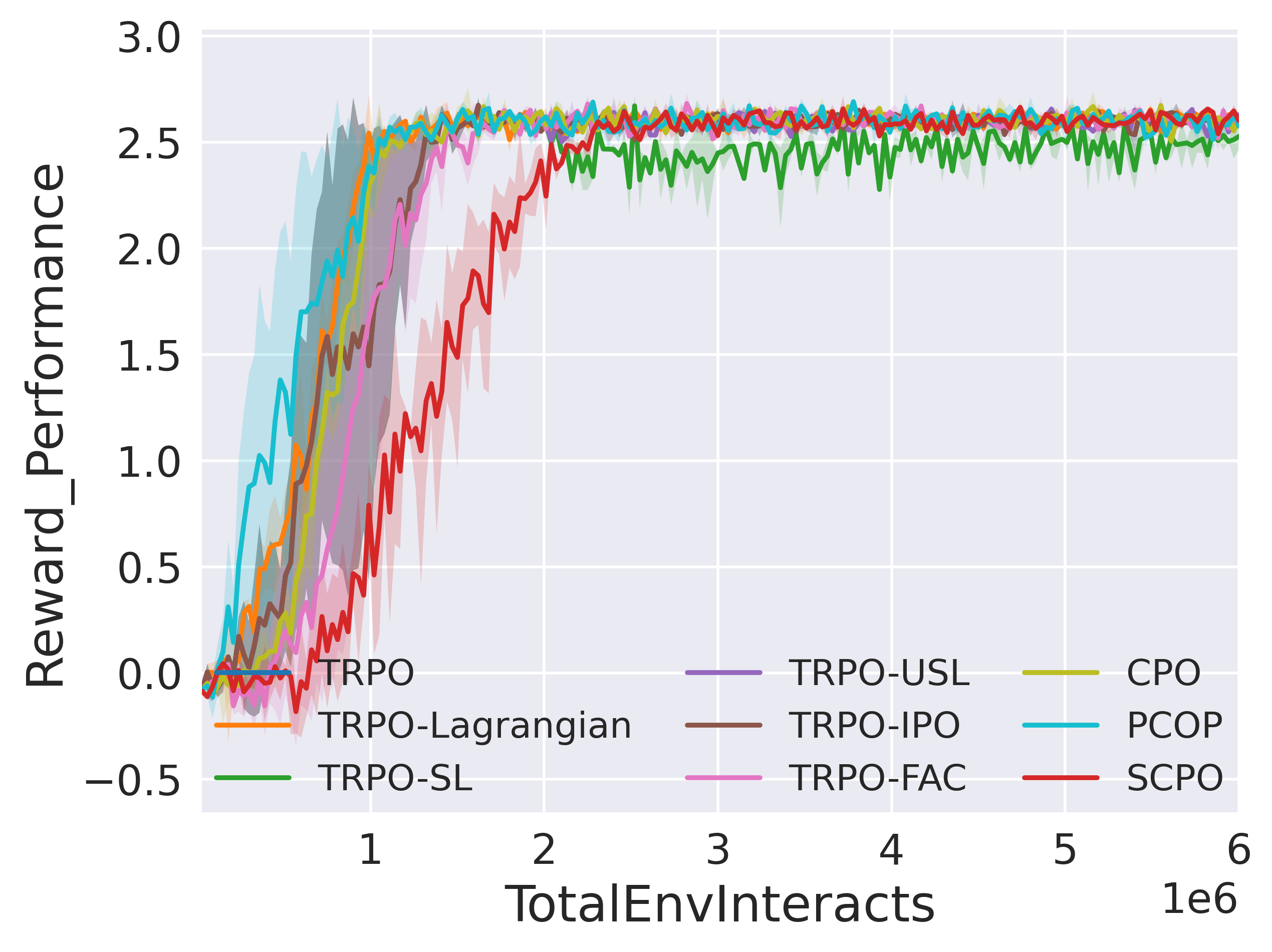}}
        \label{fig:swimmer-hazard-1-Performance}
    \end{subfigure}
    \hfill
    \begin{subfigure}[t]{1.00\textwidth}
        \raisebox{-\height}{\includegraphics[height=0.7\textwidth]{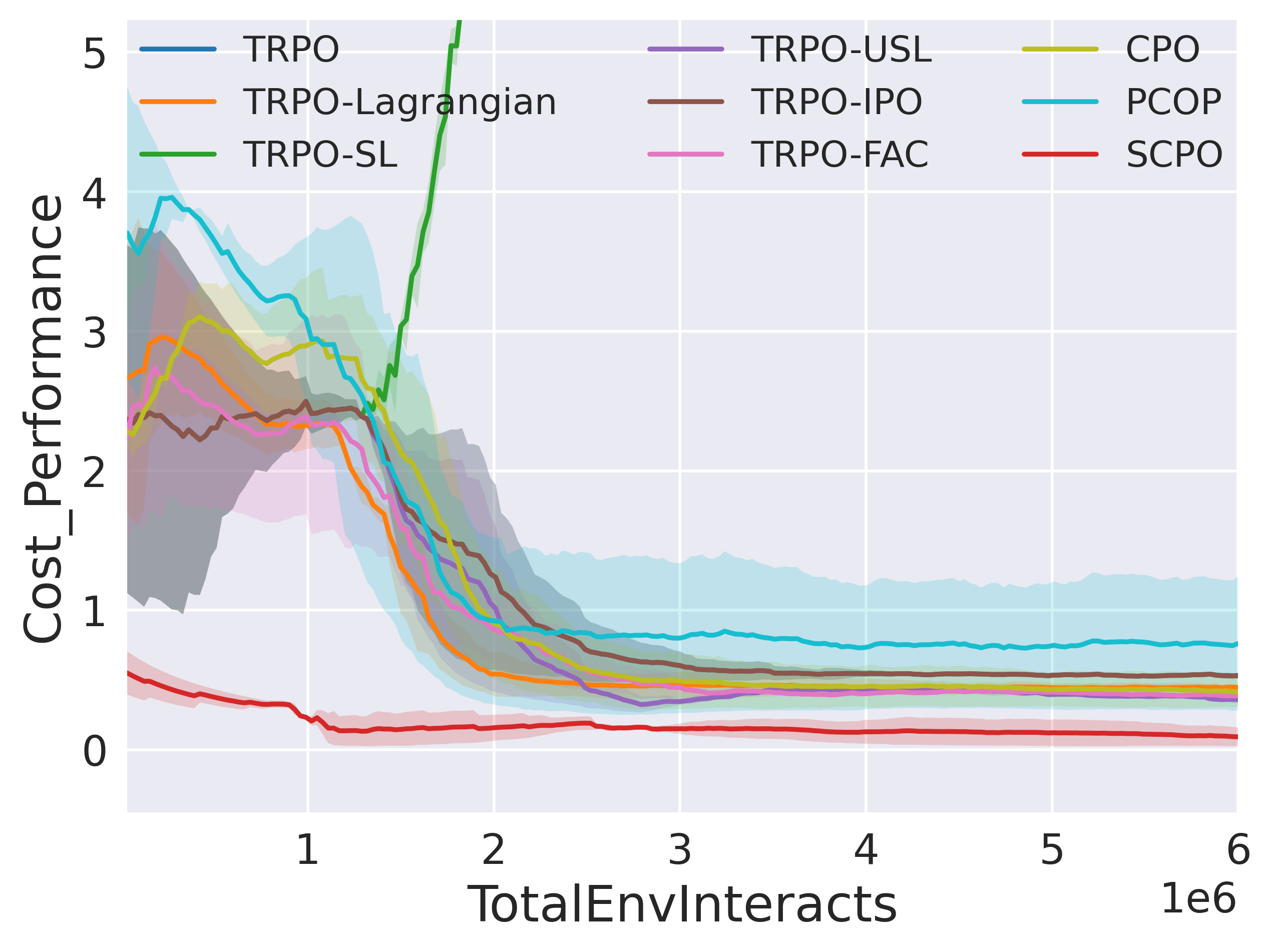}}
    \label{fig:swimmer-hazard-1-AverageEpCost}
    \end{subfigure}
    \hfill
    \begin{subfigure}[t]{1.00\textwidth}
        \raisebox{-\height}{\includegraphics[height=0.7\textwidth]{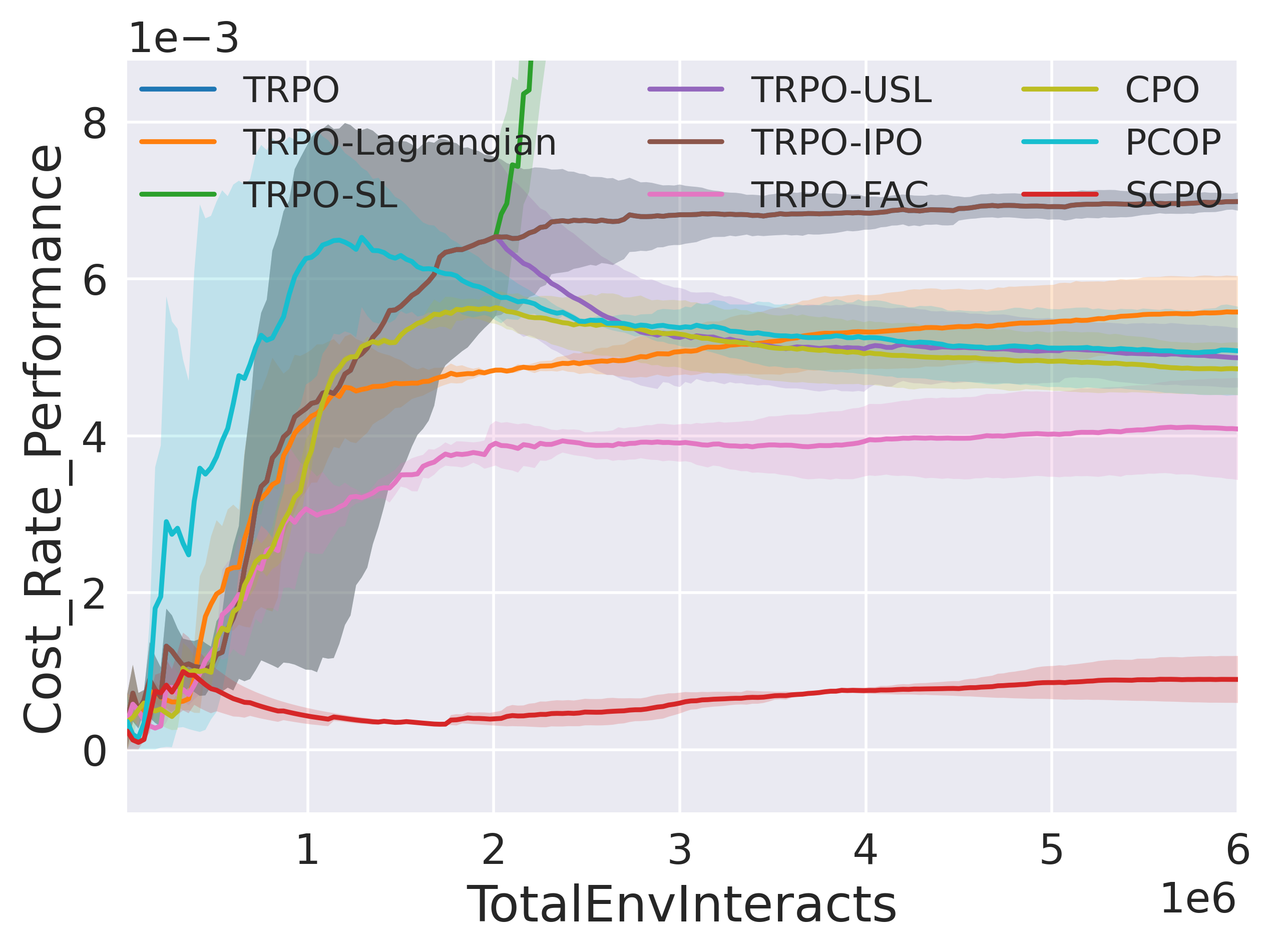}}
    \label{fig:swimmer-hazard-1-CostRate}
    \end{subfigure}
    \caption{Swimmer-Hazard-1}
    \label{fig:swimmer-hazard-1}
    \end{subfigure}
   \begin{subfigure}[t]{0.32\textwidth}
    \begin{subfigure}[t]{1.00\textwidth}
        \raisebox{-\height}{\includegraphics[height=0.7\textwidth]{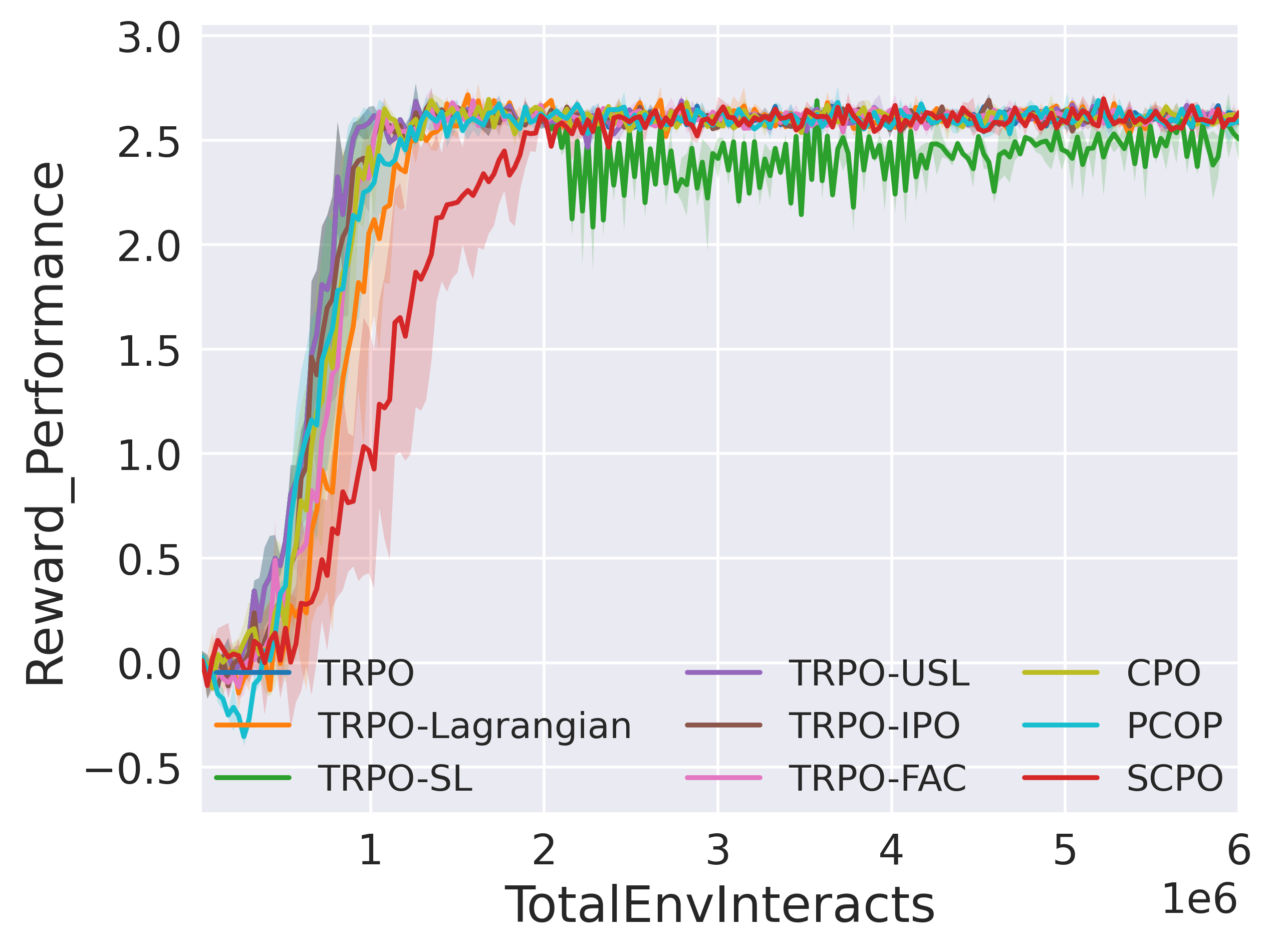}}
        \label{fig:swimmer-hazard-4-Performance}
    \end{subfigure}
    \hfill
    \begin{subfigure}[t]{1.00\textwidth}
        \raisebox{-\height}{\includegraphics[height=0.7\textwidth]{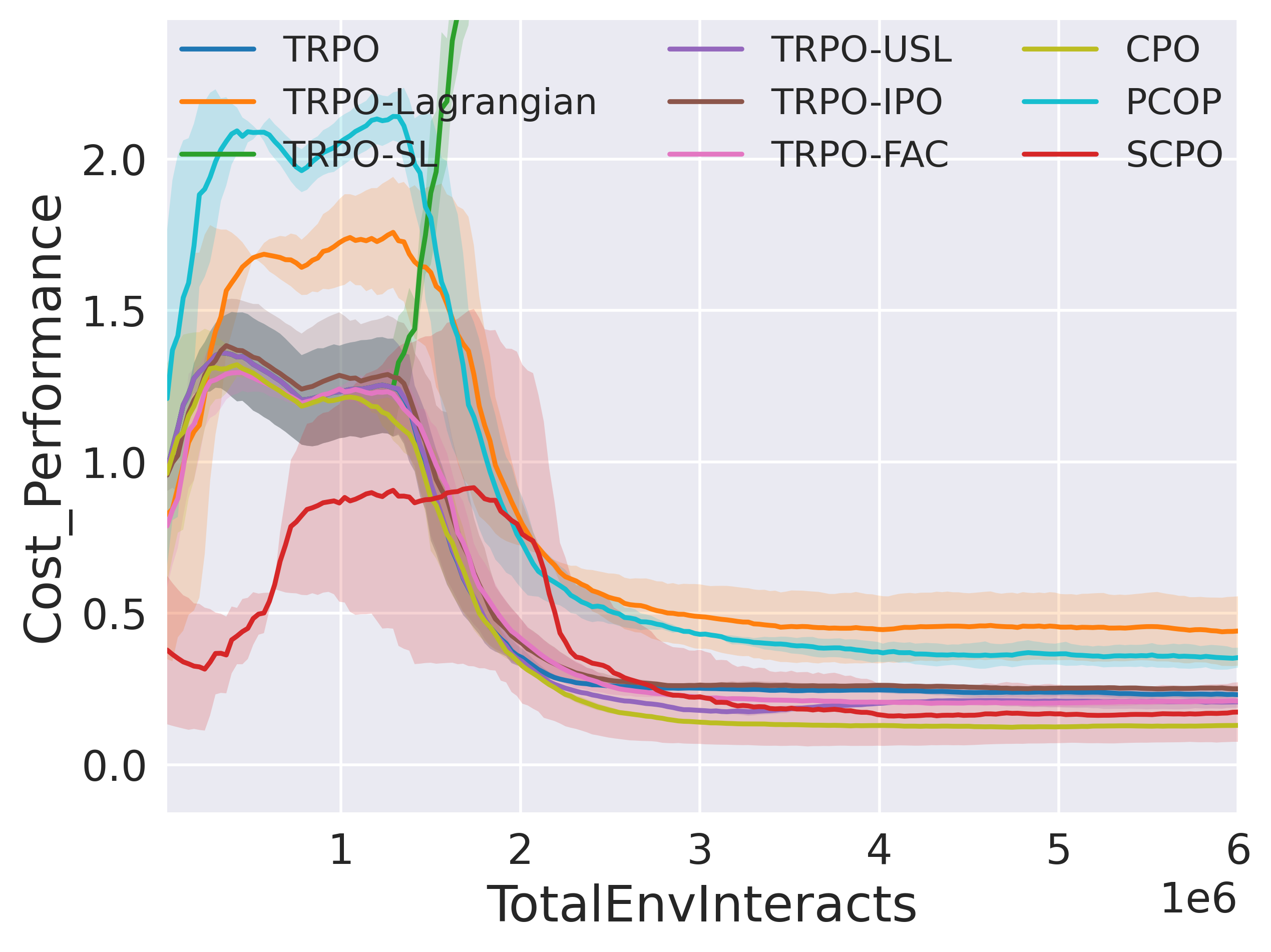}}
    \label{fig:swimmer-hazard-4-AverageEpCost}
    \end{subfigure}
    \hfill
    \begin{subfigure}[t]{1.00\textwidth}
        \raisebox{-\height}{\includegraphics[height=0.7\textwidth]{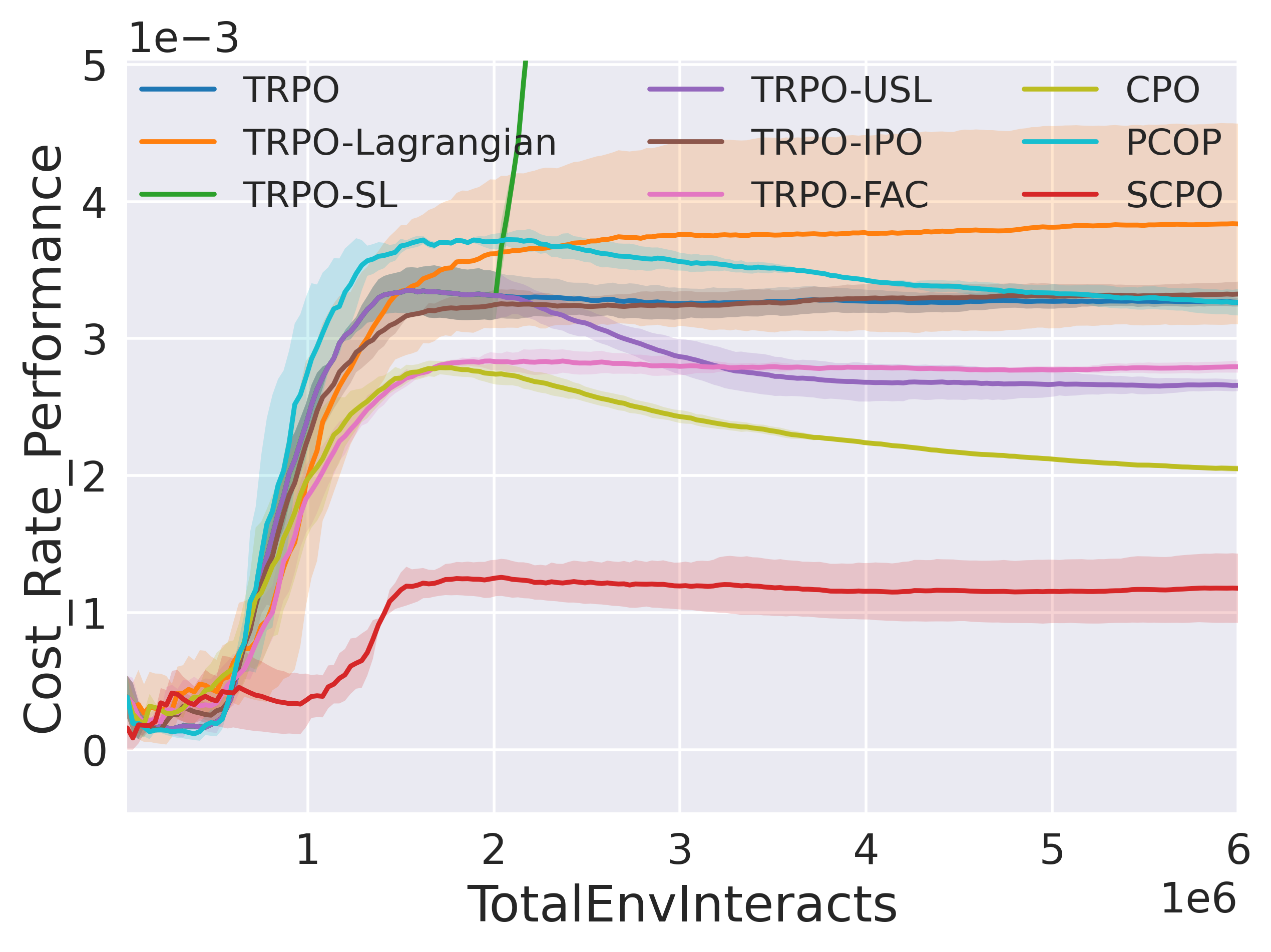}}
    \label{fig:swimmer-hazard-4-CostRate}
    \end{subfigure}
    \caption{Swimmer-Hazard-4}
    \label{fig:swimmer-hazard-4}
    \end{subfigure}
    \begin{subfigure}[t]{0.32\textwidth}
    \begin{subfigure}[t]{1.00\textwidth}
        \raisebox{-\height}{\includegraphics[height=0.7\textwidth]{fig/swimmertiny8/Reward_Performance.png}}
        \label{fig:swimmer-hazard-8-Performance}
    \end{subfigure}
    \hfill
    \begin{subfigure}[t]{1.00\textwidth}
        \raisebox{-\height}{\includegraphics[height=0.7\textwidth]{fig/swimmertiny8/Cost_Performance.png}}
    \label{fig:swimmer-hazard-8-AverageEpCost}
    \end{subfigure}
    \hfill
    \begin{subfigure}[t]{1.00\textwidth}
        \raisebox{-\height}{\includegraphics[height=0.7\textwidth]{fig/swimmertiny8/Cost_Rate_Performance.png}}
    \label{fig:swimmer-hazard-8-CostRate}
    \end{subfigure}
    \caption{Swimmer-Hazard-8}
    \label{fig:swimmer-hazard-8}
    \end{subfigure}
    \caption{Swimmer-Hazard}
    \label{fig:exp-swimmer-hazard}
\end{figure}

\begin{figure}[p]
    \centering
    \begin{subfigure}[t]{0.32\textwidth}
    \begin{subfigure}[t]{1.00\textwidth}
        \raisebox{-\height}{\includegraphics[height=0.7\textwidth]{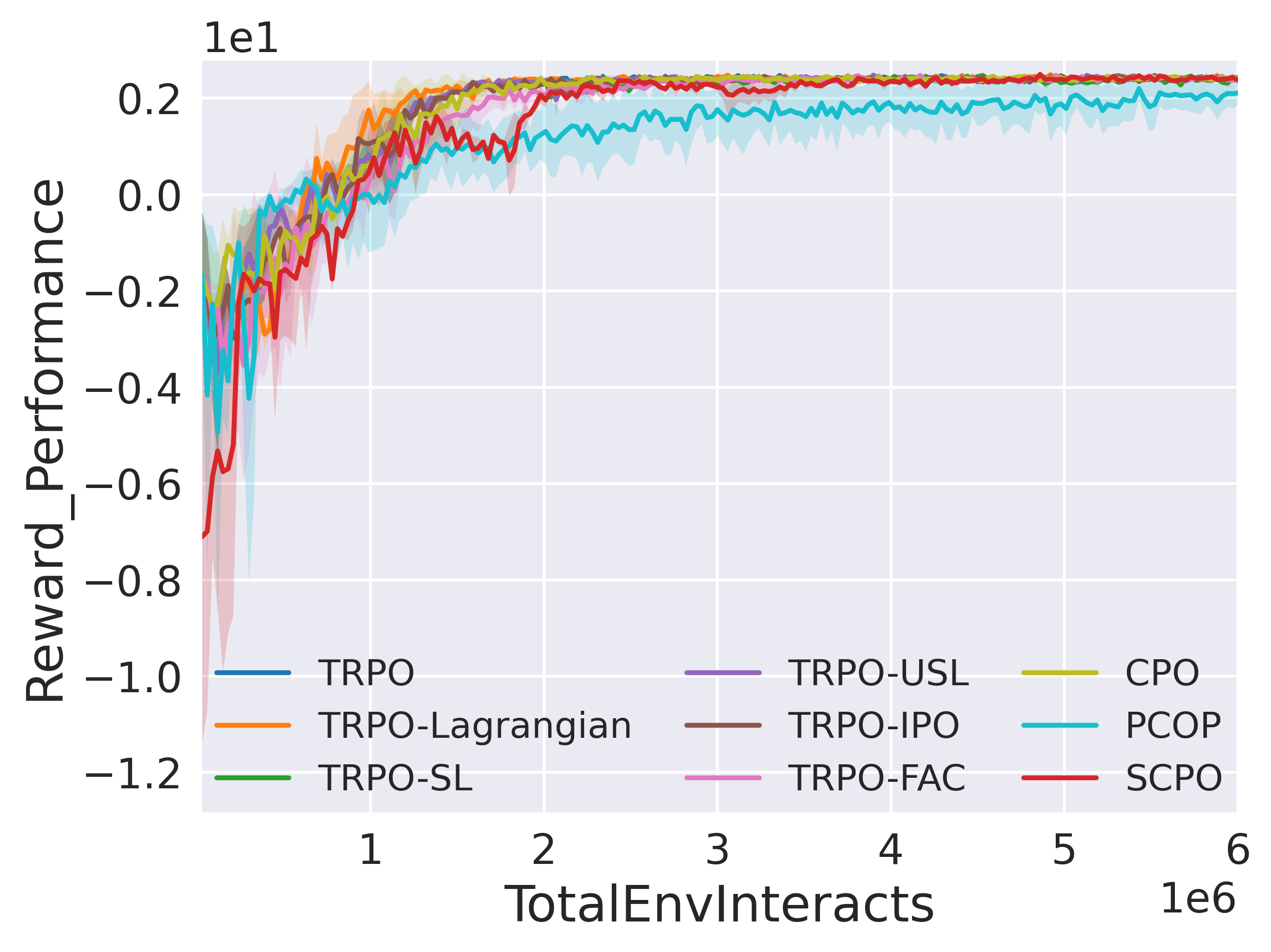}}
        \label{fig:drone-3Dhazard-1-Performance}
    \end{subfigure}
    \hfill
    \begin{subfigure}[t]{1.00\textwidth}
        \raisebox{-\height}{\includegraphics[height=0.7\textwidth]{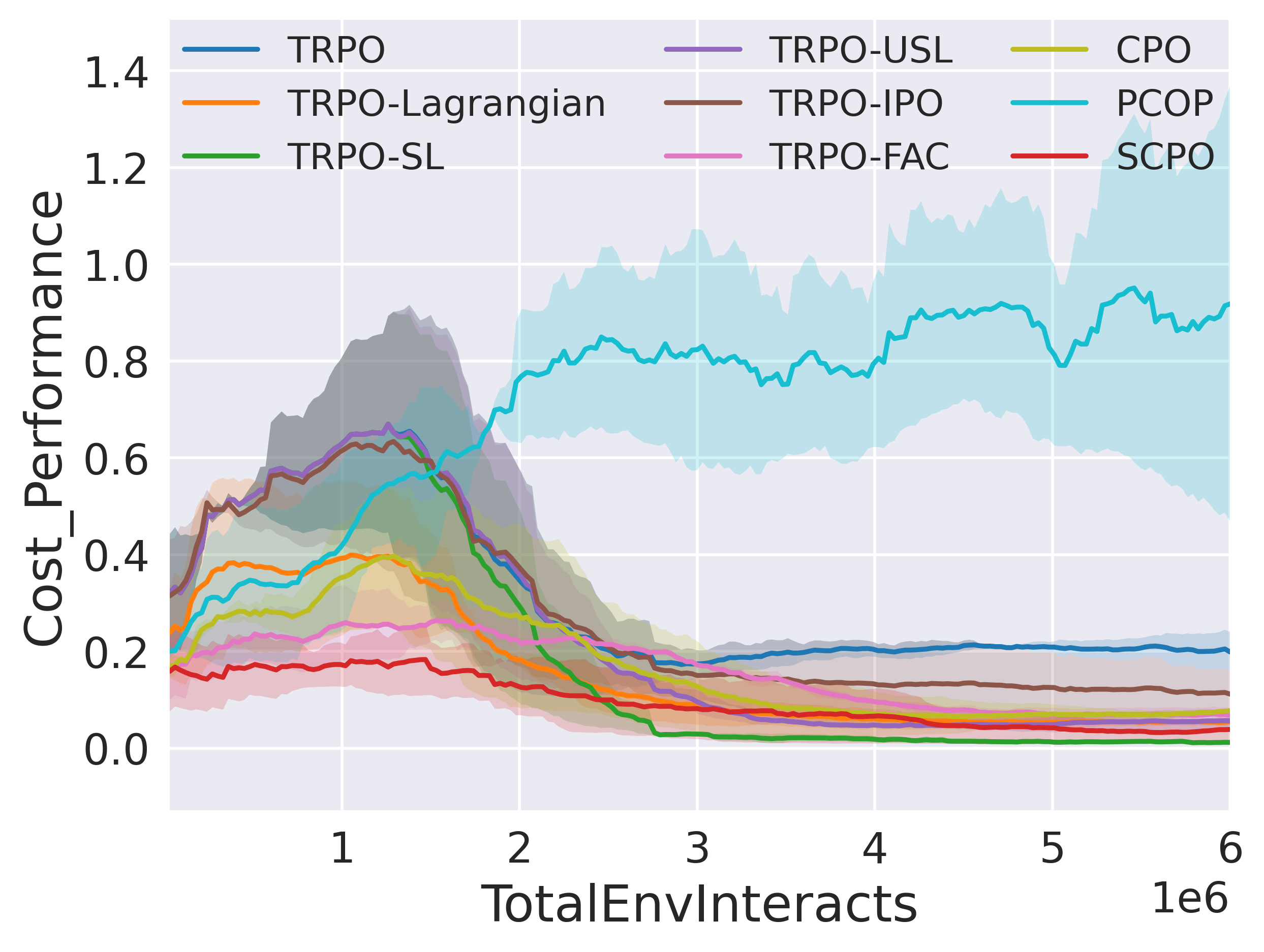}}
    \label{fig:drone-3Dhazard-1-AverageEpCost}
    \end{subfigure}
    \hfill
    \begin{subfigure}[t]{1.00\textwidth}
        \raisebox{-\height}{\includegraphics[height=0.7\textwidth]{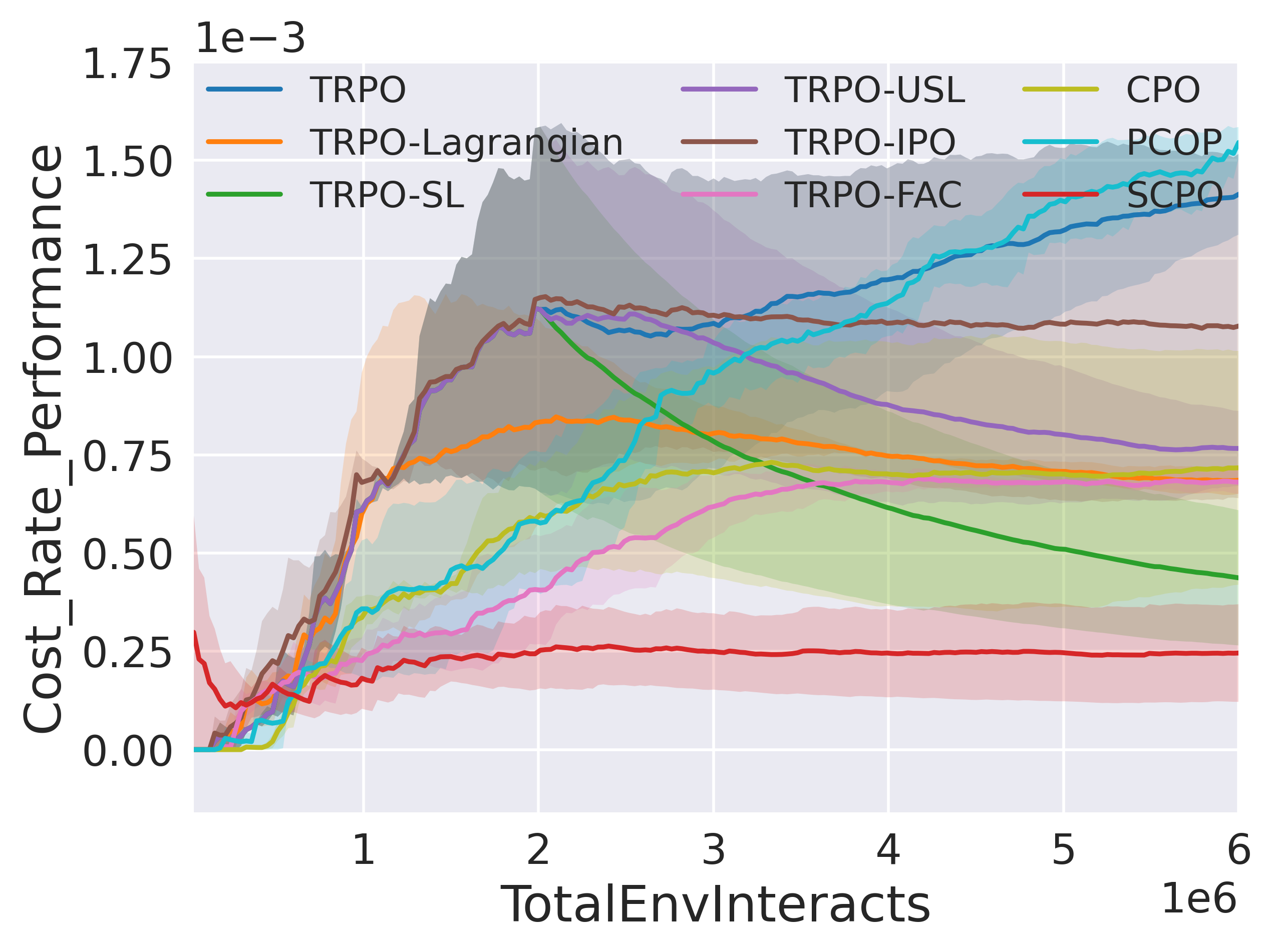}}
    \label{fig:drone-3Dhazard-1-CostRate}
    \end{subfigure}
    \caption{Drone-3DHazard-1}
    \label{fig:drone-3Dhazard-1}
    \end{subfigure}
   \begin{subfigure}[t]{0.32\textwidth}
    \begin{subfigure}[t]{1.00\textwidth}
        \raisebox{-\height}{\includegraphics[height=0.7\textwidth]{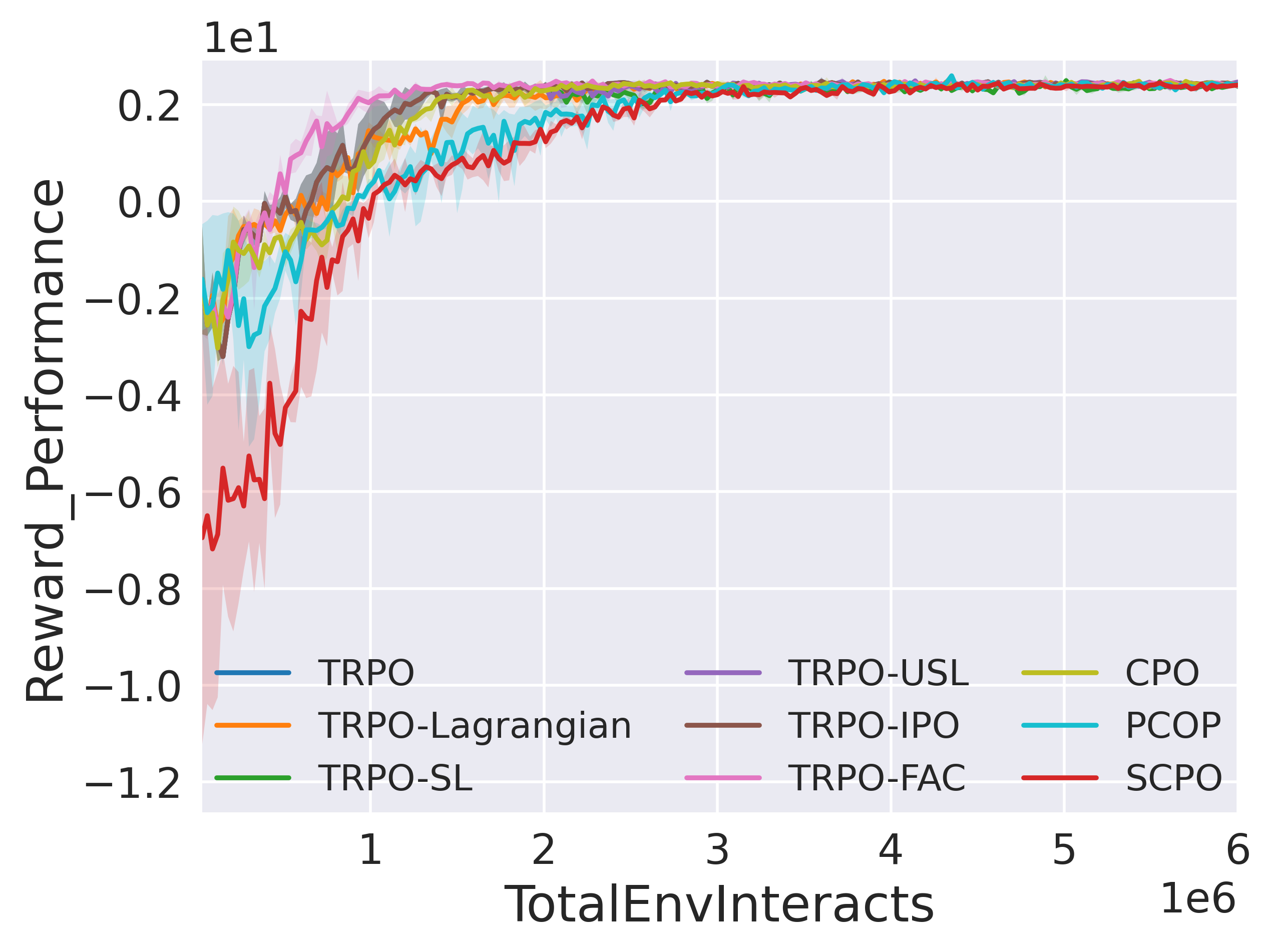}}
        \label{fig:drone-3Dhazard-4-Performance}
    \end{subfigure}
    \hfill
    \begin{subfigure}[t]{1.00\textwidth}
        \raisebox{-\height}{\includegraphics[height=0.7\textwidth]{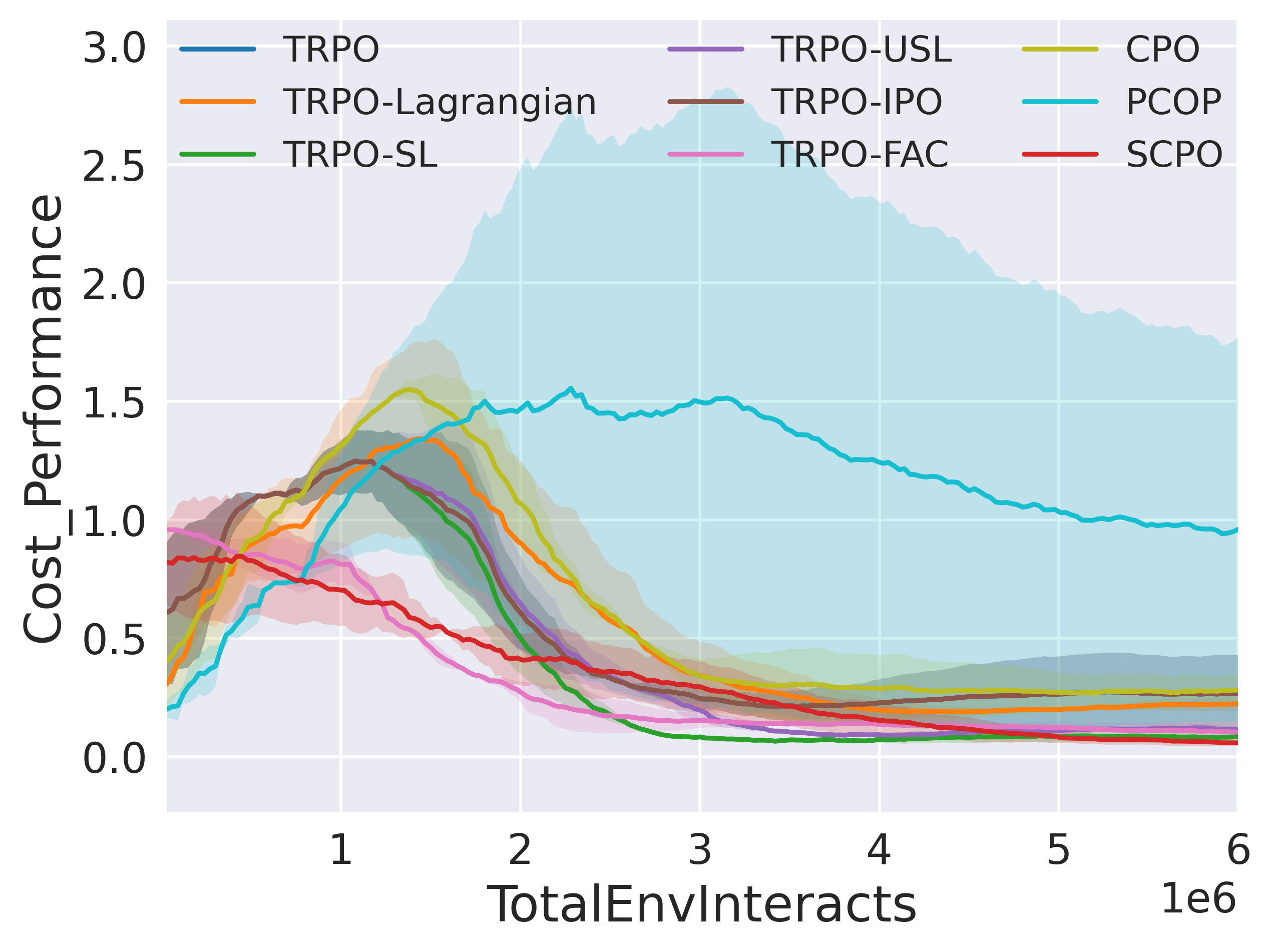}}
    \label{fig:drone-3Dhazard-4-AverageEpCost}
    \end{subfigure}
    \hfill
    \begin{subfigure}[t]{1.00\textwidth}
        \raisebox{-\height}{\includegraphics[height=0.7\textwidth]{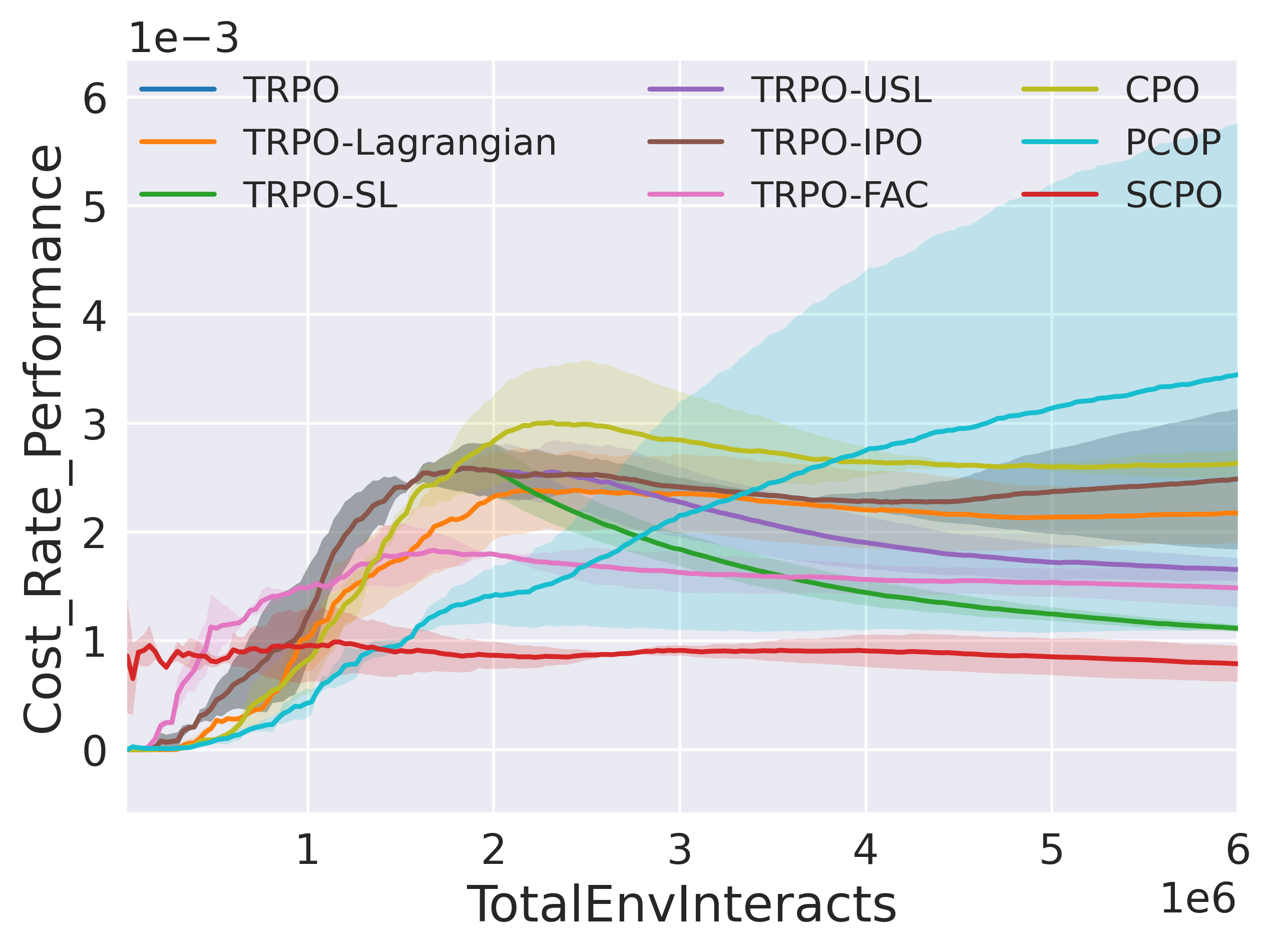}}
    \label{fig:drone-3Dhazard-4-CostRate}
    \end{subfigure}
    \caption{Drone-3DHazard-4}
    \label{fig:drone-3Dhazard-4}
    \end{subfigure}
    \begin{subfigure}[t]{0.32\textwidth}
    \begin{subfigure}[t]{1.00\textwidth}
        \raisebox{-\height}{\includegraphics[height=0.7\textwidth]{fig/drone8/Reward_Performance.png}}
        \label{fig:drone-3Dhazard-8-Performance}
    \end{subfigure}
    \hfill
    \begin{subfigure}[t]{1.00\textwidth}
        \raisebox{-\height}{\includegraphics[height=0.7\textwidth]{fig/drone8/Cost_Performance.png}}
    \label{fig:drone-3Dhazard-8-AverageEpCost}
    \end{subfigure}
    \hfill
    \begin{subfigure}[t]{1.00\textwidth}
        \raisebox{-\height}{\includegraphics[height=0.7\textwidth]{fig/drone8/Cost_Rate_Performance.png}}
    \label{fig:drone-3Dhazard-8-CostRate}
    \end{subfigure}
    \caption{Drone-3DHazard-8}
    \label{fig:drone-3Dhazard-8}
    \end{subfigure}
    \caption{Drone-3DHazard}
    \label{fig:exp-drone-3Dhazard}
\end{figure}

\begin{figure}[p]
    \centering
    \begin{subfigure}[t]{0.32\textwidth}
    \begin{subfigure}[t]{1\textwidth}
        \raisebox{-\height}{\includegraphics[height=0.7\textwidth]{fig/anttiny8/Reward_Performance.png}}
        \label{fig:ant-hazard-8-Performance}
    \end{subfigure}
    \hfill
    \begin{subfigure}[t]{1\textwidth}
        \raisebox{-\height}{\includegraphics[height=0.7\textwidth]{fig/anttiny8/Cost_Performance.png}}
    \label{fig:ant-hazard-8-AverageEpCost}
    \end{subfigure}
    \hfill
    \begin{subfigure}[t]{1\textwidth}
        \raisebox{-\height}{\includegraphics[height=0.7\textwidth]{fig/anttiny8/Cost_Rate_Performance.png}}
    \label{fig:ant-hazard-8-CostRate}
    \end{subfigure}
    \label{fig:ant-hazard-8}
    \caption{Ant-Hazard-8}
    \end{subfigure}
    \begin{subfigure}[t]{0.32\textwidth}
    \begin{subfigure}[t]{1\textwidth}
        \raisebox{-\height}{\includegraphics[height=0.7\textwidth]{fig/walker8/Reward_Performance.png}}
        \label{fig:walker-hazard-8-Performance}
    \end{subfigure}
    \hfill
    \begin{subfigure}[t]{1\textwidth}
        \raisebox{-\height}{\includegraphics[height=0.7\textwidth]{fig/walker8/Cost_Performance.png}}
    \label{fig:walker-hazard-8-AverageEpCost}
    \end{subfigure}
    \hfill
    \begin{subfigure}[t]{1\textwidth}
        \raisebox{-\height}{\includegraphics[height=0.7\textwidth]{fig/walker8/Cost_Rate_Performance.png}}
    \label{fig:walker-hazard-8-CostRate}
    \end{subfigure}
    \label{fig:walker-hazard-8}
    \caption{Walker-Hazard-8}
    \end{subfigure}
    \caption{High dimensional hazard tasks}
    \label{fig:exp-ant-walker-hazard}
\end{figure}

\subsection{Ablation study on large penalty for infractions}

\begin{wrapfigure}{r}{0.6\textwidth}
    \centering
    \begin{subfigure}[b]{0.19\textwidth}
        \begin{subfigure}[t]{1.00\textwidth}
        \raisebox{-\height}{\includegraphics[width=\textwidth]{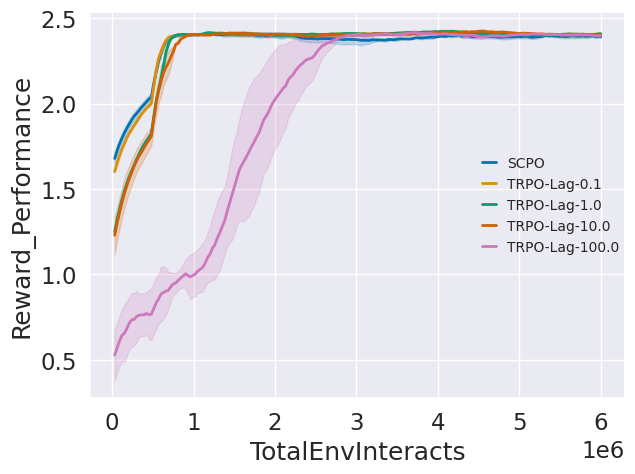}}
        \end{subfigure}
    \end{subfigure}
    \hfill
    \begin{subfigure}[b]{0.19\textwidth}
        \begin{subfigure}[t]{1.00\textwidth}
        \raisebox{-\height}{\includegraphics[width=\textwidth]{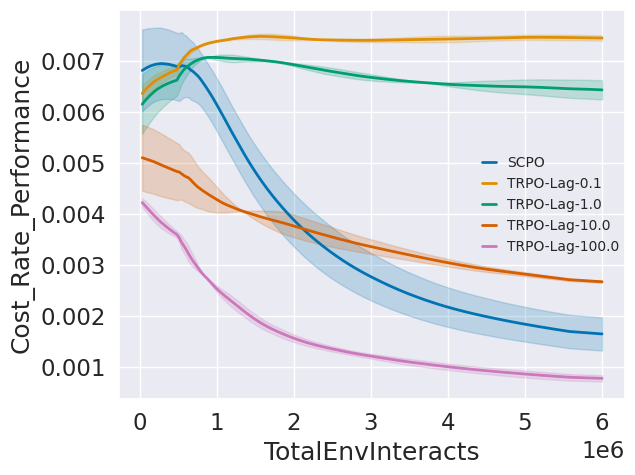}}
        \end{subfigure}
    \end{subfigure} 
    \hfill
    \begin{subfigure}[b]{0.19\textwidth}
        \begin{subfigure}[t]{1.00\textwidth}
        \raisebox{-\height}{\includegraphics[width=\textwidth]{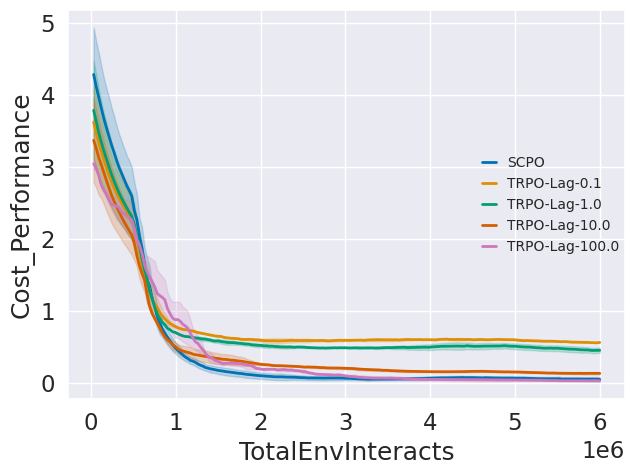}}
        \end{subfigure}
    \end{subfigure}
    \caption{TRPO-Lagrangian method ablation study with Point-Hazard-8} 
    \label{fig: lag ablation}
    \vspace{-10pt}
\end{wrapfigure}

    We used adaptive penalty coefficient in our experiments with the Lagrangian method. Thus, we scale it up by a certain amount $\lambda$ to perform an investigation of the balance between reward and satisfying constraints. We name the experiment TRPO-LAG-\{$\lambda$\} and compare it with SCPO in \Cref{fig: lag ablation}. We can see that the cost rate and cost value of the Lagrangian method decreases significantly when lambda increases. But at the same time, the speed of convergence of the reward is greatly reduced. On the contrary, SCPO achieves the fastest convergence speed and the best convergence value in terms of both reward convergence and cost value decrease, and at the same time, it is not inferior in terms of cost rate. This shows that the simple coefficient adjustment of the Lagrangian method is not comparable to the superiority of our algorithm.

\section{Broader Impact}


Our SCPO algorithm has been theoretically proven to effectively enforce state-wise instantaneous constraints, including safety-critical ones such as collision avoidance. However, achieving zero constraint violation in practical applications requires careful fine-tuning of the implementation and training process. Factors such as neural network structure, learning rate, and cost limits need to be properly adjusted to the specific task at hand. It is important to note that improper implementation and training of SCPO can still result in constraint violations, posing potential safety risks. Therefore, when deploying SCPO policies in safety-critical applications, it is strongly recommended to incorporate an explicit safety monitor, such as control saturation, to completely eliminate any potential safety issues.

\end{document}